\setlist[enumerate]{
    label=\textit{\roman*}\hspace{0.05em}),
    nolistsep,
}
\crefname{section}{Sec.}{Secs.}
\crefname{subsection}{Sec.}{Secs.}
\crefname{subsubsection}{Sec.}{Secs.}
\crefname{figure}{Fig.}{Figs.}
\crefname{prop}{Prop.}{Props.}
\crefname{appendix}{Appx.}{Appxs.}
\crefname{algorithm}{Alg.}{Algs.}
\crefname{theorem}{Thm.}{Thms.}
\crefname{equation}{Eq.}{Eqs.}
\crefname{definition}{Defn.}{Defns.}
\crefname{def}{Defn.}{Defns.}
\crefname{cor}{Cor.}{Cors.}
\crefname{corollary}{Cor.}{Cors.}
\crefname{remark}{Rem.}{Rem.}
\crefname{lem}{Lem.}{Lems.}
\crefname{table}{Tab.}{Tabs.}
\crefname{assum}{Assum.}{Assums.}
\crefname{example}{Ex.}{Exs.}
\crefname{reptheorem}{Thm.}{Thms.}
\crefname{repdefinition}{Defn.}{Defns.}
\renewcommand*{\backref}[1]{}
\renewcommand*{\backrefalt}[4]{%
    \ifcase #1%
          \or Cited on page~#2.%
          \else Cited on pages~#2.%
    \fi%
    }
\let\classAND\AND
\let\AND\relax
\let\AND\classAND
\let\ORGhypersetup\hypersetup
\protected\def\hypersetup{\ORGhypersetup}
  \def\hypersetup#1{}%
  \let\Cref\crtCref
  \let\cref\crtcref
  \def\gls#1{<#1>}%
  \def\glspl#1{<#1>}%
  \def\acrshort#1{<#1>}%
  \def\acrlong#1{<#1>}%
  \def\acrfull#1{<#1>}%
\theoremstyle{plain}
\newtheorem{theorem}{Theorem}[section]
\newtheorem{lemma}{Lemma}[section]
\newtheorem{corollary}{Corollary}[section]
\theoremstyle{definition}
\newtheorem{definition}{Definition}[section]
\theoremstyle{remark}
\newtheorem{remark}{Remark}[section]
\theoremstyle{plain}
\crefname{section}{\S}{\S\S}
\crefname{subsection}{\S}{\S\S}
\crefname{subsubsection}{\S}{\S\S}
\crefname{figure}{Fig.}{Figs.}
\crefname{prop}{Prop.}{Props.}
\crefname{appendix}{Appx.}{Appxs.}
\crefname{algorithm}{Alg.}{Algs.}
\crefname{theorem}{Thm.}{Thms.}
\crefname{definition}{Defn.}{Defns.}
\crefname{cor}{Corollary}{Corollaries}
\crefname{lem}{Lem.}{Lems.}
\crefname{table}{Tab.}{Tabs.}
\crefname{assum}{Assum.}{Assums.}
\crefname{example}{Ex.}{Exs.}
\theoremstyle{plain}
\newtheorem{reptheoreminner}{Theorem}
\newenvironment{reptheorem}[1]{%
  \begin{reptheoreminner}
}{\end{reptheoreminner}}
\newtheorem{repcorollaryinner}{Corollary}
\newenvironment{repcorollary}[1]{%
  \begin{repcorollaryinner}
}{\end{repcorollaryinner}}
\theoremstyle{definition}
\newtheorem{repdefinitioninner}{Definition}
\newcommand{\clmarginal}{\ensuremath{\marginal{\gls{latent}}}\xspace}
\newcommand{\clconditional}{\ensuremath{\conditional{\gls{latentpos}}{\gls{latent}}}\xspace}
\newcommand{\expinner}[3]{\ensuremath{e^{\transpose{#1}\!\parenthesis{#2} {#1}\parenthesis{#3}/\gls{temp}}}}
\newcommand{\expinnerpos}{\expinner{\gls{enc}}{\gls{obs}}{\gls{obspos}}}
\newcommand{\expinnerneg}{\expinner{\gls{enc}}{\gls{obs}}{\obsnegi[i]}}
\newcommand{\ourloss}{\gls{aninfonce}\xspace}
\newcommand*{\defeq}{\mathrel{\rlap{%
			\raisebox{0.3ex}{$\m@th\cdot$}}%
		\raisebox{-0.3ex}{$\m@th\cdot$}}%
	=}
\newcommand*{\eqdef}{=\mathrel{\rlap{%
			\raisebox{0.3ex}{$\m@th\cdot$}}%
		\raisebox{-0.3ex}{$\m@th\cdot$}}}
\newcommand{\ie}{i.e.\@\xspace}
\newcommand{\eg}{e.g.\@\xspace}
\newcommand{\cf}{cf.\@\xspace} 
\newcommand{\wrt}{w.r.t.\@\xspace} 
\newcommand{\inv}[1]{\ensuremath{#1^{-1}}}
\newcommand{\transpose}[1]{\ensuremath{#1^{\top}}}
\newcommand{\diag}[1]{\ensuremath{\mathrm{diag}\parenthesis{#1}}}
\newcommand{\mat}[1]{\ensuremath{\boldsymbol{\mathrm{#1}}}}
\newcommand{\myvec}[1]{\ensuremath{\boldsymbol{#1}}}
\newcommand{\norm}[1]{\ensuremath{\left\Vert#1\right\Vert}}
\newcommand{\normsquared}[1]{\ensuremath{\norm{#1}^2}}
\newcommand{\ltwo}{\ensuremath{\ell_{2}}\xspace}
\newcommand{\conditional}[2]{\ensuremath{p\parenthesis{#1|#2}}}
\newcommand{\marginal}[1]{\ensuremath{p\parenthesis{#1}}}
\NewDocumentCommand{\expectation}{o}{\ensuremath{\mathbb{E}\IfValueT{#1}{_{#1}}}}
\newcommand{\rr}[1][]{\ensuremath{\mathbb{R}^{#1}}}
\NewDocumentCommand{\dist}{o}{p\IfValueT{#1}{_{#1}}}
\newcommand{\prob}{\mathbb{P}}
\newcommand{\raisemath}[1]{\mathpalette{\raisem@th{#1}}}
\newcommand{\raisem@th}[3]{\raisebox{#1}{$#2#3$}}
\newcommand{\parenthesis}[1]{\ensuremath{\left(#1\right)}}
\newcommand{\brackets}[1]{\ensuremath{\left[#1\right]}}
\newcommand{\braces}[1]{\ensuremath{\left\{#1\right\}}}
\DeclarePairedDelimiter{\scprod}{\langle}{\rangle}
\newglossaryentry{r2}{
    name        = \ensuremath{R^2} ,
    description = {coefficient of determination} ,
    type        = abbrev,
}
\newglossaryentry{im}{
    name        = \ensuremath{\mathrm{Im}} ,
    description = {image space} ,
    type        = abbrev,
}
\newglossaryentry{ker}{
    name        = \ensuremath{\mathrm{Ker}} ,
    description = {kernel space} ,
    type        = abbrev,
}
\newglossaryentry{kronecker}{
    name        = \ensuremath{\otimes} ,
    description = {Kronecker product} ,
    type        = abbrev,
}
\newglossaryentry{loss}{
    name        = \ensuremath{\mathcal{L}} ,
    description = {loss function} ,
    type        = abbrev,
}
\newglossaryentry{hypersphere}{
    name        = \ensuremath{\mathbb{S}} ,
    description = {hypersphere} ,
    type        = abbrev,
}
\newcommand{\Sd}[1][d]{\ensuremath{\gls{hypersphere}^{#1}}}
\newglossaryentry{dec}{
    name        = \ensuremath{\mathrm{\boldsymbol{g}}} ,
    description = {decoder map $\gls{Latent}\to\gls{Obs}$} ,
    type        = abbrev,
}
\newglossaryentry{enc}{
    name        = \ensuremath{\mathrm{\boldsymbol{f}}} ,
    description = {encoder map $\gls{Obs}\to\gls{Latent}$} ,
    type        = abbrev,
}
\newglossaryentry{observations}{type=abbrev,name=Observations,description={\nopostdesc}}
\newglossaryentry{obs}{
    name        = \ensuremath{\boldsymbol{x}} ,
    description = {observation vector} ,
    type        = abbrev,
    parent      = observations,
}
\newglossaryentry{obscomp}{
    name        = \ensuremath{x} ,
    description = {observation single component} ,
    type        = abbrev,
    parent      = observations,
}
\newglossaryentry{Obs}{
    name        = \ensuremath{\mathcal{X}} ,
    description = {observation space} ,
    type        = abbrev,
    parent      = observations,
}
\newglossaryentry{obsdim}{
    name        = \ensuremath{D} ,
    description = {dimensionality of the observation space \gls{Obs}} ,
    type        = abbrev,
    parent      = observations,
}
\newglossaryentry{numneg}{
    name        = \ensuremath{M} ,
    description = {number of negative samples} ,
    type        = abbrev,
    parent      = observations,
}
\newglossaryentry{obspos}{
    name        = \ensuremath{{\boldsymbol{x}}^{\!+}} ,
    description = {positive observation vector} ,
    type        = abbrev,
    parent      = observations,
}
\newglossaryentry{obsneg}{
    name        = \ensuremath{\boldsymbol{x}^{\!-}} ,
    description = {negative observation vector} ,
    type        = abbrev,
    parent      = observations,
}
\newcommand{\obsnegi}[1][i]{\ensuremath{\myvec{x}^{\!-}_{#1}}}
\newcommand{\latentnegi}[1][i]{\ensuremath{\myvec{z}^{\!-}_{#1}}}
\newglossaryentry{labels}{type=abbrev,name=Labels,description={\nopostdesc}}
\newglossaryentry{label}{
    name        = \ensuremath{\boldsymbol{y}} ,
    description = {label vector} ,
    type        = abbrev,
    parent      = labels,
}
\newglossaryentry{labelhat}{
    name        = \ensuremath{\widehat{\boldsymbol{y}}} ,
    description = {estimated label vector} ,
    type        = abbrev,
    parent      = labels,
}
\newglossaryentry{labelcomp}{
    name        = \ensuremath{y} ,
    description = {label component} ,
    type        = abbrev,
    parent      = labels,
}
\newglossaryentry{labelcomphat}{
    name        = \ensuremath{\widehat{y}} ,
    description = {label component} ,
    type        = abbrev,
    parent      = labels,
}
\newglossaryentry{labelset}{
    name        = \ensuremath{\mathcal{Y}} ,
    description = {label set} ,
    type        = abbrev,
    parent      = labels,
}
\newglossaryentry{labeldim}{
    name        = \ensuremath{C} ,
    description = {number of classes in the label set \gls{labelset}} ,
    type        = abbrev,
    parent      = labels,
}
\newglossaryentry{latents}{type=abbrev,name=Latents,description={\nopostdesc}}
\newglossaryentry{latent}{
    name        = \ensuremath{\boldsymbol{z}} ,
    description = {latent vector} ,
    type        = abbrev,
    parent     = latents,
}
\newglossaryentry{latentrec}{
    name        = \ensuremath{\hat{\boldsymbol{z}}} ,
    description = {reconstructed latent vector} ,
    type        = abbrev,
    parent     = latents,
}
\newglossaryentry{latentcomp}{
    name        = \ensuremath{z} ,
    description = {latent single component} ,
    type        = abbrev,
    parent     = latents,
}
\newcommand{\latenti}[1][i]{\ensuremath{\gls{latentcomp}_{#1}}\xspace}
\newglossaryentry{Latent}{
    name        = \ensuremath{\mathcal{Z}} ,
    description = {latents} ,
    type        = abbrev,
    parent     = latents,
}
\newglossaryentry{latentdim}{
    name        = \ensuremath{d} ,
    description = {dimensionality of the latent space \gls{Latent}} ,
    type        = abbrev,
    parent     = latents,
}
\newglossaryentry{latentpos}{
    name        = \ensuremath{{\boldsymbol{z}}^{\!+}} ,
    description = {positive latent vector} ,
    type        = abbrev,
    parent      = latents,
}
\newglossaryentry{latentneg}{
    name        = \ensuremath{{\boldsymbol{z}}^{\!-}} ,
    description = {negative latent vector} ,
    type        = abbrev,
    parent      = latents,
}
\newglossaryentry{latentrecpos}{
    name        = \ensuremath{\hat{\boldsymbol{z}}^{\!+}} ,
    description = {reconstructed latent vector} ,
    type        = abbrev,
    parent     = latents,
}
\newglossaryentry{latentrecneg}{
    name        = \ensuremath{\hat{\boldsymbol{z}}^{\!-}} ,
    description = {reconstructed latent vector} ,
    type        = abbrev,
    parent     = latents,
}
\newglossaryentry{sigmaz}{
    name        = \ensuremath{\boldsymbol{\sigma}_{\gls{latentcomp}}} ,
    description = {std of \gls{latentcomp}} ,
    type        = abbrev,
    parent     = latents,
}
\newglossaryentry{content}{
    name        = \ensuremath{\boldsymbol{z}_{c}} ,
    description = {content latent vector} ,
    type        = abbrev,
    parent     = latents,
}
\newglossaryentry{contentcomp}{
    name        = \ensuremath{z_{c}} ,
    description = {content latent single component} ,
    type        = abbrev,
    parent     = latents,
}
\newglossaryentry{Content}{
    name        = \ensuremath{\mathcal{Z}_{c}} ,
    description = {content} ,
    type        = abbrev,
    parent     = latents,
}
\newglossaryentry{contentdim}{
    name        = \ensuremath{d_{c}} ,
    description = {dimensionality of \gls{content}} ,
    type        = abbrev,
    parent     = latents,
}
\newglossaryentry{sigmac}{
    name        = \ensuremath{\boldsymbol{\sigma}_{c}} ,
    description = {std of \gls{contentcomp}} ,
    type        = abbrev,
    parent     = latents,
}
\newcommand{\varc}{\ensuremath{\boldsymbol{\sigma}_{c}^2}\xspace}
\newglossaryentry{style}{
    name        = \ensuremath{\boldsymbol{z}_{s}} ,
    description = {style latent vector} ,
    type        = abbrev,
    parent     = latents,
}
\newglossaryentry{stylecomp}{
    name        = \ensuremath{z_{s}} ,
    description = {style latent single component} ,
    type        = abbrev,
    parent     = latents,
}
\newglossaryentry{Style}{
    name        = \ensuremath{\mathcal{Z}_{s}} ,
    description = {style} ,
    type        = abbrev,
    parent     = latents,
}
\newglossaryentry{styledim}{
    name        = \ensuremath{d_{s}} ,
    description = {dimensionality of \gls{style}} ,
    type        = abbrev,
    parent     = latents,
}
\newglossaryentry{sigmas}{
    name        = \ensuremath{\boldsymbol{\sigma}_{s}} ,
    description = {std of \gls{stylecomp}} ,
    type        = abbrev,
    parent     = latents,
}
\newcommand{\vars}{\ensuremath{\boldsymbol{\sigma}_{s}^2}\xspace}
\newglossaryentry{algebra}{type=abbrev,name=Algebra,description={\nopostdesc}}
\newglossaryentry{identity}{
    name        = \ensuremath{\boldsymbol{\mathrm{I}}_{\gls{obsdim}}} ,
    description = {\gls{obsdim}-dimensional identity matrix} ,
    type        = abbrev,
    parent      = algebra,
}
\newcommand{\Id}[1]{\ensuremath{\boldsymbol{\mathrm{I}}_{#1}}}
\newglossaryentry{jacobian}{
    name        = \ensuremath{\boldsymbol{\mathrm{J}}} ,
    description = {Jacobian matrix} ,
    type        = abbrev,
    parent      = algebra,
}
\newglossaryentry{hessian}{
    name        = \ensuremath{\boldsymbol{\mathrm{H}}} ,
    description = {Hessian matrix} ,
    type        = abbrev,
    parent      = algebra,
}
\newglossaryentry{d}{
    name        = \ensuremath{\boldsymbol{\mathrm{D}}} ,
    description = {diagonal matrix} ,
    type        = abbrev,
    parent      = algebra,
}
\newglossaryentry{diagtemp}{
    name        = \ensuremath{\boldsymbol{\mathrm{\Lambda}}} ,
    description = {diagonal matrix of weighting factors} ,
    type        = abbrev,
    parent      = algebra,
}
\newglossaryentry{diagtempinfer}{
    name        = \ensuremath{\boldsymbol{\hat{\mathrm{\Lambda}}}} ,
    description = {inferred diagonal matrix of weighting factors},
    type        = abbrev,
    parent      = algebra,
}
\newglossaryentry{o}{
    name        = \ensuremath{\boldsymbol{\mathrm{O}}},
    description = {orthogonal matrix} ,
    type        = abbrev,
    parent      = algebra,
}
\newglossaryentry{ones}{
    name        = \ensuremath{\mathbf{1}} ,
    description = {a column-vector with 1's} ,
    type        = abbrev,
    parent      = algebra,
}
\newglossaryentry{scalar}{
    name        = \ensuremath{\alpha} ,
    description = {scalar field} ,
    type        = abbrev,
    parent      = algebra,
}
\newglossaryentry{perm}{
    name        = \ensuremath{\mathbb{P}} ,
    description = {group of permutation matrices} ,
    type        = abbrev,
    parent      = algebra,
}
\newglossaryentry{permutation}{
    name        = \ensuremath{\mat{P}},
    description = {permutation matrix} ,
    type        = abbrev,
    parent      = algebra,
}
\newglossaryentry{prob}{type=abbrev,name=Probability theory,description={\nopostdesc}}
\newglossaryentry{cov}{
    name        = \ensuremath{\boldsymbol{\mathrm{\Sigma}}},
    description = {covariance matrix} ,
    type        = abbrev,
    parent      = prob,
}
\newglossaryentry{entropy}{
    name        = \ensuremath{\mathrm{H}} ,
    description = {entropy} ,
    type        = abbrev,
    parent      = prob,
}
\newglossaryentry{expfamparam}{
    name        = \ensuremath{\boldsymbol{\theta}} ,
    description = {parameter of exponential family} ,
    type        = abbrev,
    parent      = prob,
}
\newglossaryentry{expfamnatparam}{
    name        = \ensuremath{\boldsymbol{\eta}} ,
    description = {natural parameter of exponential family} ,
    type        = abbrev,
    parent      = prob,
}
\newglossaryentry{expfamsuffstat}{
    name        = \ensuremath{T(\gls{obs})} ,
    description = {sufficient statistics of exponential family} ,
    type        = abbrev,
    parent      = prob,
}
\newglossaryentry{expfamlogpartition}{
    name        = \ensuremath{A} ,
    description = {log parition function of exponential family (depends on \gls{expfamnatparam})} ,
    type        = abbrev,
    parent      = prob,
}
\newglossaryentry{incov}{
    name        = \ensuremath{\gls{cov}_{\mathrm{i}}},
    description = {input covariance matrix} ,
    type        = abbrev,
    parent      = prob,
}
\newglossaryentry{inoutcov}{
    name        = \ensuremath{\gls{cov}_{\mathrm{io}}},
    description = {input-output covariance matrix} ,
    type        = abbrev,
    parent      = prob,
}
\newglossaryentry{clloss}{
    name        = \ensuremath{\mathcal{L}_{\mathrm{INCE}}} ,
    description = {InfoNCE loss function} ,
    type        = abbrev,
}
\newglossaryentry{cllossgen}{
    name        = \ensuremath{\mathcal{L}_{\mathrm{AINCE}}} ,
    description = {AnInfoNCE loss function} ,
    type        = abbrev,
}
\newglossaryentry{alignloss}{
    name        = \ensuremath{\mathcal{L}_{\mathrm{align}}} ,
    description = {alignment term in \gls{clloss}} ,
    type        = abbrev,
}
\newglossaryentry{uniformloss}{
    name        = \ensuremath{\mathcal{L}_{\mathrm{uniform}}} ,
    description = {uniformity term in \gls{clloss}} ,
    type        = abbrev,
}
\newglossaryentry{ball}{
    name        = \ensuremath{\mathcal{B}} ,
    description = {ball} ,
    type        = abbrev,
}
\newglossaryentry{h}{
    name        = \ensuremath{\mathrm{\boldsymbol{h}}} ,
    description = {composition of encoder and decoder, \ie, $\gls{enc}\circ\gls{dec}$} ,
    type        = abbrev,
}
\newglossaryentry{l2latent}{
    name        = \ensuremath{\hat{\gls{latent}}} ,
    description = {$\ell_2$-normalized latent} ,
    type        = abbrev,
    parent     = latents,
}
\newglossaryentry{temp}{
    name        = \ensuremath{{\boldsymbol{\tau}}} ,
    description = {temperature in \gls{clloss}} ,
    type        = abbrev,
}
\newacronym{mpa}{MPA}{Measure Preserving Automorphism}
\newacronym{iid}{i.i.d.}{independent and identically distributed}
\newacronym{vmf}{vMF}{von Mises-Fisher}
\newacronym{nivmf}{nivMF}{non-isotropic von Mises-Fisher}
\newacronym{pd}{PD}{positive definite}
\newacronym{psd}{PSD}{positive semi-definite}
\newacronym{nd}{ND}{negative definite}
\newacronym{nsd}{NSD}{negative semi-definite}
\newacronym{ode}{ODE}{Ordinary Differential Equation}
\newacronym{pde}{PDE}{Partial Differential Equation}
\newacronym{lhs}{LHS}{left hand side}
\newacronym{rhs}{RHS}{right hand side}
\newacronym{rv}{RV}{random variable}
\newacronym{ae}{AE}{AutoEncoder}
\newacronym{lae}{LAE}{Linear Autoencoder}
\newacronym{vae}{VAE}{Variational Autoencoder}
\newacronym{cvvae}{CV-VAE}{Constant-Variance Variational Autoencoder}
\newacronym{ivae}{iVAE}{Identifiable Variational Autoencoder}
\newacronym{rae}{RAE}{Regularized Autoencoder}
\newacronym{grae}{GRAE}{Gaussian Regularized Autoencoder}
\newacronym{lvm}{LVM}{Latent Variable Model}
\newacronym[longplural=Gaussian Processes]{gp}{GP}{Gaussian Process}
\newacronym{gplvm}{GPLVM}{Gaussian Process Latent Variable Model}
\newacronym{rbf}{RBF}{Radial Basis Function}
\newacronym{kld}{KL}{Kullback-Leibler Divergence}
\newacronym{elbo}{{\text{\upshape ELBO}}}{evidence lower bound}
\newacronym{pca}{PCA}{Principal Component Analysis}
\newacronym{ppca}{PPCA}{Probabilistic Principal Component Analysis}
\newacronym{ebm}{EBM}{Energy-Based Model}
\newacronym{cca}{CCA}{Canonical Correlation Analysis}
\newacronym{mi}{MI}{Mutual Information}
\newacronym{icm}{ICM}{Independent Causal Mechanisms}
\newacronym{sms}{SMS}{Sparse Mechanism Shift}
\newacronym{sem}{SEM}{Structural Equation Model}
\newacronym{lingam}{LiNGAM}{Linear Non-Gaussian Acyclic Model}
\newacronym{dag}{DAG}{Directed Acyclic Graph}
\newacronym{anm}{ANM}{Additive Noise Model}
\newacronym{cd}{CD}{Causal Discovery}
\newacronym{crl}{CRL}{Causal Representation Learning}
\newacronym{ica}{ICA}{Independent Component Analysis}
\newacronym{nlica}{NLICA}{nonlinear Independent Component Analysis}
\newacronym{bss}{BSS}{Blind Source Separation}
\newacronym{ima}{{\text{\upshape IMA}}}{Independent Mechanism Analysis}
\newacronym{igci}{IGCI}{Information Geometric Causal Inference}
\newacronym{cdf}{CdF}{Causal de Finetti}
\newacronym{nce}{NCE}{Noise Contrastive Estimation}
\newacronym{pcl}{PCL}{Permutation-Contrastive Learning}
\newacronym{tcl}{TCL}{Time-Contrastive Learning}
\newacronym{iia}{IIA}{Independent Innovation Analysis}
\newacronym{ar}{AR}{AutoRegressive}
\newacronym{var}{VAR}{Vector AutoRegressive}
\newacronym{nvar}{NVAR}{Nonlinear Vector AutoRegressive}
\newacronym{ai}{AI}{Artificial Intelligence}
\newacronym{ml}{ML}{Machine Learning}
\newacronym{dl}{DL}{Deep Learning}
\newacronym{rl}{RL}{Reinforcement Learning}
\newacronym{ssl}{SSL}{Self-Supervised Learning}
\newacronym{cl}{CL}{Contrastive Learning}
\newacronym{dcl}{DCL}{Debiased Contrastive Learning}
\newacronym{scl}{SCL}{Spectral Contrastive Learning}
\newacronym{gcl}{GCL}{Graph Contrastive Learning}
\newacronym{alphacl}{$\alpha$-CL}{$\alpha$-Contrastive Learning}
\newacronym{arcl}{ArCL}{Augmentation-robust Contrastive Learning}
\newacronym{hn}{HN}{hard negative}
\newacronym{aninfonce}{AnInfoNCE}{Anistropic InfoNCE}
\newacronym{vince}{VINCE}{Variational InfoNCE}
\newacronym{rince}{RINCE}{Robust InfoNCE}
\newacronym{aggnce}{AggNCE}{Aggregated InfoNCE}
\newacronym{mcinfonce}{MCInfoNCE}{Monte-Carlo InfoNCE}
\newacronym{gmc}{GMC}{Geometric Multimodal Contrastive Learning}
\newacronym{looc}{LooC}{Leave-one-out Contrastive Learning}
\newacronym{npc}{NPC}{Negative-Positive Coupling}
\newacronym{cpc}{CPC}{Contrastive Predictive Coding}
\newacronym{nlp}{NLP}{Natural Language Processing}
\newacronym{gdl}{GDL}{Geometric Deep Learning}
\newacronym{msn}{MSN}{Masked Siamese Networks}
\newacronym{ifm}{IFM}{Implicit Feature Modification}
\newacronym{dnn}{DNN}{Deep Neural Network}
\newacronym{nn}{NN}{Neural Network}
\newacronym{ann}{ANN}{Artificial Neural Network}
\newacronym{nc}{NC}{Neural Collapse}
\newacronym{cdt}{CDT}{Class-Dependent Temperature}
\newacronym{mlp}{MLP}{Multi-Layer Perceptron}
\newacronym{fc}{FC}{Fully Connected}
\newacronym{cn}{conv}{Convolutional layer}
\newacronym{cnn}{CNN}{Convolutional Neural Network}
\newacronym{gnn}{GNN}{Graph Neural Network}
\newacronym{rnn}{RNN}{Recurrent Neural Network}
\newacronym{lstm}{LSTM}{Long Short-Term Memory}
\newacronym{gru}{GRU}{Gated Recurrent Unit}
\newacronym{relu}{ReLU}{Rectified Linear Unit}
\newacronym{bn}{BN}{Batch Normalization}
\newacronym{dbn}{DBN}{Decorrelated Batch Normalization}
\newacronym{gan}{GAN}{Generative Adversarial Network}
\newacronym{sgd}{SGD}{Stochastic Gradient Descent}
\newacronym{adam}{ADAM}{Adaptive Moment Estimation}
\newacronym{svd}{SVD}{Singular Value Decomposition}
\newacronym{wls}{WLS}{Weighted Least Squares}
\newacronym{sam}{SAM}{Sharpness-Aware Minimization}
\newacronym{samba}{SAMBA}{SAM-Based Autoencoder}
\newacronym{vi}{VI}{Variational Inference}
\newacronym{mfvi}{MFVI}{Mean Field Variational Inference}
\newacronym{dgp}{DGP}{Data Generating Process}
\newacronym{map}{MAP}{Maximum A Posteriori}
\newacronym{mle}{MLE}{Maximum Likelihood Estimation}
\newacronym{etf}{ETF}{Equiangular Tight Frame}
\newacronym{mse}{MSE}{Mean Squared Error}
\newacronym{mae}{MAE}{Mean Absolute Error}
\newacronym{ce}{{\text{\upshape CE}}}{cross-entropy}
\newacronym{sid}{SID}{Structural Intervention Distance}
\newacronym{shd}{SHD}{Structural Hamming Distance}
\newacronym{mcc}{MCC}{Mean Correlation Coefficient}
\newacronym{mig}{MIG}{Mutual Information Gap}
\newacronym{dci}{DCI}{Disentanglement Completeness Informativeness score}
\newacronym{arc}{ARC}{Average Relative Confusion}
\newacronym{acr}{ACR}{Average Confusion Ratio}
\newacronym{api}{API}{Application Programming Interface}
\newacronym{cpu}{CPU}{Central Processing Unit}
\newacronym{gpu}{GPU}{Graphics Processing Unit}
\newacronym{gt}{{\text{\upshape GT}}}{ground truth}
\newacronym{lti}{LTI}{Linear Time-Invariant}
\newacronym{winfonce}{W-InfoNCE}{Weighted-InfoNCE}
\begin{document}

%

%
\runningauthor{Rusak${}^{*}$, Reizinger${}^{*}$, Juhos${}^{*}$, Bringmann, Zimmermann${}^{\dagger}$, Brendel${}^{\dagger}$}

\twocolumn[

\aistatstitle{InfoNCE: Identifying the Gap Between Theory and Practice}

\aistatsauthor{
    Evgenia Rusak\textsuperscript{*,1,2} \And
    Patrik Reizinger\textsuperscript{*,2,4} \And
    Attila Juhos\textsuperscript{*,2,4} \And
    Oliver Bringmann\textsuperscript{1 }\AND
    Roland Zimmermann\textsuperscript{$\dagger$,2,4} \And
    Wieland Brendel\textsuperscript{$\dagger$,2,3,4}}

\vspace{5pt}

\aistatsaddress{\textsuperscript{1}University of Tübingen \quad \textsuperscript{2}Max Planck Institute for Intelligent Systems \\ \textsuperscript{3}ELLIS Institute Tübingen
\quad \textsuperscript{4}Tübingen AI Center}

]

\renewcommand*{\thefootnote}{\fnsymbol{footnote}}
\footnotetext[1]{Equal contribution. \footnotemark[2]{}Joint senior authors.}
\footnotetext[0]{Correspondence to \href{mailto:evgenia.rusak@uni-tuebingen.de}{\texttt{evgenia.rusak@uni-tuebingen.de}}\vspace{5pt}}

\footnotetext[0]{Code at \href{https://github.com/brendel-group/AnInfoNCE}{\texttt{github.com/brendel-group/AnInfoNCE}}.}
\footnotetext[0]{Website at \href{https://brendel-group.github.io/AnInfoNCE/}{\texttt{brendel-group.github.io/AnInfoNCE/}}.}

\begin{abstract}

Prior theory work on Contrastive Learning via the InfoNCE loss showed that, under certain assumptions, the learned representations recover the ground-truth latent factors. We argue that these theories overlook crucial aspects of how CL is deployed in practice. Specifically, they either assume equal variance across all latents or that certain latents are kept invariant. However, in practice, positive pairs are often generated using augmentations such as strong cropping to just a few pixels. Hence, a more realistic assumption is that all latent factors change with a continuum of variability across all factors. We introduce AnInfoNCE, a generalization of InfoNCE that can provably uncover the latent factors in this anisotropic setting, broadly generalizing previous identifiability results in CL. We validate our identifiability results in controlled experiments and show that AnInfoNCE increases the recovery of previously collapsed information in CIFAR10 and ImageNet, albeit at the cost of downstream accuracy. 
Finally, we discuss the remaining mismatches between theoretical assumptions and practical implementations.

\end{abstract}

\section{INTRODUCTION} \label{sec:introduction}
    \emph{\gls{ssl}} employs surrogate objectives to learn useful representations from large, unlabeled datasets. A particularly successful and simple sub-family, called \emph{\acrfull{cl}}, minimizes the representational distance between similar samples (\eg, augmentations of the same sample) while maximizing the distance between dissimilar samples (drawn i.i.d.)~\citep{oord_representation_2019,chen_simple_2020,chen2020improved,caron2020unsupervised,chen2020big}. Two similar samples form a \emph{positive pair}, while two dissimilar samples form a \emph{negative pair}. Most of today's successful \gls{ssl} techniques build on this principle, including non-contrastive ones that only use positive pairs~\citep{chen_exploring_2020,grill_bootstrap_2020,zbontar_barlow_2021}.

    \emph{InfoNCE}~\citep{oord_representation_2019} is a commonly used objective for CL.
    \citet{zimmermann_contrastive_2021} have shown that under certain assumptions put on the \emph{\gls{dgp}}, a model can recover the ground-truth latent factors if trained with InfoNCE.
    Their identifiability result requires an isotropic change across all latent factors in the positive pair.
    \Citet{von_kugelgen_self-supervised_2021} relaxed the isotropic assumption on the positive conditional distributions 
    to two partitions: content ($\delta$-distribution) and style (non-degenerate distribution), yielding a 
    weaker block-identifiability result.

\captionsetup{aboveskip=18pt,belowskip=-8pt}
\begin{figure*}[t]
    \centering
    \includegraphics[width=\linewidth]{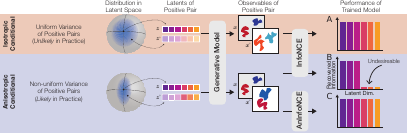}
    \caption{\textbf{Illustration of the Mismatch Between the Standard \gls{cl} Model and Practice.} \textbf{A:} \gls{cl} with the commonly used InfoNCE objective is identifiable when all latents change to the same extent across the positive pair~\citep{zimmermann_contrastive_2021}, which is unlikely to happen in practice. \textbf{B:} The more likely scenario, when augmentations affect different latents to a different extent, leads to dimensional collapse and information loss. \textbf{C:} Our proposed objective, \ourloss, models features that can vary to a different degree in the positive pair, avoiding collapse.}
    \label{fig:title}
\end{figure*}

    In practice, positive pairs for \gls{cl} are generated using augmentations. For instance, strong cropping, which often extracts just a tiny area of the image, is argued to be crucial for high downstream performance \citep{chen_simple_2020}. Assuming an underlying latent variable model, these augmentations correspond to changes in latent factors, modeled by the positive conditional distribution. Cropping discards a significant amount of information and, hence, harshly changes some latent factors. 
    Another augmentation, Gaussian blurring, causes moderate perturbations to the sharpness of objects.
    Lastly, latents alluding to class information are invariably left intact.
    We posit that augmentations imply anisotropic changes to different latent factors, a setting previously not covered theoretically.    
    In fact, we believe that this gap between theory and practice can in part explain why empirical work shows that \gls{cl} loses information~\citep{chen_simple_2020, jing_understanding_2022}, \emph{despite} the identifiability guarantees of~\citet{zimmermann_contrastive_2021,von_kugelgen_self-supervised_2021}.    

    The information loss in CL hinders the development of task-agnostic models, which require an accurate and holistic low- and high-level representation of their inputs.
    The failure of InfoNCE to capture style latents is already apparent in popular vision-language models (VLMs) trained by adaptations of InfoNCE, such as CLIP \citep{radford2021learning}.
    By focusing on a few semantic concepts and dismissing low-level style information, CLIP-based VLMs struggle with low-level vision \citep{rahmanzadehgervi2024vision}, scene compositionality \citep{yuksekgonul2023and}, and attribute binding tasks \citep{trusca2024object}.
    Identifiability considers how to reconstruct \emph{all}  latent factors underlying the inputs, aiming to capture general-purpose representations, suitable for a plethora of downstream tasks.

    Our work serves two purposes. Firstly, we push the frontier of identifiability research by generalizing existing results to anisotropic latent distributions, which allow for more expressive generative processes. Equally importantly, we design a non-trivial extension of InfoNCE, coined \ourloss, that is provably able to capture these inhomogeneous latents. We validate our results in controlled experiments and show that \ourloss successfully increases latent recovery on CIFAR10 and ImageNet, although at the expense of downstream classification accuracy. We perform ablations to understand the remaining gap caused by the accuracy--identifiability trade-off. 
    Our \textbf{contributions}:
    %
    %
   %
    \begin{itemize}[nolistsep,leftmargin=*]
        \item We introduce \ourloss, a generalized contrastive loss, that provably recovers latents with augmentation-induced anisotropic variances (\cref{subsec:main_result}).
       \item We extend the identifiability result to hard negative mining (\cref{seq:hn}) and loss ensembles (\cref{sec:ensemble}).
        \item We verify our loss in synthetic and well-controlled image experiments, having full knowledge of the ground-truth generative process (\cref{subsec:lvm} and \cref{sec:vae_mnist}). 
        \item We further demonstrate the efficacy of \ourloss on CIFAR10 and ImageNet in recovering the latent factors, but observe a trade-off with downstream classification accuracy, done with linear probing (\cref{subsec:real_world_exp}).
        \item We link the accuracy--identifiability trade-off on real-world data to augmentations and analyze remaining mismatches between \gls{cl} theory and practice (\cref{sec:analysis}).
    \end{itemize}

\section{BACKGROUND} \label{sec:bg}

\paragraph{Contrastive Learning.}
    \acrfull{cl} is an \gls{ssl} paradigm where an encoder $\gls{enc}: \gls{Obs}\to \gls{Latent}$ learns to map observations \gls{obs} to latent vectors \gls{latent}. \gls{cl} uses positive pairs $(\gls{obs}, \gls{obspos})$ (similar data points, \eg, augmentations of the same sample), and negative pairs $(\gls{obs}, \gls{obsneg})$ (dissimilar data points, \eg, uniformly drawn from the dataset).
    \gls{cl} mostly employs the InfoNCE loss~\citep{sohn2016improved, gutmann_noise-contrastive_nodate,oord_representation_2019}: 
    \begin{align}
        &\gls{clloss}(\gls{enc}) = \label{eq:cl}\\
        &=\!\!\underset{{\substack{\gls{obs}, \gls{obspos} \\ \{\obsnegi[i]\}}}}{\expectation{}}\!
        \brackets{-\ln\!\dfrac{\expinnerpos}{\expinnerpos \!+\! \sum_{i=1}^{\gls{numneg}}\expinnerneg}}\!\!, \nonumber
    \end{align}
    where \gls{temp} is a scalar temperature, and $\gls{obspos}\sim p(\gls{obspos}|\gls{obs})$ and $(\gls{obs}, \gls{obsneg})\sim^{\acrshort{iid}}p(\gls{obs})$ are the positive and negative pairs observed.

    In order to analyze the behavior of \gls{cl}, we surmise that the observations come from a Data Generating Process (\gls{dgp}) $\gls{dec}: \gls{Latent}\to \gls{Obs}$.
    The samples follow a marginal distribution $p\parenthesis{\gls{latent}}$ over the latent space \gls{Latent}, and the positive samples follow a conditional \clconditional. Under certain assumptions, \gls{cl} inverts the \gls{dgp}, \ie, the composition ${\gls{h} = \gls{enc}\circ\gls{dec}}$ is a trivial map (usually an affine linear map, depending on the assumptions)~\citep{zimmermann_contrastive_2021}. The identifiability of \gls{ssl} is further analyzed by \citet{von_kugelgen_optimal_2019,lyu_latent_2021} and \citet{cui_aggnce_nodate}, with insightful connections between contrastive and non-contrastive methods in~\citet{balestriero_contrastive_2022,garrido_duality_2022,assran_hidden_2022}.

\paragraph{Content-style Partitioning of \gls{Latent}.} 
    \Citet{von_kugelgen_self-supervised_2021} present an identifiability result for a partitioning of $\gls{latent}\in\gls{Latent}$ into invariant (\emph{content}) $\gls{content}\in\gls{Content}$ and variant (\emph{style}) ${\gls{style}\in\gls{Style}}$ dimensions with $\gls{Latent}=\gls{Content}\times\gls{Style}$ 
    (see~\cref{sec:app_generalized_content}). The content latents \gls{content} in the positive pairs are assumed to follow a $\delta$-distribution, whereas the distribution of \gls{style} is not degenerate, yielding  $\clconditional=\delta\parenthesis{{\gls{latent}^{+}_c}-\gls{content}}p({\gls{latent}^{+}_s}|\gls{style})$. 
    Note that \citet{jing_understanding_2022} implicitly use a similar distinction---defined as different augmentation amplitudes. Thus, their approach is akin to generalized content and style variables with variances $\vars\gg\varc>0$; \citet{dubois_improving_2022} also have a similar approach. Other means to partition the latent variables are tied to the ``simplicity" of features~\citep{chen_intriguing_2021} or sparsity arguments~\cite{wen_toward_2021}. In concurrent work, \citet{eastwood_self-supervised_2023} propose a \textit{relative} notion of content and style, assuming the presence of multiple environments, each with distinct 
    content and style dimensions.

    
\paragraph{Practical Shortcomings of \acrshort{cl}.}
    Despite the identifiability guarantees of \gls{cl} methods~\citep{zimmermann_contrastive_2021,von_kugelgen_self-supervised_2021,cui_aggnce_nodate}, practitioners struggle to make models learn all underlying latent factors. \citet{chen_simple_2020} noted that the representations could become invariant to some input transformations. Several works connected data augmentations to failing to learn all latent factors~\citep{jing_understanding_2022,wang_chaos_2022,bendidi_no_2023,wu_mutual_2020,haochen_provable_2021,cosentino_toward_2022,wagner_importance_2022,zhai_understanding_2023}.
    
    A potential way to improve augmentation readout (a proxy for capturing the information in highly-varying, \ie, style, latents) is by introducing inductive biases. \citet{lee_improving_2021} add a regularizer to predict the augmentation parameters between two views, whereas \citet{dangovski_equivariant_2022} use a regularizer to induce equivariance by predicting rotation parameters. \citet{xiao_what_2021} introduce \gls{looc} with multiple latent spaces. They train multiple embeddings by leaving out one augmentation at a time, thus inducing different invariances, which results in better downstream and augmentation readout performance.
    \citet{zhang_rethinking_2022} propose a similar strategy with different augmentation partitions and calculate the contrastive loss at different hierarchical levels in the encoder, meant to induce invariances at different feature levels. Recently, \citet{eastwood_self-supervised_2023} proposed a similar method to \gls{looc}~\citep{xiao_what_2021}, but with strong theoretical guarantees.

\section{IDENTIFIABLE ANISOTROPIC CL} \label{sec:ident_theory}

\begin{scpcmd}[
\newscpcommand{\Z}{}{\gls{Latent}}
\newscpcommand{\z}{}{\gls{latent}}
\newscpcommand{\pz}{}{\gls{latentpos}}
\newscpcommand{\nz}{[1][i]}{\latentnegi[#1]}
\newscpcommand{\Lam}{}{\gls{diagtemp}}
\newscpcommand{\x}{}{\gls{obs}}
\newscpcommand{\px}{}{\gls{obspos}}
\newscpcommand{\nx}{[1][i]}{\obsnegi[#1]}
\newscpcommand{\X}{}{\gls{Obs}}
\newscpcommand{\M}{}{\gls{numneg}}
\newscpcommand{\g}{}{\gls{dec}}
\newscpcommand{\f}{}{\gls{enc}}
\newscpcommand{\D}{}{\gls{obsdim}}
\newscpcommand{\hz}{}{\gls{latentrec}}
]

\subsection{Setup and Intuition}\label{subsec:theory_intuition}
    We argue that the augmentations usually employed in applied Contrastive Learning (\gls{cl}) do not change all latents evenly. For example, color distortions do not drastically affect latents encoding semantic information or edges. We use a \gls{lvm} to model \gls{cl}, where a positive conditional $\dist(\pz|\z)$ relates the positive pairs \pz{} to the anchor sample \z. The augmentations used in practice induce a specific form of the positive conditional $\dist(\pz|\z)$, \eg, by assigning larger variance to latents strongly affected by augmentations. As the current \gls{cl} theory does not account for augmentations that affect latents to different levels, we propose an anisotropic model for \gls{cl}.  

    Following the practice of CL, we assume that the latent space is a $(\gls{latentdim}-1)$-dimensional hypersphere:  $\Z = \Sd[\gls{latentdim}-1]$. As proposed by \citet{zimmermann_contrastive_2021}, the anchor latent $\z$, and all negative pairs $\{\nz[i]\}$ are assumed to be uniformly distributed on $\Z$. Akin to \citet{kirchhof_non-isotropic_2022}, 
    we allow
    non-uniform variances for the individual latent factors in the positive pair $\pz$.
    We model this in the positive conditional by scaling the latent factors with \emph{unknown} positive \emph{concentration parameters}, collected in the diagonal matrix $\Lam$:
    \vspace{-4pt}
    \begin{equation}\label{eq:weighted_cl_cond}
        \dist(\z) = \mathrm{const},\quad \dist(\pz|\z) \propto e^{-(\pz-\z)^\top \Lam (\pz -\z)},
    \end{equation}
    \vspace{-1pt}
    where a higher value of $\Lam_{ii}$ means a higher concentration of $\pz$ around the anchor $\z$ along \makebox{dimension $i$}.
    Introducing \gls{diagtemp} in the conditional generalizes previous works: $\Lam = \Id{\gls{latentdim}}$ recovers the \gls{vmf} conditional from~\citet{zimmermann_contrastive_2021}, whereas two latent partitions, namely content (high $\Lam_{ii}$) and style (low $\Lam_{jj}$), yields the model of \citet{von_kugelgen_self-supervised_2021}.
    Our new conditional incorporates an entire spectrum of distributions between the two edge cases.

    We assume that observations $\x,\px,\{\nx[i]\}$ are the result of passing the respective latents through an invertible generator \makebox{$\g:\Z\rightarrow\X\subseteq\rr[\D]$}.
    Our goal is to train an encoder $\f:\rr[\D]\rightarrow\Z$ such that the reconstructed latents, $\hz = \f(\x)$, recover $\z$ up to permissible transformations. To train \gls{enc}, we introduce our new loss: 
    \vspace{-3pt}
    \begin{align}
        &\gls{cllossgen}(\f,\gls{diagtempinfer}) = \label{eq:weightedcl}\\
        & = \!\!\underset{\substack{\x, \px\\ \{\nx[i]\}}}{\expectation{}}\!
         \left[\!-\!\ln\!\dfrac{e^{-\normsquared{\f(\px) - \f(\x)}_{\gls{diagtempinfer}}}}{e^{{-\normsquared{f(\px) - \f(\x)}_{\gls{diagtempinfer}}}} \!+\! \sum_{i=1}^{\M} e^{-\normsquared{\f(\nx[i]) - \f(\x)}_{\gls{diagtempinfer}}}}\right]\!\!, \nonumber
    \end{align}
    \vspace{-0pt}
    where $\gls{diagtempinfer}$ is a \emph{trainable} diagonal scaling matrix, aimed to capture $\Lam$, and the weighted and squared $\ell_2$-distance, $\!{}-\!\normsquared{\f(\px\!) \!- \f(\x)}_{\gls{diagtempinfer}} \!=\! -(\f(\px\!) -\! \f(\x))\!^\top \gls{diagtempinfer} (\f(\px\!) - \f(\x))$, is the \emph{similarity function}.
    The idea behind \ourloss is that the similarity function should be flexible enough to accommodate latents anisotropically perturbed by the augmentations. We defer the details to \cref{sec:identifiability_proofs} and summarize our assumptions here: 

    \begin{restatable}[Anisotropic \gls{cl} on $\mathbb{S}^{\gls{latentdim}-1}$]{assum}{assumnonisotropiccl}\label{assum:non_isotropic_cl}
        The \textbf{\gls{dgp}} satisfies (\cf \cref{def:adgp} for details):
        \begin{enumerate}[leftmargin=*]
            \item \label{assumptions:first}$\gls{Latent}\!\!=\!\!\Sd[\gls{latentdim}-1]\!$, $\gls{Obs} \!\!\subseteq\!\!  \rr[\gls{obsdim}]$ are latent and observation spaces.
            \item Anchors are uniform on $\Z$, \ie, $ 
            \clmarginal = \mathrm{const}$.
            \item  For $\Lam\in\rr[\gls{latentdim}\times\gls{latentdim}]$ \acrshort{pd} diagonal matrix, the positive conditional \clconditional has the form of \eqref{eq:weighted_cl_cond}.
            \item \label{item:negative_cond}Negative pairs $\{\nz[i]\}$ are uniform on $\Z$.
            \item A continuous, invertible generator function $\gls{dec}$ maps $  \z, \z^{\pm} \!\in\! \Z \overset{\g}{\mapsto}\x, \x^{\pm} \!\in\! \X$.
        \end{enumerate}
        The \textbf{model} satisfies (\cf \cref{def:aclp} for details):
        \begin{enumerate}[resume,leftmargin=*]
            \item The encoder $\gls{enc} : \rr[\D] \to \Sd[\gls{latentdim}-1]$ is continuous.
            \item $\gls{diagtempinfer} \in \rr[\gls{latentdim}\times\gls{latentdim}]$ is a \acrshort{pd} diagonal matrix. 
            \item \label{assumptions:last}$\gls{enc}$ and $\gls{diagtempinfer}$ are trained with $\gls{cllossgen}$, \ie \eqref{eq:weightedcl}.
        \end{enumerate}
        Collectively \ref{assumptions:first}-\ref{assumptions:last}  define an \emph{anisotropic \gls{cl} problem}.
    \end{restatable}

\subsection{Identifiability of Anisotropic \gls{cl}}\label{subsec:main_result}

    When studying identifiability, we aim to recover the ground-truth latent $\z$ from observations $\x=\g(\z)$ with as little ambiguity as possible. 
    Our main contribution is proving the identifiability of the \gls{dgp} from \cref{assum:non_isotropic_cl}, generalizing \citet{zimmermann_contrastive_2021,von_kugelgen_self-supervised_2021}:
    \begin{theorem}[Identifiability of Anisotropic \gls{cl}]\label{prop:min_ce_maintains_weighted_mse}
        Under \cref{assum:non_isotropic_cl}, if a pair $(\f,\gls{diagtempinfer})$ minimizes \gls{cllossgen}~\eqref{eq:weightedcl}, then $\f\circ\g$ is a block-orthogonal transformation, where each block acts on latents with equal weight $\gls{diagtemp}_{ii}$.
        In other words, \gls{latent} is identified up to a block-orthogonal transformation.
    \end{theorem}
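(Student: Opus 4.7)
My plan is to adapt the identifiability blueprint of \citet{zimmermann_contrastive_2021} to the anisotropic similarity: first turn ``minimizer of $\gls{cllossgen}$'' into a pointwise density-ratio identity, then reduce the resulting functional equation to algebraic constraints on $\gls{h} \defeq \gls{enc}\circ\gls{dec}$ and $\gls{diagtempinfer}$. For the first step, I would invoke the standard InfoNCE calculation (which carries over to \eqref{eq:weightedcl} without modification): at the minimum, the similarity matches the log density ratio up to a function of the anchor, i.e.\ $-\normsquared{\gls{enc}(\gls{obspos}) - \gls{enc}(\gls{obs})}_{\gls{diagtempinfer}} = \log[p(\gls{obspos}|\gls{obs})/p(\gls{obspos})] + c(\gls{obs})$. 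Pulling back through the diffeomorphism $\gls{dec}$ (density ratios are invariant under change of variables), using that negatives are uniform on $\Sd[\gls{latentdim}-1]$ and that the positive conditional has the form \eqref{eq:weighted_cl_cond}, and evaluating at $\gls{latentpos} = \gls{latent}$ (which pins the residual anchor-dependent constant to zero), I obtain the clean symmetric identity $\normsquared{\gls{h}(\gls{latentpos}) - \gls{h}(\gls{latent})}_{\gls{diagtempinfer}} = \normsquared{\gls{latentpos} - \gls{latent}}_{\gls{diagtemp}}$ for all $\gls{latent}, \gls{latentpos} \in \Sd[\gls{latentdim}-1]$.

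The second step is to extract linearity of $\gls{h}$ from this identity. Expanding the weighted squared norms and defining $\phi(\gls{latent}) \defeq \normsquared{\gls{h}(\gls{latent})}_{\gls{diagtempinfer}} - \normsquared{\gls{latent}}_{\gls{diagtemp}}$ rewrites the identity as the separable constraint $\transpose{\gls{h}(\gls{latentpos})}\gls{diagtempinfer}\gls{h}(\gls{latent}) - \transpose{\gls{latentpos}}\gls{diagtemp}\gls{latent} = \tfrac{1}{2}(\phi(\gls{latent}) + \phi(\gls{latentpos}))$. Testing this against antipodal pairs $\gls{latentpos} = \pm \mathbf{e}_k$ and subtracting annihilates the symmetric right-hand side, producing $\gls{latentdim}$ affine relations $\transpose{\mathbf{v}_k}\gls{h}(\gls{latent}) = 2\gls{diagtemp}_{kk} z_k + C_k$ with $\mathbf{v}_k \defeq \gls{diagtempinfer}(\gls{h}(\mathbf{e}_k) - \gls{h}(-\mathbf{e}_k))$. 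Stacking these gives $\transpose{V}\gls{h}(\gls{latent}) = 2\gls{diagtemp}\gls{latent} + \mathbf{C}$; since $\gls{diagtemp}$ is PD and the sphere has full affine hull in $\rr[\gls{latentdim}]$, the right-hand side sweeps out a $\gls{latentdim}$-dimensional affine set as $\gls{latent}$ varies, which forces $V$ invertible and hence $\gls{h}(\gls{latent}) = A\gls{latent} + \mathbf{b}$.

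I would then substitute this affine form back and finish algebraically. Comparing the master identity across all $\gls{latent},\gls{latentpos}$ and using that differences of sphere points span $\rr[\gls{latentdim}]$ yields $\transpose{A}\gls{diagtempinfer}A = \gls{diagtemp}$, so $A$ is invertible. The sphere-preservation constraint $\normsquared{A\gls{latent} + \mathbf{b}} = 1$ on $\Sd[\gls{latentdim}-1]$ is a polynomial of degree $\leq 2$ in $\gls{latent}$ identically equal to $1$ on the sphere; decomposing it into spherical-harmonic components of degrees $0,1,2$ forces $\transpose{A}A = \alpha I$, $\transpose{A}\mathbf{b} = 0$, and $\alpha + \normsquared{\mathbf{b}} = 1$, whence invertibility of $A$ gives $\mathbf{b} = 0$ and $A$ orthogonal. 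Combining $\transpose{A}A = I$ with $\transpose{A}\gls{diagtempinfer}A = \gls{diagtemp}$ yields the commutation $\gls{diagtempinfer}A = A\gls{diagtemp}$, equivalently $A_{ij}(\gls{diagtempinfer}_{ii} - \gls{diagtemp}_{jj}) = 0$; this is exactly the claimed block-orthogonal structure, with nonzero entries of $A$ confined to coordinate pairs whose weights agree, and $A$ acting orthogonally within each such block.

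The main obstacle I anticipate is Step 1: rigorously justifying the passage from ``global minimizer of $\gls{cllossgen}$'' to the pointwise density-ratio identity. The natural route mimics the isotropic case---rewrite the $\gls{numneg}\to\infty$ loss as a conditional cross-entropy and apply Gibbs' inequality, combined with sufficient expressivity of the $(\gls{enc}, \gls{diagtempinfer})$ family to realize the Bayes-optimal similarity---but one must additionally verify that the extra trainable parameter $\gls{diagtempinfer}$ does not obstruct this argument and that a minimizer is actually attained within the model class. Once this identity is in place, the remaining steps are essentially mechanical algebra.
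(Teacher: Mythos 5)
Your proposal is correct, and Step 1 (passing from a global minimizer of \gls{cllossgen} to the pointwise identity $\normsquared{\gls{h}(\gls{latentpos}) - \gls{h}(\gls{latent})}_{\gls{diagtempinfer}} = \normsquared{\gls{latentpos} - \gls{latent}}_{\gls{diagtemp}}$) matches the paper's: both reduce to the Bayes-optimal density ratio of the underlying cross-entropy classification problem, pull back through the diffeomorphism $\gls{dec}$, and kill the anchor-dependent offset $c(\gls{latent})$ by setting $\gls{latentpos}=\gls{latent}$. In Step 2 you take a genuinely different route. The paper recenters at a fixed $\gls{latent}_0$, shows $\a(\gls{latent}) = \gls{diagtempinfer}^{1/2}(\gls{h}(\gls{latent}) - \gls{h}(\gls{latent}_0))$ and $\b(\gls{latent}) = \gls{diagtemp}^{1/2}(\gls{latent} - \gls{latent}_0)$ are related by a linear isometry $\O$ via an abstract well-definedness argument on the linear span of $\b(\Sd[\gls{latentdim}-1])$, then concludes $\gls{h}$ is affine, pins the offset to zero via antipodal norms, and uses \emph{uniqueness of the polar decomposition} of $\O\gls{diagtemp}^{1/2}$ to extract that $\H = \gls{diagtempinfer}^{-1/2}\O\gls{diagtemp}^{1/2}$ is itself orthogonal, finishing with an eigendecomposition/commutation argument. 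You instead separate variables by defining $\phi$, test against $\gls{latentpos}=\pm\mathbf{e}_k$ to annihilate the symmetric residual, stack the resulting affine relations into $\transpose{V}\gls{h}(\gls{latent}) = 2\gls{diagtemp}\gls{latent} + \mathbf{C}$, argue invertibility of $V$ from the full affine hull of the ellipsoid, and deduce affinity of $\gls{h}$ directly; orthogonality of $A$ then comes from the sphere-preservation constraint via degree decomposition of the degree-$\leq 2$ polynomial, and the block structure from $A_{ij}(\gls{diagtempinfer}_{ii}-\gls{diagtemp}_{jj})=0$. Your route is more hands-on and avoids both the linear-extension construction of $\O$ and the polar-decomposition step; the paper's route is arguably more canonical and works without singling out coordinate directions, but both arrive at the same algebraic endgame ($A$ orthogonal, $\transpose{A}\gls{diagtempinfer}A = \gls{diagtemp}$, hence block-orthogonal). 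One minor clean-up worth making explicit in your write-up: in the invertibility-of-$V$ argument, it is cleanest to first note $\mathbf{C} \in \mathrm{Im}(\transpose{V})$ by summing antipodal pairs, after which $2\gls{diagtemp}\gls{latent} \in \mathrm{Im}(\transpose{V})$ for all $\gls{latent}$ forces $\mathrm{Im}(\transpose{V}) = \rr[\gls{latentdim}]$.
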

    
    \cref{prop:min_ce_maintains_weighted_mse} means that if the encoder $\f$, 
    usually parametrized by a neural network, is expressive enough to minimize \ourloss, then the reconstructed latent $\hz = \f(\x)$ is related to the ground-truth $\z$ via a simple orthogonal linear transformation. This simple ambiguity remains inconsequential as solving any downstream task involves at least one linear layer trained on top of the backbone $\f$. 

    \vspace{-5pt}
    \begin{proof}[Proof Sketch (full proof in \cref{subsec:ident_base})]
        We rewrite \ourloss into a categorical cross-entropy of
        predicting the correct index of the positive pair from randomly shuffled data $\px,\{\nx[i]\}$. The minimum loss is attained iff the similarity function matches the log-density-ratio between the positive and negative conditionals (\cref{theo:bayes_optima_cl}). This provides us with a functional equation that we solve for $(\f,\gls{diagtempinfer})$.
    \end{proof}

\end{scpcmd}

\subsection{Extending Identifiability to Hard Negative Mining} \label{seq:hn}
    Our generalized result in \cref{subsec:main_result} accounts for anisotropically varying latent factors. 
    Here, we build on \cref{prop:min_ce_maintains_weighted_mse} to theoretically model the practice of \gls{hn} mining.
    
    \gls{hn} mining effectively uses a negative conditional distribution to select negative samples more similar to the anchor, instead of sampling from a uniform marginal. Put differently, \gls{hn} mining makes the classification problem of positive/negative samples \textit{more difficult}, which is reasoned to help avoid latent collapse~\citep{wu_mutual_2020,chuang_debiased_2020,robinson_contrastive_2021,zheng_contrastive_2021,wang_understanding_2021,robinson_can_2021,khosla_supervised_2020, yuksekgonul2023when}.
    Our proposed anisotropic loss still identifies the ground-truth latents, when the negative conditional has the same form as \eqref{eq:weighted_cl_cond} (we denote its concentration parameter as $\gls{diagtemp}^-$, and the one of the positive conditional as $\gls{diagtemp}^+$). Thus, we further generalize the setting of  \cref{subsec:main_result}, which is subsumed as $\gls{diagtemp}^-=\boldsymbol{\mathrm{0}}$ (\ie, a uniform marginal). Intuitively, with a negative conditional, the density ratios will be the same as with a uniform marginal and a positive conditional with $\gls{diagtemp}=\gls{diagtemp}^+-\gls{diagtemp}^-$. 
    \begin{corollary}[Identifiability with HN mining. Proof in \cref{subsec:theory_hard_negatives}]\label{cor:hn_ident}
        Under \cref{assum:non_isotropic_cl}, but with a negative conditional in \makebox{\ref{item:negative_cond}} as \eqref{eq:weighted_cl_cond} with a concentration parameter $\gls{diagtemp}^{-}$, such that $\gls{diagtemp}=\gls{diagtemp}^+-\gls{diagtemp}^-$ has full rank, the pair $(\gls{enc},\gls{diagtempinfer})$  minimizing \gls{cllossgen} identifies the latent factors up to a block-orthogonal transformation.
    \end{corollary}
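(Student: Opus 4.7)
The plan is to reduce the hard-negative setting to the baseline anisotropic \gls{cl} problem of \cref{prop:min_ce_maintains_weighted_mse}, by exploiting the fact that the Bayes optimum of any InfoNCE-style loss depends only on the ratio of the positive-pair density to the negative-pair density (up to a function of the anchor), never on either distribution individually. The intuition already sketched in the excerpt, namely that replacing a hard-negative conditional with concentration $\gls{diagtemp}^-$ is equivalent to using uniform negatives and an effective positive concentration $\gls{diagtemp}=\gls{diagtemp}^+-\gls{diagtemp}^-$, is exactly what this reduction formalizes.

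Concretely, I would begin by writing down the log density ratio between the two conditionals of the form \eqref{eq:weighted_cl_cond}:
\begin{equation*}
    \log\frac{p^{+}(\gls{latent}'|\gls{latent})}{p^{-}(\gls{latent}'|\gls{latent})}
    = -(\gls{latent}'-\gls{latent})^{\top}\gls{diagtemp}(\gls{latent}'-\gls{latent}) + c(\gls{latent}),
\end{equation*}
where $c(\gls{latent})$ collects the two (in general $\gls{latent}$-dependent) log-normalizers over the hypersphere. I would then invoke the Bayes-optimality characterization underlying \cref{prop:min_ce_maintains_weighted_mse} (\cref{theo:bayes_optima_cl}): at any minimizer of $\gls{cllossgen}$, the similarity function $-\normsquared{\gls{enc}(\gls{obs}')-\gls{enc}(\gls{obs})}_{\gls{diagtempinfer}}$ must match this log density ratio up to an arbitrary additive function of the anchor $\gls{obs}$ alone. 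Since $\gls{latent}=\gls{dec}^{-1}(\gls{obs})$, the offset $c(\gls{latent})$ is exactly such an $\gls{obs}$-only function and disappears into that softmax-invariant degree of freedom.

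What remains is precisely the functional equation analyzed in the proof of \cref{prop:min_ce_maintains_weighted_mse}, with the only change being the effective concentration $\gls{diagtemp}=\gls{diagtemp}^{+}-\gls{diagtemp}^{-}$ replacing $\gls{diagtemp}^{+}$. Provided this effective matrix is \acrshort{pd} diagonal---which the full-rank assumption delivers, together with the implicit requirement $\gls{diagtemp}^{+}_{ii}>\gls{diagtemp}^{-}_{ii}$ needed for the ratio to remain a well-defined contrastive score---the conclusion of \cref{prop:min_ce_maintains_weighted_mse} applies verbatim, yielding that $\gls{enc}\circ\gls{dec}$ is a block-orthogonal transformation whose blocks coincide with the coordinate groups sharing equal diagonal entries of $\gls{diagtemp}$.

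The main obstacle is the careful accounting of the normalizers: on the anisotropic sphere, the partition functions $Z^{\pm}(\gls{latent})$ genuinely depend on the anchor (since $\gls{latent}'{}^{\top}\gls{diagtemp}^{\pm}\gls{latent}'$ is not constant over $\Sd[\gls{latentdim}-1]$ when $\gls{diagtemp}^{\pm}$ is not a scalar multiple of the identity), so one must verify that this dependence is confined to an additive $\gls{latent}$-only term and can therefore be absorbed into the $c(\gls{obs})$ slack. Once this point is in place, the corollary is a clean reduction and requires no further analysis beyond a direct appeal to \cref{prop:min_ce_maintains_weighted_mse}.
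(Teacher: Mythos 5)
Your proposal matches the paper's own proof essentially step for step: both invoke the Bayes-optimality characterization (\cref{theo:bayes_optima_cl}), observe that the log density ratio of the two Gaussian-on-sphere conditionals collapses to a single quadratic form with effective concentration $\gls{diagtemp}=\gls{diagtemp}^{+}-\gls{diagtemp}^{-}$ with the anchor-dependent normalizers absorbed into the additive $c(\gls{latent})$ slack (which, as in the proof of \cref{prop:min_ce_maintains_weighted_mse}, is subsequently forced to vanish by setting $\zprime=\gls{latent}$), and then hand the resulting functional equation to \cref{prop:min_ce_maintains_weighted_mse} verbatim. Your side remark that full rank alone does not suffice and one actually needs $\gls{diagtemp}^{+}-\gls{diagtemp}^{-}$ to be positive definite so that the square root $\gls{diagtemp}^{1/2}$ used in Step 2 of that proof exists is correct, and is indeed the condition imposed in the paper's appendix definition of the hard-negative DGP (\cref{def:adgp_hn}).
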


We include experimental results for hard negative mining in \cref{app:mitigation}. We empirically show that our theory correctly predicts the loss optimum at $\gls{diagtemp} = \gls{diagtemp}^+-\gls{diagtemp}^-$, and that fine-tuning an encoder on a concatenation of hard negative samples and regular negative samples improves the identifiability score.


\subsection{Extending Identifiability to Ensemble \ourloss} \label{sec:ensemble}
    Recent works started exploring both theoretically~\citep{eastwood_self-supervised_2023,kirchhof_probabilistic_2023} and empirically~\citep{xiao_what_2021,zhang_rethinking_2022} the advantages of ensemble \gls{cl} losses, where each loss component is calculated with different (potentially partially overlapping) data augmentations. We show that an ensemble version of \ourloss is also identifiable.
    For more details, see \cref{app:subsec_theory_ensemble}.

    \begin{scpcmd}[
        \newscpcommand{\Lam}{}{\gls{diagtemp}}
        \newscpcommand{\infLam}{}{\gls{diagtempinfer}}
        \newscpcommand{\ensLam}{[1]}{\Lam^{\raisemath{-0.1ex}{#1}}}
        \newscpcommand{\ensinfLam}{[1]}{\infLam^{\raisemath{-0.1ex}{#1}}}
        \newscpcommand{\f}{}{\gls{enc}}
        ]
    \begin{corollary}[Identifia.~of ensemble \ourloss]\label{cor:ensemble_ident}
        Assume $k$ \glspl{dgp}, such that each satisfy \cref{assum:non_isotropic_cl}. Assume that they share \gls{dec}, but have different concentration parameters $\{\ensLam{i}\}$, for $1\leq i \leq k$. Assume a shared encoder \gls{enc} and $k$ learnable $\ensinfLam{i}$ parameters. Then, the tuple $(\gls{enc},\{\ensinfLam{i} \})$ minimizing the ensemble loss $\sum_i^k\gls{cllossgen}(\f, \ensinfLam{i})$ identifies the latents up to a block-orthogonal transformation.
    \end{corollary}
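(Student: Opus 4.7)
The plan is to reduce \cref{cor:ensemble_ident} to a termwise application of \cref{prop:min_ce_maintains_weighted_mse}. First, for each fixed $i$, \cref{theo:bayes_optima_cl} bounds $\gls{cllossgen}(\gls{enc}, \gls{diagtempinfer}^{(i)})$ from below by the cross-entropy $L_i^\star$ of the Bayes-optimal classifier for the $i$-th \gls{dgp}. This bound depends only on \gls{dec} and $\gls{diagtemp}^{(i)}$, so summing yields a global lower bound $\sum_i L_i^\star$ on the ensemble objective that holds for all admissible $(\gls{enc}, \{\gls{diagtempinfer}^{(i)}\})$.

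Next, I would exhibit a simultaneous attainer of this bound. Because the $k$ \glspl{dgp} share \gls{dec}, choosing $\gls{enc}=\gls{dec}^{-1}$ makes $\gls{enc}\circ\gls{dec}$ the identity; combined with $\gls{diagtempinfer}^{(i)}=\gls{diagtemp}^{(i)}$, the similarity function in the $i$-th term coincides with the log-density-ratio of that \gls{dgp} and attains $L_i^\star$. Because the $\gls{diagtempinfer}^{(i)}$ are independent learnable parameters, every summand hits its individual minimum at this single choice of \gls{enc}, so the infimum of the sum equals $\sum_i L_i^\star$ and is achieved.

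The identification conclusion then follows by a pinching argument: at any minimizer $(\gls{enc}^\star, \{\gls{diagtempinfer}^{(i)\star}\})$, each summand must equal $L_i^\star$, since no summand can dip below its own lower bound. Applying \cref{prop:min_ce_maintains_weighted_mse} separately to each $i$ forces $\gls{enc}^\star \circ \gls{dec}$ to be block-orthogonal with respect to the equal-weight partition induced by $\gls{diagtemp}^{(i)}$. Imposing this simultaneously for every $i$ yields block-orthogonality with respect to the common refinement of the $k$ partitions---generically strictly finer than any single one, which is precisely the identifiability gain from ensembling. The subtle step that I expect to require most care is the simultaneous attainability in the second paragraph: it relies crucially on the $\gls{diagtempinfer}^{(i)}$ being per-term free parameters. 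If they were tied across $i$, no single value could in general match all $k$ log-density ratios at once, so the termwise reduction would have to handle an explicit trade-off among the summands; the remainder of the argument is bookkeeping about intersecting the $k$ block structures.
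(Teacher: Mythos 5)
Your proof is correct and follows essentially the same route as the paper's: exhibit $\gls{enc}=\inv{\gls{dec}}$, $\gls{diagtempinfer}^{(i)}=\gls{diagtemp}^{(i)}$ as a simultaneous minimizer of every summand (possible because \gls{dec} is shared and each $\gls{diagtempinfer}^{(i)}$ is a free parameter), use the pinching argument to conclude any minimizer of the sum achieves every individual Bayes optimum, and then invoke \cref{prop:min_ce_maintains_weighted_mse} termwise. The one place you go beyond the paper's proof is intersecting the $k$ block structures to get the common refinement; the paper's proof applies the theorem to a single term (which already suffices for the corollary as stated) and defers the refinement observation to \cref{remark:ensemble_ident_permutations}, where it is used to get identifiability up to permutation when the $\gls{diagtemp}^{(i)}$ jointly separate all latent pairs.
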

    %
    \end{scpcmd}
        
    \Cref{cor:ensemble_ident} shows that using an ensemble with proper augmentations could lead to stronger identifiability guarantees (\eg, up to permutation, \cf \cref{remark:ensemble_ident_permutations}).
    Ideally, we could design such sets of augmentations, but it is not evident that this is possible in practice.

We include experimental results for ensemble AnInfoNCE in~\cref{app:mitigation}.
To verify~\cref{cor:ensemble_ident}, we define two DGPs that share a uniform anchor distribution and differ in their respective conditional distributions. We sample data from both DGPs, calculate the individual losses, and train the encoder on the sum of both losses. We see improved linear identifiability scores with ensemble \ourloss, compared to training the encoder with InfoNCE or AnInfoNCE.

\section{EXPERIMENTS}\label{sec:experiments}

    We now validate our theoretical results in experiments with increasing complexity. This section focuses on our main result and demonstrates the efficacy of AnInfoNCE in various settings.
    We include additional experimental ablations, results for hard negative mining and ensembling AnInfoNCE and detailed compute requirements for all our experiments in the Appendix. 
    
\subsection{Synthetic Experiments}
\label{subsec:lvm}
\captionsetup{aboveskip=15pt,belowskip=-0pt}
    \begin{figure*}[t]
        \centering
        \includegraphics[width=1.0\linewidth]{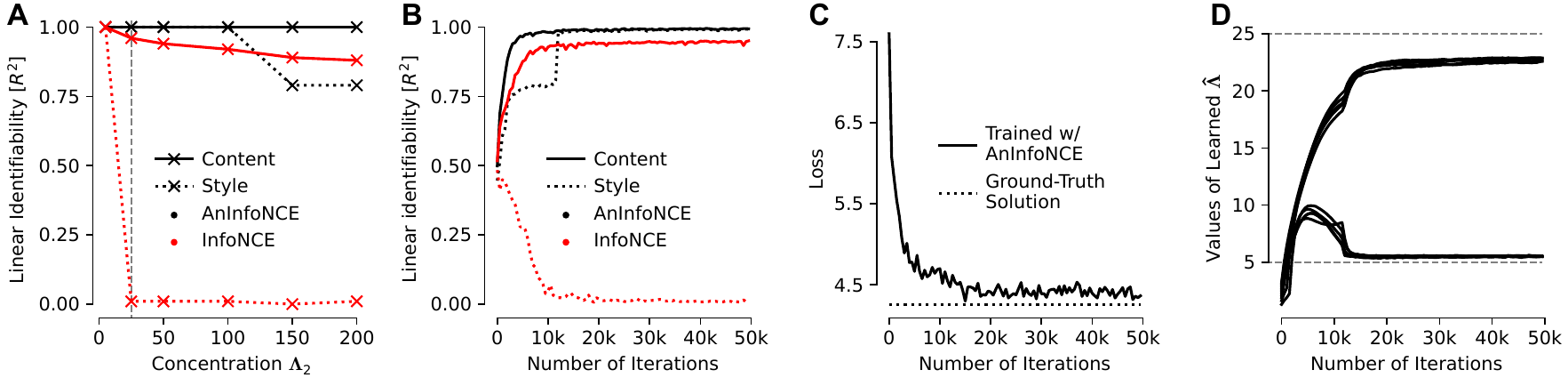}
        \caption{\textbf{Behavior of \ourloss on Synthetic Data.} ($\Lambda_1=5$) \textbf{A:} We maintain high \gls{r2}-scores with \ourloss for both content and style dimensions, while the style dimensions are lost when training with the regular InfoNCE loss. For $\Lambda_2=25$ (dotted black vertical line), we show: \textbf{B:} The evolution of the linear \gls{r2} scores during training; \textbf{C:} \ourloss reaches the global minimum, computed based on ground-truth latents; \textbf{D:} The evolution of the learned $\hat{\gls{diagtemp}}$ values.   \label{fig:toymodel}
        }
    \end{figure*}

\paragraph{Preliminaries.}
    We follow \citet{zimmermann_contrastive_2021} and consider a fully-controlled \acrfull{dgp}, which implements \cref{assum:non_isotropic_cl}. That is, the ground-truth latent space is a hypersphere ($\gls{Latent} = \Sd[\gls{latentdim}-1]$), and positive/negative pairs are defined through conditional sampling in the latent space. We model the generative process \gls{dec} as an invertible \gls{mlp} with the same input, intermediate, and output dimensionality \gls{latentdim} and with Leaky ReLU non-linearities. We set $\gls{latentdim}=10$.
    The encoder \gls{enc} is also parameterized by an \gls{mlp} with the same input, intermediate, and output dimensionality, and with Leaky ReLU non-linearities. 
    The encoder is trained only with observations, which are obtained by passing \gls{latent} through \gls{dec}.
    As we have access to the ground-truth \gls{latent}, we can post-hoc compute the information captured in the inferred latents \gls{latentrec} about the ground-truth \gls{latent}. We do this by fitting a linear map $\mat{A}$ between \gls{latentrec} and \gls{latent} and compute the average coefficient of determination \gls{r2}~\cite{wright1921correlation} between the relevant features (e.g. all of them, content-only dimensions, style-only dimensions) of \mat{A}\gls{latentrec} and \gls{latent}.
    Unless noted otherwise, we use a uniform marginal and a projected Gaussian as the conditional distribution, modeling \cref{eq:weighted_cl_cond} and the assumptions in~\cref{subsec:theory_intuition}.
    

\paragraph{Results.}
    We study the effect of anisotropy in the positive conditional by varying the \makebox{ground-truth $\gls{diagtemp}$}: We keep the value of five latent dimensions fixed at ${\Lambda_1 = 5}$, while we vary the value of the other five dimensions (i.e., $\Lambda_2$), resulting in a ground-truth diagonal ${\gls{diagtemp} = \diag{{\Lambda}_1, \ldots, \Lambda_1, \Lambda_2, \ldots \Lambda_2}}$. 
    While \citet{von_kugelgen_self-supervised_2021} coined the terms ``content'' and ``style'' to refer to invariant and variant latents, respectively, we here relax this notion and use these terms comparatively: In an anisotropic setting, where \gls{diagtemp} is composed of different values, higher (lower) values reflect more content-like (style-like) dimensions.
    We train an encoder on the observed data generated by the process described above and summarize the results in~\cref{fig:toymodel}\textbf{A}. We observe high \gls{r2} scores over a wide range of \gls{diagtemp}-values for both content and style latents when training with our \ourloss loss (markers in \cref{fig:toymodel}\textbf{A} indicate converged training runs). In contrast, regular InfoNCE loss cannot identify the style latents ($R^2\approx 0$).
    We also analyze the setting where the conditionals of five latents are distributed according to $\gls{diagtemp}_2 = 25$ in more detail in \cref{fig:toymodel}\textbf{B-D}.
    We observe that content dimensions are learned before style (\cref{fig:toymodel}\textbf{B}). Our loss converges to the global minimum value, which can be computed based on the ground-truth latents (\cref{fig:toymodel}\textbf{C}).
    When learning \gls{diagtemp}, we observe that the learned \gls{diagtempinfer} values almost match the ground-truth values (dashed black lines in \cref{fig:toymodel}\textbf{D}).

\subsection{VAE Experiments on MNIST}
\label{sec:vae_mnist} 
\paragraph{Preliminaries.}
    We now move closer to actual images by replacing the \gls{mlp} in our generative model with a ten-dimensional Variational Autoencoder (VAE; \citealp{kingma2013auto}) trained on MNIST~\citep{deng2012mnist}.
    This setup is beneficial as we have valid images while fully controlling the \gls{dgp}. Thus, we still sample ground-truth latents, which we then pass through the VAE decoder to generate observations to train on.
    Both the encoder and the decoder are fully connected three-layer \glspl{mlp} with Leaky ReLU activations.
    Generated samples for different concentration parameters \gls{diagtemp} are shown in~\cref{app:mnist}, ~\cref{fig:mnist_samples}.
    
\paragraph{Results.}
    We start with the same anisotropic ground-truth $\gls{diagtemp}$ as in \cref{subsec:lvm}.
    When training an encoder on MNIST samples with \ourloss, we observe perfect linear identifiability scores (\gls{r2}) for all latents, whereas training with the standard InfoNCE loss yields zero \gls{r2} scores for the style latents (\cref{fig:mnist_scores_acc}\textbf{A}).
    The identifiability scores correlate with KNN classification accuracy, which is stable across a wide range of \gls{diagtemp}-values when training with \ourloss, but degrades when training with InfoNCE.
    As our theoretical claims are only valid for the representations after the final layer of the model, we evaluate those. This means we deviate from previous literature and do not use a projection head.

    \captionsetup{aboveskip=15pt,belowskip=-1pt}
    \begin{figure*}[t]
        \centering
        \includegraphics[width=\linewidth]{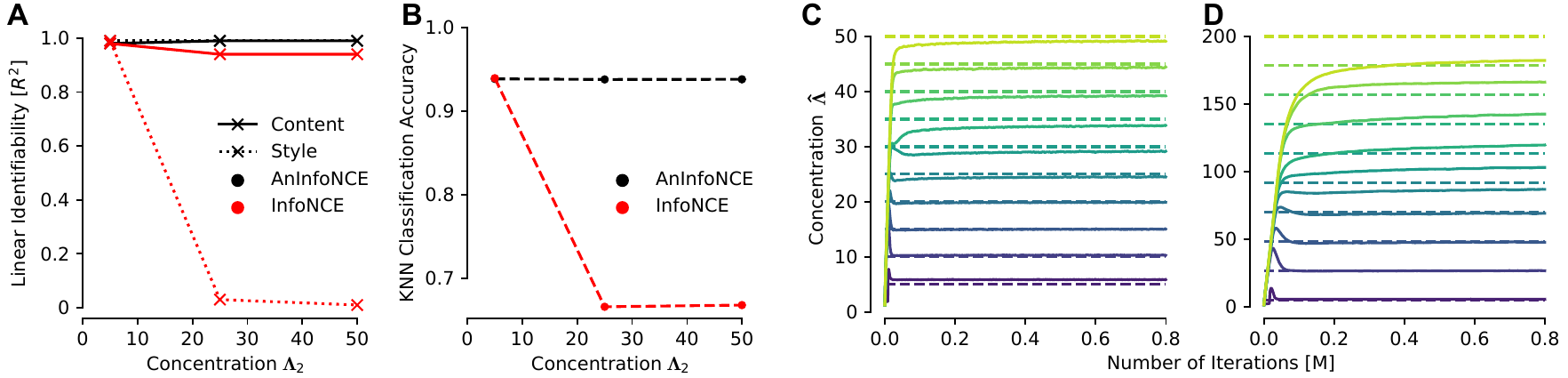}
        \caption{
            \textbf{Behavior of \ourloss on MNIST.}
            \textbf{A:} Linear identifiability (\gls{r2}) scores for an encoder trained on \acrshort{vae}-generated MNIST samples, when varying $\Lambda_2$ and keeping $\Lambda_1 = 5$ for the positive conditional. Style dimensions collapse ($\gls{r2}=0$) for regular InfoNCE. \textbf{B:} KNN-accuracy evaluated on the regular MNIST dataset using the encoders trained as described in \textbf{A}. KNN accuracy degrades when style dimensions are lost.
            \textbf{C \& D:} The evolution of learned $\hat{\gls{diagtemp}}$ during training.
            We set the diagonal entries of $\gls{diagtemp}$ to ten different values by linearly interpolating between $5$ and $50$ (\textbf{C}) or $5$ and $200$ (\textbf{D}). The ground-truth $\gls{diagtemp}$ values are indicated by dashed lines. The learned $\hat{\gls{diagtemp}}$ are shown in the corresponding colors as solid lines.
        }
        \label{fig:mnist_scores_acc}
    \end{figure*}

    Next, we model each latent with a distinct concentration parameter $\gls{diagtemp}_{ii}$ by linearly sampling values between $5$ and a maximum of either $50$ or $200$.
    When training the encoder with \ourloss, we recover all latent dimensions: The linear identifiability scores are $99.7$\% and $99.6$\%, respectively. However, training with the regular InfoNCE loss leads to substantially lower scores of $77.4$\% and $76.3$\%, respectively.
    \cref{fig:mnist_scores_acc}\textbf{C \& D} show how the learned $\hat{\gls{diagtemp}}$ values change during training. We observe that the learned $\hat{\gls{diagtemp}}$ approaches the ground truth, except for very high $\gls{diagtemp}$ values: This is visible in ~\cref{fig:mnist_scores_acc}\textbf{D} when $\gls{diagtemp}_{ii}>100$). We analyze this behavior further in \cref{app:analysis}.

\subsection{Real-world Experiments}\label{subsec:real_world_exp}

    \captionsetup{aboveskip=10pt,belowskip=-5pt}
\begin{table*}[t]
\setlength{\tabcolsep}{2pt}
\footnotesize
\centering
\caption{
    \textbf{Better Augmentations Readout Does Not Imply Better Classification Performance.}
    Comparison of different training objectives for linear downstream classification (\emph{Classes}) and augmentation readout (\emph{Grayscale} to \emph{Saturate}, \emph{Avg.} denotes the average over those) performance. Learning \gls{diagtemp} results in higher readout accuracy of the used augmentations, which can be interpreted as a better recovery of style latents. However, better augmentations readout does not result in better linear classification accuracy. We note that our theory and all synthetic and VAE experiments apply to the post-projector (backb.+proj.) features which are typically not used in practice because using backbone features results in better performance.
}
 \label{tab:augmentations_readout}

\newcolumntype{h}{>{\columncolor[gray]{0.95}}c}
\newcolumntype{i}{>{\columncolor[gray]{0.85}}c}

\begin{NiceTabular}{c l c c h h h h h i}[colortbl-like]
\toprule
& Training Objective & Features & Classes & Grayscale & Brightness & Contrast & Hue & Saturate & Avg.\\
\midrule
\multirow{6}{*}[-0.4ex]{\rotatebox{90}{CIFAR10}}
& InfoNCE & backbone & 90.9 & 56.1 & 59.3 & 62.6 & 55.4 & 55.1 & 57.7 \\ 
& \ourloss & backbone & 88.4 & 80.1 & 93.4 & 91.1 & 65.5 & 72.3 & 80.5 \\ 
& \ourloss, w/o $\ell_2$-norm. & backbone & 83.2 & 96.2 & 98.0 & 96.6 & 76.8 & 88.4 & 91.2 \\ 
\cmidrule{2-10}
& InfoNCE & backb.+proj. & 89.8 & 52.0 & 52.0 & 52.0 & 51.8 & 51.0 & 51.8 \\ 
& \ourloss & backb.+proj. & 85.4 & 71.4 & 71.4 & 77.1 & 59.7 & 65.5 & 69.0 \\ 
& \ourloss, w/o $\ell_2$-norm. & backb.+proj. & 79.1 & 65.5 & 65.5 & 87.6 & 70.5 & 83.8 & 74.6 \\ 
\midrule
\multirow{4}{*}[-0.4ex]{\rotatebox[origin=c]{90}{ImageNet}}
&  InfoNCE & backbone & 68.2 & 98.7 & 71.6 & 72.5 & 68.0 & 61.2 & 74.4 \\
& \ourloss & backbone & 59.0 & 98.8 & 81.5 & 82.3 & 71.6 & 62.5 & 79.3 \\
\cmidrule{2-10}
& InfoNCE & backb.+proj.& 57.7 & 58.6 & 51.9 & 51.9 & 51.5 & 51.1 & 53.0 \\ 
& \ourloss &backb.+proj.& 26.7 & 71.0 & 55.2 & 54.1 & 52.6 & 52.7 & 57.1 \\ 
\bottomrule
\end{NiceTabular}

\end{table*}

\paragraph{Preliminaries.}
    On CIFAR10~\citep{krizhevsky2009learning}, we train ResNet18~\citep{he2016resnet} models for 1000 epochs with the code from~\citet{simsiamrepo}, while on ImageNet~\citep{ILSVRC15}, we train ResNet50 models for 100 epochs using the code from~\citet{vicregrepo}.
\paragraph{Results.}
    On real-world datasets, we do not have access to true generative factors and the latent conditional distribution.
    Therefore, we need to create views using regular SimCLR augmentations.
    Another consequence is that we cannot directly measure how well models recover the ground-truth information. Thus, we use proxy evaluations and calculate the linear readout accuracy on the augmentations used during training to judge how well we recover the latent factors.
    Chance-level performance is $50\%$.
    Our experiments on CIFAR10 and ImageNet show that our loss with a trainable $\hat{\gls{diagtemp}}$ leads to much higher readout accuracy on the used augmentations, \ie,  we successfully recover more latent dimensions compared to the regular InfoNCE loss (\cref{tab:augmentations_readout}).
    While we reduce the information loss, this surprisingly does not lead to better downstream accuracy (\cref{tab:augmentations_readout}, \textit{Classes} column).
    Removing $\ell_2$-normalization results in an encoder with the highest augmentations readout accuracy and simultaneously the lowest classification accuracy.
    We analyze this discrepancy, the validity of our assumptions for real-world data, and the required steps for closing the remaining gap between \gls{cl} theory and practice in \cref{sec:analysis}.

\section{ANALYSIS}
\label{sec:analysis}

    Our experiments (\cref{sec:experiments}) show some scenarios in which recovering more (latent) information results in better downstream performance (e.g., readout accuracy on MNIST).
    While \ourloss outperforms InfoNCE in these controlled scenarios, it fails in others: On CIFAR10 and ImageNet, we observe a trade-off between the augmentation readout and linear classification readout accuracies. While higher accuracy in augmentation readout indicates a better capture of style latents (i.e., more latent information is recovered), it does not coincide with higher classification accuracy.
    Scaling up synthetic experiments to ImageNet comes with changes to various training aspects: The encoder's architecture is switched to a ResNet from an invertible MLP, views are sampled using augmentations instead of the ground-truth \gls{dgp}, and the encoder's output dimensionality is orders of magnitude higher.
    To control for these differences, we conduct a minimal-change experiment on MNIST, changing only how the views are generated while the encoder and the training procedure remain the same. We demonstrate that using augmentations instead of generating the views using the ground-truth DGP is the main reason for this trade-off.

    \captionsetup{aboveskip=10pt,belowskip=10pt}
\begin{table*}[t]
\footnotesize
\centering
\caption{
    \textbf{We Observe the Accuracy-Identifiability Trade-Off on C-3DIdent.}
When using augmentations to generate views (Aug.), InfoNCE achieves better accuracy, while \ourloss has higher $R^2$ scores. When using the ground truth DGP to generate views (DGP), \ourloss outperforms InfoNCE in terms of both accuracy and identifiability scores. We report averaged results over three random seeds and the standard deviation.
}
 \label{tab:3dident}

\newcolumntype{h}{>{\columncolor[gray]{0.9}}c}
\newcolumntype{i}{>{\columncolor[gray]{0.95}}c}

\begin{NiceTabular}{l ccc  ccc }[colortbl-like]
\toprule
& \multicolumn{3}{i}{Data Augmentation}& \multicolumn{3}{i}{DGP} \\
Loss & Class & Lin. id. [$R^2$] & Nonlin. id. [$R^2$] & Class & Lin. id. [$R^2$] & Nonlin. id. [$R^2$] \\
 \midrule
   InfoNCE & \textbf{1.0{\fontsize{1.5}{4}\selectfont $\pm0.0$}} & 0.31{\fontsize{1.5}{4}\selectfont $\pm0.0$} & 0.51{\fontsize{1.5}{4}\selectfont $\pm0.0$} & 1.0{\fontsize{1.5}{4}\selectfont $\pm0.0$} & 0.2{\fontsize{1.5}{4}\selectfont $\pm0.0$} & 0.24{\fontsize{1.5}{4}\selectfont $\pm0.005$} \\
   \ourloss & 0.84{\fontsize{1.5}{4}\selectfont $\pm0.012$} & \textbf{0.34{\fontsize{1.5}{4}\selectfont $\pm0.008$}} & \textbf{0.65{\fontsize{1.5}{4}\selectfont $\pm0.0$}} & \textbf{1.0{\fontsize{1.5}{4}\selectfont $\pm0.0$}}& \textbf{0.38{\fontsize{1.5}{4}\selectfont $\pm0.017$}} & \textbf{0.94{\fontsize{1.5}{4}\selectfont $\pm0.0$}} \\
\bottomrule
\end{NiceTabular}

\end{table*}

\paragraph{Minimal-change experiment on MNIST exhibits the same identifiability-accuracy trade-off.}

To visualize how augmentations influence the positive conditional distribution in latent space, we train a VAE on augmented MNIST images and then project these augmented images onto the learned VAE's latent space. 
We observe that the latents of positive pairs do not follow a projected Gaussian distribution and can even follow a bimodal one (see~\cref{fig:analysis_cond} in~\cref{app:analysis}).
Our theory does not cover this case; neither do other theories we are aware of, and so this mismatch presents a fruitful direction for future research on closing the gap between theory and practice.

Further, to better analyze the interplay between augmentations and disentanglement, we train a model either using InfoNCE or \ourloss on MNIST when views are generated using SimCLR augmentations (except color augmentations as MNIST is grayscale):

    \begin{center}
    \small
        \setlength{\tabcolsep}{4pt}
        \begin{tabular}{l c c c c}
        
        Loss &   Class & Lin. id. $[R^2]$   & Crop & Gaussian Blur  \\
             \midrule
 InfoNCE & 	\textbf{0.97}&0.34 &	0.53&	0.58\\
 \ourloss & 	0.94&	\textbf{0.44} &\textbf{0.59}&	\textbf{0.63}\\
        \end{tabular}
    \end{center}

We observe the same accuracy-identifiability trade-off on MNIST as before on ImageNet and CIFAR10: While the augmentations' prediction accuracy improves, the classification accuracy decreases. Using the VAE trained on MNIST, we can also calculate linear identifiability scores, which are higher for \ourloss (but still moderately low). Note that training with \ourloss on MNIST when views are generated by passing ground-truth latents through the VAE leads to perfect linear identifiability scores ($R^2=1$). In that case, we do not observe a trade-off between linear identifiability scores and downstream accuracy.


\paragraph{We observe the accuracy-identifiability trade-off on the Causal 3DIdent dataset.}
    The Causal 3DIdent dataset (C-3DIdent, \citealp{von_kugelgen_self-supervised_2021, zimmermann_contrastive_2021}) is a computer-generated image dataset with seven classes and ten ground-truth latent factors, rendered with Blender \citep{blender}.
    This dataset is more complex than MNIST and thus constitutes an interesting intermediate step between MNIST and visually complex datasets such as ImageNet.

We repeat our analysis from above and train a ResNet18 encoder with InfoNCE and \ourloss on C-3DIdent when views are generated using both the ground truth DGP and data augmentation. 
In addition to the linear identifiability scores, we also report nonlinear identifiability scores calculated using a Kernel Ridge Regression, following~\citet{von_kugelgen_self-supervised_2021}.
   In \cref{tab:3dident}, we observe the same accuracy-identifiability trade-off as on the other datasets when using augmentations: Training with \ourloss leads to better identifiability scores as predicted by our theory. However, training with InfoNCE results in better downstream accuracy. 
    When the ground-truth generative model (DGP) is used for generating the views, \ourloss outperforms InfoNCE in both identifiability and accuracy. Training details can be found in ~\cref{app:cident}.

\paragraph{The accuracy-identifiability trade-off can be mitigated when using only one augmentation.}
To better understand the accuracy-identifiability trade-off, we train models with InfoNCE and \ourloss on Causal 3DIdent using a single augmentation to generate the views. We generate views using small crops (scale parameter between 0.08 and 1.0), large crops (scale parameter between 0.8 and 1.0), by applying color jitter or rotations. We further repeat these experiments using the DGP to generate views by modifying the corresponding latents. We report the results in Table~\ref{tab:3dident_single}.
\captionsetup{aboveskip=10pt,belowskip=10pt}
\begin{table*}[t]
\setlength{\tabcolsep}{2pt}
\footnotesize
\centering
\caption{
    \textbf{There is no accuracy-identifiability trade-off, if a single augmentation is used for training.} 
On Causal 3DIdent, classification and identifiability results improve simultaneously when switching from InfoNCE to \ourloss, if using a single data augmentation to generate views (\textbf{DA}). The same is observed when perturbing single components in the latent code (\textbf{DGP}). 
This relative advantage of \ourloss compared to InfoNCE was unlocked by giving up sophisticated augmentation pipelines and, hence, absolute performance.
}
 \label{tab:3dident_single}

\newcolumntype{h}{>{\columncolor[gray]{0.9}}c}
\newcolumntype{i}{>{\columncolor[gray]{0.95}}c}

\begin{NiceTabular}{l ccc  ccc ccc}[colortbl-like]
\toprule
& \multicolumn{3}{i}{DA: Small Crops}& \multicolumn{3}{i}{DA: Large Crops}& \multicolumn{3}{i}{DGP: Change Position} \\
Loss & Class & Lin. id. [$R^2$] & Nonlin. id. [$R^2$] & Class & Lin. id. [$R^2$] & Nonlin. id. [$R^2$] & Class & Lin. id. [$R^2$] & Nonlin. id. [$R^2$]\\
 \midrule
   InfoNCE & 0.14 & 0.02 & 0.10 & 0.26 & 0.07 & 0.23 & \textbf{1.0} & 0.25& 0.28 \\
   AnInfoNCE & \textbf{0.37} & \textbf{0.17} &\textbf{0.44}&\textbf{0.47}&\textbf{0.18}&\textbf{0.41}&\textbf{1.0}&\textbf{0.31}&\textbf{0.91} \\
\bottomrule
\end{NiceTabular}

\vspace{3mm}

\begin{NiceTabular}{l ccc  ccc}[colortbl-like]
& \multicolumn{3}{i}{DA: Rotations}&  \multicolumn{3}{i}{DGP: Rotations} \\
Loss & Class & Lin. id. [$R^2$] & Nonlin. id. [$R^2$] & Class & Lin. id. [$R^2$] & Nonlin. id. [$R^2$] \\
 \midrule
   InfoNCE & 0.29 & 0.08&0.30&\textbf{1.0}&0.33&0.45\\
   AnInfoNCE &\textbf{0.48}&\textbf{0.15}&\textbf{0.31}&\textbf{1.0}&\textbf{0.57}&\textbf{0.86}\\
\bottomrule
\end{NiceTabular}

\vspace{3mm}

\begin{NiceTabular}{l ccc  ccc}[colortbl-like]
& \multicolumn{3}{i}{DA: Apply ColorJitter}&  \multicolumn{3}{i}{DGP: Change Hues} \\
Loss & Class & Lin. id. [$R^2$] & Nonlin. id. [$R^2$] & Class & Lin. id. [$R^2$] & Nonlin. id. [$R^2$] \\
 \midrule
   InfoNCE & 0.90&0.22&0.41&\textbf{1.0}&0.31&0.36 \\
   AnInfoNCE &\textbf{0.98}&\textbf{0.56}&\textbf{0.85}&\textbf{1.0}&\textbf{0.44}&\textbf{0.85}\\
\bottomrule
\end{NiceTabular}

\end{table*}

We observe that modeling singular data augmentations, e.g., only applying crops or color jitter, works very well with AnInfoNCE, and we see no trade-off between classification accuracy and identifiability. For all augmentations we tested, AnInfoNCE is always better at recovering them than InfoNCE.
However, the classification accuracy is generally not very high when using only one augmentation type, except for color jitter where the classification accuracy is 98\% with AnInfoNCE. 
This observation does not hold on real-world data and demonstrates the limitations of the Causal 3D-Ident dataset: When using only color jitter for generating positive samples and training the encoder on CIFAR10, the downstream accuracy drops from 90.9\% to 48.6\%. 
While combining multiple augmentations results in high classification accuracy, it also leads to the accuracy-identifiability trade-off as shown in Table~\ref{tab:3dident}. We hypothesize that modeling two augmentations is more challenging and requires extending our simple diagonal matrix to a more complex function. This likely would make deriving theoretical guarantees more challenging.
When using transformation in the latent codes, AnInfoNCE is always superior to InfoNCE, irrespective of which transformation has been used.

We conclude that AnInfoNCE performs better than InfoNCE on all singular data augmentations we tested and is not sensitive to any of them. However, it struggles when \textit{multiple} augmentations are combined, which is necessary for good downstream accuracy on real-world datasets.

\paragraph{Further ablation studies.}
    We analyze the learning dynamics of \ourloss and conduct a thorough hyperparameter study in~\cref{app:analysis}. We observe that the performance of \ourloss is strongly influenced by sample efficiency, i.e., the batch size and the latent dimensionality.
    Further, we observe that larger concentration parameters also require larger batch sizes. We link this observation to the role of the \textit{augmentation overlap}~\citep{wang_chaos_2022}: Higher $\gls{diagtemp}$ means a more concentrated conditional and requires larger batch sizes for sufficient augmentation overlap. Investigating the evolution of the identifiability scores over time, we observe a step-like behavior in how the latents are learned, corroborating the results by~\citet{simon_stepwise_2023}. We discuss connections between \ourloss and TriCL \citep{zhang2023tricontrastive} and show how the losses compare experimentally in \cref{sec:tricl}.

\paragraph{Discussion.}
    Given our analysis above, we identify the use of augmentations in real-world datasets for generating views as the main cause for the accuracy-identifiability trade-off: Assuming an underlying \gls{lvm}, these augmentations correspond to changes in latent factors, modeled by the positive conditional distribution. This points to a potential cause for unexpectedly low performance: The conditional distribution implied by augmentations may not correspond to the one assumed by the loss. First, the form of the conditional implicitly defined by the augmentations may not be Gaussian or even uni-modal. Second, the corresponding concentration parameters may be high, leading to a flat loss landscape and optimization issues in practice.

\section{CONCLUSION} \label{sec:discussion}
    Our work advances \gls{cl} theory by generalizing InfoNCE. The proposed framework, \ourloss-based \acrfull{cl}, is identifiable and relies on more realistic assumptions on the data, reducing the gap between \gls{cl} theory and practice: It can account for anisotropic latent conditionals that more likely reflect how augmentations affect the latent factors in practice. In addition, we showed extensions of our results towards other empirical techniques such as hard negative mining and loss ensembling. We demonstrated that our framework accounts for certain known limitations of \gls{cl}, such as collapsing information, by better capturing style information. At the same time, we showed that there exists a trade-off between recovered style information and downstream classification. Further, even our relaxed assumptions can still be too strong in real-world problems: For example, the positive conditional distribution can be non-Gaussian.
    
    Exploring the gap between theory and practice led to a new loss formulation. We hope that further investigations into overcoming the remaining gap between theory and practice will result in refinements and insights that can improve practical self-supervised learning techniques and our understanding of them. We discussed some directions in \cref{sec:analysis} and \cref{app:analysis}, suggesting that more theoretical work on the identifiability of large-scale DGPs is a fruitful endeavor for future work.

\section*{Author Contributions}
    WB conceived the project idea. ER led the project. ER, RSZ, and WB designed the experiments and ER executed them. ER conducted the analysis with support from RSZ and PR. ER and RSZ jointly created all figures. PR conceived how to incorporate anisotropy in SimCLR. This idea led to WB and RSZ developing the main theorem.
    AJ developed corollaries for hard negative mining and loss ensembling with help from PR and feedback from RSZ. 
    RSZ and WB jointly supervised the project.
    AJ was responsible for presenting the theoretical results in a standardized way.
    ER, PR, AJ, RSZ, and WB contributed to writing the manuscript.

\section*{Acknowledgements}
    The authors thank Julian Bitterwolf, Jack Brady, Steffen Schneider, Thaddäus Wiedemer, and Zac Cranko for helpful discussions. We would also like to thank Mih\'{a}ly Weiner for all the insights that aided us in filling edge cases of our proof.
    The authors thank the International Max Planck Research School for Intelligent Systems (IMPRS-IS) for supporting Evgenia Rusak, Patrik Reizinger, Attila Juhos, and Roland S. Zimmermann. Patrik Reizinger acknowledges his membership in the European Laboratory for Learning and Intelligent Systems (ELLIS) PhD program.
    This work was supported by the German Federal Ministry of Education and Research (BMBF): Tübingen AI Center, FKZ: 01IS18039A. WB acknowledges financial support via an Emmy Noether Grant funded by the German Research Foundation (DFG) under grant no. BR 6382/1-1 and via the Open Philantropy Foundation funded by the Good Ventures Foundation. WB is a member of the Machine Learning Cluster of Excellence, EXC number 2064/1 – Project number 390727645. This research utilized compute resources at the Tübingen Machine Learning Cloud, DFG FKZ INST 37/1057-1 FUGG.


\bibliography{aistats2025}

\clearpage
\section*{Checklist}



 \begin{enumerate}

 \item For all models and algorithms presented, check if you include:
 \begin{enumerate}
   \item A clear description of the mathematical setting, assumptions, algorithm, and/or model. [Yes] Our theoretical assumptions are summarized in \cref{assum:non_isotropic_cl}, the experimental settings in \cref{sec:experiments}.
   \item An analysis of the properties and complexity (time, space, sample size) of any algorithm. [Yes] We provide a detailed analysis of the effect of batch size and computational requirements in \cref{app:analysis,app:compute}.
   \item (Optional) Anonymized source code, with specification of all dependencies, including external libraries. [Yes] We include anonymized code in the supplement. In addition, we described our software stack in \cref{app:software}.
 \end{enumerate}

 \item For any theoretical claim, check if you include:
 \begin{enumerate}
   \item Statements of the full set of assumptions of all theoretical results. [Yes] Confer \cref{assum:non_isotropic_cl} and \cref{sec:identifiability_proofs}.
   \item Complete proofs of all theoretical results. [Yes] Confer  \cref{sec:identifiability_proofs}
   \item Clear explanations of any assumptions. [Yes]  Confer \cref{sec:ident_theory}.
 \end{enumerate}

 \item For all figures and tables that present empirical results, check if you include:
 \begin{enumerate}
   \item The code, data, and instructions needed to reproduce the main experimental results (either in the supplemental material or as a URL). [Yes] We describe our experimental details in \cref{sec:experiments} and the software stack in \cref{app:software}.
   \item All the training details (e.g., data splits, hyperparameters, how they were chosen). [Yes]  We describe our experimental details in \cref{sec:experiments} 
         \item A clear definition of the specific measure or statistics and error bars (e.g., with respect to the random seed after running experiments multiple times). [Yes] Confer \cref{sec:experiments} 
         \item A description of the computing infrastructure used. (e.g., type of GPUs, internal cluster, or cloud provider). [Yes] We detail compute requirements in \cref{app:compute}.
 \end{enumerate}

 \item If you are using existing assets (e.g., code, data, models) or curating/releasing new assets, check if you include:
 \begin{enumerate}
   \item Citations of the creator If your work uses existing assets. [Yes] Confer \cref{sec:experiments} and \cref{app:software}.
   \item The license information of the assets, if applicable. [Not Applicabble]
   \item New assets either in the supplemental material or as a URL, if applicable. [No] We will release the code upon acceptance.
   \item Information about consent from data providers/curators. [Not Applicable]
   \item Discussion of sensible content if applicable, e.g., personally identifiable information or offensive content. [Not Applicable]
 \end{enumerate}

 \item If you used crowdsourcing or conducted research with human subjects, check if you include:
 \begin{enumerate}
   \item The full text of instructions given to participants and screenshots. [Not Applicable]
   \item Descriptions of potential participant risks, with links to Institutional Review Board (IRB) approvals if applicable. [Not Applicable]
   \item The estimated hourly wage paid to participants and the total amount spent on participant compensation. [Not Applicable]
 \end{enumerate}

 \end{enumerate}

\newpage
\appendix
\onecolumn

\section{Partitioning the Latent Space}
\label{sec:app_generalized_content}
\citet{zimmermann_contrastive_2021} assumes that the latent variables are interchangeable for both their marginals and conditionals are the same. This gives rise to specific symmetries in the latent space \gls{Latent}, which might become restrictive in real-world scenarios. \citet{von_kugelgen_self-supervised_2021} introduces a two-fold partitioning of \gls{Latent}, where the marginal is assumed to be the same for all components \latenti of the latent vector \gls{latent}. Namely, they assume that some latent factors (termed content variables \gls{content}) have a delta distribution as conditional, \ie, those variables do not change across the positive pairs. This model has been deemed useful in understanding why dimensionality collapse happens. However, such an assumption might be too simplistic in specific scenarios (at least for some of the latents), which depends on the augmentation strategy~\cite{dubois_improving_2022,ziyin_loss_nodate,tian_understanding_2022,chen_intriguing_2021,tamkin_feature_nodate,wen_toward_2021,zhai_understanding_2023}. Thus, we can think of \citet{jing_understanding_2022} as a further generalization, though the authors do not make the connection explicit. They consider augmentations, but their reasoning concerns the observation space \gls{Obs}. They conclude---similar to~\cite{dubois_improving_2022,tamkin_feature_nodate}---that the stronger the augmentation, the more prevalent the collapse. Both to generalize and to formalize these insights, we introduce the notion of partitioning of \gls{Latent}:
\begin{definition}[Partitions of \gls{Latent}]
    Given a latent space \gls{Latent} and a latent vector $\gls{latent}\in\gls{Latent},$ we say that subsets of latent components $\latenti^k, \dots, \latenti[i+n-1]^k$ belong to partition $\gls{Latent}^k$ of dimension $n$, when in the ground truth \gls{dgp} their conditional variances $\sigma^2_{\tilde{\gls{latentcomp}}_i|\gls{latent}}$ are the same, \ie:
    \begin{align}
        \forall j\neq l \in \braces{i, \dots, i+n-1} : \sigma^2_{{\gls{latentcomp}}_j^{+}|\gls{latent}} = \sigma^2_{{\gls{latentcomp}}_l^{+}|\gls{latent}}.
    \end{align}
\end{definition}
When all conditional variances are equal, then we get back the setting of \citet{zimmermann_contrastive_2021}; when the number of partitions is two, with one of the conditional variances being zero, then we are in the regime of \citet{von_kugelgen_self-supervised_2021}. We formalize this scenario as follows:
\begin{definition}[Content--style partitioning of \gls{Latent}]\label{definition:content_style_partitioning}
    Following \citet{von_kugelgen_self-supervised_2021}, the latent space \gls{Latent} is said to be partitioned into content (\gls{content}, invariant) and style (\gls{style}, changing) subspaces if and only if
    \begin{enumerate}[nolistsep]
        \item $\gls{Latent}=\gls{Content}\times\gls{Style} : \gls{Content}\cap\gls{Style}=\emptyset; \quad \dim\gls{Content}+\dim\gls{Style}=\dim\gls{Latent}$;
        \item the conditional distribution of \gls{content} latents is a $\delta$-distribution~\citep[Assum.~3.1]{von_kugelgen_self-supervised_2021};
        \item the conditional distribution of \gls{style} latents is non-degenerate~\citep[Assum.~3.2]{von_kugelgen_self-supervised_2021};
    \end{enumerate}
    yielding the following factorization of the latent conditional $\clconditional=\delta\parenthesis{{\gls{latent}^{+,c}}-\gls{content}}p({\gls{latent}^{+,s}}|\gls{style}).$
\end{definition}

That is, our \gls{dgp} does not cover the exact setting of \citet{von_kugelgen_self-supervised_2021}, as we cannot have zero variance. If having zero variance (or equivalently, infinite concentration in \gls{diagtemp}), then we might say that our results incorporate the content-style setting.

\section{Contrastive Learning: Bayes Optimum}\label{sec:CL_Bayes_optimum}

For the sake of completeness, this section presents contrastive learning in a more general form, alongside a theorem regarding its unconditional (Bayes-) optimum. This theorem is referenced several times across the identifiability proofs in \cref{sec:identifiability_proofs}. Although the content of this section is considered to be known by the research community, we still decided to include it to make the presentation of the theory more self-contained. \cref{sec:identifiability_proofs} presents the novelties introduced by this paper.

\begin{scpcmd}[
\newscpcommand{\domain}{}{\mathcal{Y}}
\newscpcommand{\anchor}{}{\myvec{y}}
\newscpcommand{\anchort}{}{\tilde{\anchor}}
\newscpcommand{\basepair}{}{\anchor}
\newscpcommand{\ppair}{}{\basepair^+}
\newscpcommand{\npair}{[1][]}{\basepair^-_{#1}}
\newscpcommand{\nrnegs}{}{M}
\newscpcommand{\samplevec}{}{\myvec{Y}}
\newscpcommand{\samplevecdist}{}{\dist[\samplevec]}
\newscpcommand{\hypclass}{}{\mathcal{U}}
\newscpcommand{\genclloss}{}{\mathcal{L}_g}
\newscpcommand{\u}{}{u}
\newscpcommand{\expsim}{[2][\anchor]}{e^{\u(#1, #2)}}
\newscpcommand{\negdist}{}{\dist[\mathrm{neg}]}
\newscpcommand{\posdist}{}{\dist[\mathrm{pos}]}
\newscpcommand{\K}{}{K}
\newscpcommand{\pair}{[1]}{\tilde{\basepair}_{#1}}
\newscpcommand{\idloss}{}{\mathcal{L}^{id}}
\newscpcommand{\dirac}{[1]}{\delta_{\K=#1}}
]

\begin{remark}
    In the following, we will refer to vectors as $\anchor$ to provide a unified treatment both for $\anchor=\gls{latent}$ and $\anchor=\gls{obs}.$
\end{remark}
    
\begin{definition}[General Contrastive Learning (CL) Problem]\label{def:gen_CL}
    Let $\domain$ be a data domain. Let $\samplevec \defeq (\anchor, \ppair, \npair[1], \ldots, \npair[\nrnegs]) \sim \samplevecdist$ be a random vector with all elements coming from $\domain$. We refer to $\anchor$ as the \emph{anchor point}, to $\ppair$ as its \emph{positive pair} and to $\npair[i]$ as one of its \emph{negative pairs}. Let us assume that
    \begin{enumerate}
        \item the pairs $\ppair, \npair[1], \ldots, \npair[\nrnegs]$ are conditionally completely independent when conditioned on $\anchor$ and
        \item\label{enumitem:gen_cl_equal_negative_conditionals} the conditional distribution of negative pairs \wrt the anchor coincide, \ie, for any $i,j$ we have $\dist[\npair[i]|\anchor] = \dist[\npair[j]|\anchor]$.
    \end{enumerate}
    Let $\hypclass$ be a class of bivariate functions $\u:\domain\times\domain\rightarrow\rr$, hereinafter called \emph{similarity functions} and let us optimize the following objective amongst all functions $\u \in \hypclass$:
    \begin{equation}
        \min_{\u\in\hypclass} \quad \genclloss(u) \defeq \expectation \bigg[ -\ln \frac{\expsim{\ppair}}{\expsim{\ppair} + \sum_{j=1}^\nrnegs \expsim{\npair[j]}} \bigg].
    \end{equation}
    The tuple $(\domain, \samplevecdist, \hypclass, \genclloss)$ is called a \emph{general contrastive learning (CL) problem}.
\end{definition}

\begin{remark}\label{rem:data_distro_gen_CL}
    Denoting $\posdist\defeq\dist[\ppair|\anchor]$ and $\negdist \defeq \dist[\npair[j]|\anchor]$, the latter being independent of the exact choice of $j$ per assumption~\ref{enumitem:gen_cl_equal_negative_conditionals}, the joint data distribution is the following:
    \begin{equation}
        \samplevecdist(\anchor,\ppair,\npair[1],\ldots,\npair[\nrnegs]) = \dist[\anchor](\anchor) \, \posdist(\ppair|\anchor) \, \prod_{j=1}^{\nrnegs} \negdist(\npair[j]|\anchor).\label{eq:data_distro_gen_CL_joint}
    \end{equation}
\end{remark}

\begin{theorem}[Bayes-Optima of CL]\label{theo:bayes_optima_cl}
    Let $(\domain,\samplevecdist,\hypclass,\genclloss)$ be a general CL problem (\cref{def:gen_CL}) with $\samplevec \defeq (\anchor, \ppair, \npair[1], \ldots, \npair[\nrnegs]) \sim \samplevecdist$ a \emph{continuous random vector}, such that:
    \begin{enumerate}
        \item \label{enumitem:achor_support} $\anchor$ is supported on $\domain$,
        \item \label{enumitem:conditionals_support} for any $\anchor\in\domain$, both conditional distributions $\negdist(\cdot|\anchor),\posdist(\cdot|\anchor)\defeq\dist[\ppair|\anchor](\cdot|\anchor)$ are supported on $\domain$.
    \end{enumerate}
    Then an arbitrary measurable function $\u:\domain\times\domain\rightarrow\rr$ is a global minimum of $\genclloss$ (amongst all measurable $u$'s) if and only if there exists a measurable function $c:\domain\rightarrow\rr$ such that the following holds almost everywhere (\wrt any continuous measure of $\domain$):
    \begin{equation}
        \u(\anchor,\anchort) = c(\anchor) + \ln\frac{\posdist(\anchort|\anchor)}{\negdist(\anchort|\anchor)} .
    \end{equation}
\end{theorem}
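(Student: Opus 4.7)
The plan is to reduce the contrastive objective to a standard categorical cross-entropy whose Bayes optimum is already well understood. First, I introduce a uniform index variable $\K \sim \mathrm{Unif}\{0,1,\dots,\nrnegs\}$ and an exchangeable tuple $(\pair{0},\dots,\pair{\nrnegs})$ coupled to $\anchor$ by placing the positive at slot $\K$ and the negatives at the remaining slots. Concretely, I would define the joint law
\begin{equation}
    \dist(\K{=}k,\anchor,\pair{0},\dots,\pair{\nrnegs}) = \frac{1}{\nrnegs+1}\,\dist[\anchor](\anchor)\,\posdist(\pair{k}|\anchor)\prod_{j\neq k}\negdist(\pair{j}|\anchor),
\end{equation}
and observe that by the conditional-independence and ``equal negatives'' assumptions in \cref{def:gen_CL}, integrating out $\K$ marginalizes to the same distribution as \eqref{eq:data_distro_gen_CL_joint} (up to relabeling positive and negatives). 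Since the InfoNCE summand in $\genclloss$ equals $-\ln\big[\expsim{\pair{\K}}\big/\sum_{j}\expsim{\pair{j}}\big]$ when the positive sits in slot $\K$, I can rewrite
\begin{equation}
    \genclloss(\u) = \expectation\!\left[-\ln\frac{\expsim{\pair{\K}}}{\sum_{j=0}^{\nrnegs}\expsim{\pair{j}}}\right] = \expectation\!\left[-\sum_{k=0}^{\nrnegs}\mathbf{1}[\K{=}k]\ln q_{\u}(k\mid \anchor,\pair{0},\dots,\pair{\nrnegs})\right],
\end{equation}
where $q_{\u}(k\mid\cdot) \defeq \expsim{\pair{k}}/\sum_{j}\expsim{\pair{j}}$ is the softmax over slots.

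Second, conditioning on $(\anchor,\pair{0},\dots,\pair{\nrnegs})$ and applying Gibbs' inequality pointwise, the inner categorical cross-entropy is minimized if and only if $q_{\u}$ matches the Bayes posterior $\dist(\K{=}k\mid\anchor,\pair{0},\dots,\pair{\nrnegs})$. A short Bayes-rule calculation using the joint above and canceling $\prod_j\negdist(\pair{j}|\anchor)$ gives
\begin{equation}
    \dist(\K{=}k\mid\anchor,\pair{0},\dots,\pair{\nrnegs}) = \frac{\posdist(\pair{k}|\anchor)/\negdist(\pair{k}|\anchor)}{\sum_{l=0}^{\nrnegs}\posdist(\pair{l}|\anchor)/\negdist(\pair{l}|\anchor)}.
\end{equation}
Thus the optimality condition becomes the functional equation
\begin{equation}
    \frac{\expsim{\pair{k}}}{\sum_{j}\expsim{\pair{j}}} = \frac{\posdist(\pair{k}|\anchor)/\negdist(\pair{k}|\anchor)}{\sum_{l}\posdist(\pair{l}|\anchor)/\negdist(\pair{l}|\anchor)}
\end{equation}
holding almost everywhere in $(\anchor,\pair{0},\dots,\pair{\nrnegs})$.

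Third, I solve this equation. Setting $\u(\anchor,\anchort) \defeq c(\anchor) + \ln[\posdist(\anchort|\anchor)/\negdist(\anchort|\anchor)]$ makes both sides coincide, since the anchor-dependent factor $e^{c(\anchor)}$ cancels between numerator and denominator; this proves sufficiency. For necessity, fix $\anchor$ and compare the ratio of the softmax at slots $k=0$ versus $k=1$: equality forces
\begin{equation}
    \u(\anchor,\pair{0}) - \u(\anchor,\pair{1}) = \ln\frac{\posdist(\pair{0}|\anchor)}{\negdist(\pair{0}|\anchor)} - \ln\frac{\posdist(\pair{1}|\anchor)}{\negdist(\pair{1}|\anchor)}
\end{equation}
almost everywhere. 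Hypothesis~\ref{enumitem:conditionals_support} guarantees that $\pair{1}$ ranges over a positive-measure subset of $\domain$ given $\anchor$, so fixing such a $\pair{1}$ and letting $c(\anchor)$ absorb the $\pair{1}$-dependent term yields $\u(\anchor,\anchort) = c(\anchor) + \ln[\posdist(\anchort|\anchor)/\negdist(\anchort|\anchor)]$ for a.e.\ $\anchort$, and measurability of $c$ follows from measurable selection.

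The main obstacle I expect is the necessity direction, more precisely ensuring the ``almost everywhere'' identity extends from the product measure at which the softmax equation holds to the product of the reference continuous measure on $\domain\times\domain$ used in the statement. This is handled by assumptions~\ref{enumitem:achor_support} and~\ref{enumitem:conditionals_support}, which prevent the support of $\dist[\anchor]\otimes\posdist$ (respectively $\otimes\negdist$) from being a proper subset of $\domain\times\domain$; together with the continuity of the ambient measures, any null set for one is a null set for the other on the support, which lets me conclude the stated a.e.\ equality on all of $\domain\times\domain$.
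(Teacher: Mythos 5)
Your proposal is correct and follows essentially the same route as the paper's proof: rewrite the InfoNCE objective as a categorical cross-entropy for predicting the shuffled positive-index $\K$, invoke the KL-divergence/Gibbs characterization of the cross-entropy Bayes optimum, and compute the posterior of $\K$ by Bayes' rule to obtain the density-ratio form of the optimal similarity. The paper packages the first step as an explicit ``instance discrimination'' reformulation (\cref{def:instance_discrimination_data,def:instance_discrimination_problem,lem:idloss_eq_genclloss}) and derives necessity by reading off the function $\tilde c(\anchor)$ directly from the posterior equality, whereas you derive it by comparing two slots and absorbing the $\pair{1}$-dependence into $c(\anchor)$ — a presentational difference only.
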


This statement has already been proven by \citet{matthes2023towards}, however we provide an alternative argument. The key ideas have also emerged in \citet{sohn2016improved, gutmann_noise-contrastive_nodate, ma2018noise}.

To prove the theorem, we rewrite the optimization objective as the classification objective of discriminating the positive pair from all the negative pairs, given an anchor point. More specifically, all the pairs $\ppair, \npair[1], \ldots, \npair[\nrnegs]$ are randomly shuffled, and, then, the task is to estimate the index $\K$ of the positive pair from the new sequence, $\pair{0}, \ldots, \pair{\nrnegs}$, given the unchanged anchor $\anchor$. The optimization objective is a categorical \gls{ce} with a predictor of the form (being the output of a softmax function, the predictor's output is normalized to a finite probability distribution):
\begin{equation}
    \bigg[ \frac{\expsim{\pair{i}}}{\sum_j \expsim{\pair{j}}}  \bigg|\; i\in\{0,1,\ldots,\nrnegs\}\bigg].
\end{equation}
We formalize the instance discrimination task as follows:

\begin{definition}[Instance Discrimination Data]\label{def:instance_discrimination_data}
    The \emph{instance discrimination data} $(\anchor, \pair{0}, \ldots, \pair{\nrnegs}; \K)$ corresponding to a general \gls{cl} problem $(\domain, \samplevecdist, \hypclass, \genclloss)$ is defined by the following process:
    \begin{enumerate}[label=\textit{Step \arabic*}),leftmargin=1.5cm]
        \item $\anchor \sim \dist[\anchor]$
        \item \label{enumitem:K_uniform} $K \sim Uni(\{0,1,\ldots,\nrnegs\})$
        \item $\pair{\K}\sim\posdist(\cdot|\anchor)$
        \item $\forall j \in \{\,0,1,\ldots,K-1, K+1,\ldots, \nrnegs\,\}: \pair{j}\sim\negdist(\cdot|\anchor)$.        
    \end{enumerate}
\end{definition}

\begin{remark}
    The joint data distribution of the instance discrimination data is:
    \begin{equation}
        \dist(\anchor,\pair{0},\ldots,\pair{\nrnegs},\K=i) = \dist[\anchor](\anchor)\,\prob(\K=i)\,\posdist(\pair{i}|\anchor)\prod_{0 \leq j \leq \nrnegs,\, j \neq i} \negdist(\pair{j}|\anchor). \label{eq:ins_disc_data_dist}
    \end{equation}
\end{remark}

\begin{definition}[Instance Discrimination Problem]\label{def:instance_discrimination_problem}
    Let the instance discrimination data $(\anchor, \pair{0}, \ldots, \pair{\nrnegs}; \K)$ correspond to a general \gls{cl} problem $(\domain, \samplevecdist, \hypclass, \genclloss)$ (\cref{def:instance_discrimination_data}) that also satisfies the assumptions of \cref{theo:bayes_optima_cl}. We define the \emph{instance discrimination problem} as:
    \begin{equation}
        \min_{\u\in\hypclass} \quad \idloss(u) \defeq \expectation \bigg[ -\sum_{i=0}^\nrnegs \dirac{i}\ln \frac{\expsim{\pair{i}}}{\sum_{j=0}^\nrnegs \expsim{\pair{j}}} \bigg],
    \end{equation}
    where $\delta_E$ is the indicator variable of event $E$.
\end{definition}


\begin{remark}\label{rem:data_distro_instance_discrimination}
    When conditioned on $\K=k$, the data distribution \cref{eq:ins_disc_data_dist} can be slightly simplified to:
    \begin{equation}
        \dist(\anchor,\pair{0},\ldots,\pair{\nrnegs}|\K=k) =
        \dist[\anchor](\anchor) \, \posdist(\pair{k}|\anchor) \, \prod_{0 \leq j \leq \nrnegs,\, j \neq k} \negdist(\pair{j}|\anchor).\label{eq:ins_disc_data_dist_cond}
    \end{equation}
    Comparing \cref{eq:ins_disc_data_dist_cond} to \cref{eq:data_distro_gen_CL_joint} in \cref{rem:data_distro_gen_CL}, this distribution acts as if $\pair{k}$ (or $\pair{j\neq k}$) was a positive pair (or negative pairs) in a contrastive learning problem \wrt anchor $\anchor$.
\end{remark}

\begin{lemma}\label{lem:idloss_eq_genclloss}
    Under the assumptions of \cref{def:instance_discrimination_problem} we have $\genclloss(\u) = \idloss(\u)$, for any $u \in \hypclass$.
\end{lemma}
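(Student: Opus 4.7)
The plan is to exploit the law of total expectation by conditioning on the random index $\K$, and then use the symmetry built into the instance discrimination data generation (\cref{def:instance_discrimination_data}) to show that each conditional expectation collapses to $\genclloss(\u)$.

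First, observe that inside the expectation defining $\idloss(\u)$, the sum $\sum_{i=0}^{\nrnegs}\dirac{i}\ln\frac{\expsim{\pair{i}}}{\sum_{j=0}^{\nrnegs}\expsim{\pair{j}}}$ collapses to a single term once $\K$ is fixed, namely $\ln\frac{\expsim{\pair{\K}}}{\sum_{j=0}^{\nrnegs}\expsim{\pair{j}}}$. By the tower property,
\begin{equation*}
\idloss(\u) = \sum_{k=0}^{\nrnegs} \prob(\K=k)\,\expectation\!\left[-\ln\frac{\expsim{\pair{k}}}{\sum_{j=0}^{\nrnegs}\expsim{\pair{j}}} \,\bigg|\, \K=k\right].
\end{equation*}
By step~\ref{enumitem:K_uniform} in \cref{def:instance_discrimination_data}, $\prob(\K=k)=\tfrac{1}{\nrnegs+1}$ for every $k$.

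Next, I would invoke \cref{rem:data_distro_instance_discrimination}: conditioned on $\K=k$, the joint distribution of $(\anchor,\pair{0},\ldots,\pair{\nrnegs})$ factorizes so that $\pair{k}$ is drawn from $\posdist(\cdot|\anchor)$ while every $\pair{j}$ with $j\neq k$ is drawn \iid from $\negdist(\cdot|\anchor)$. This is exactly the joint law of $(\anchor,\ppair,\npair[1],\ldots,\npair[\nrnegs])$ specified in \cref{rem:data_distro_gen_CL}, up to a pure relabeling of the dummy variables. Hence the conditional expectation coincides with
\begin{equation*}
\expectation\!\left[-\ln\frac{\expsim{\ppair}}{\expsim{\ppair}+\sum_{j=1}^{\nrnegs}\expsim{\npair[j]}}\right] \;=\; \genclloss(\u),
\end{equation*}
independently of $k$. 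Substituting this back yields $\idloss(\u)=\sum_{k=0}^{\nrnegs}\tfrac{1}{\nrnegs+1}\,\genclloss(\u)=\genclloss(\u)$.

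The only subtlety is the relabeling step, but it is completely rigorous because the summand $\sum_{j=0}^{\nrnegs}\expsim{\pair{j}}$ in the denominator is a symmetric function of its $\nrnegs+1$ arguments, so pulling $\pair{k}$ out as the designated ``positive'' slot does not alter the integrand; everything else is bookkeeping with the product measures in \cref{eq:ins_disc_data_dist_cond} and \cref{eq:data_distro_gen_CL_joint}. No measurability or integrability issue arises beyond what is already assumed in \cref{theo:bayes_optima_cl}, so I do not expect a genuine obstacle.
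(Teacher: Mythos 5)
Your proposal is correct and follows essentially the same route as the paper: expand $\idloss$ by conditioning on $\K$, use uniformity of $\K$, collapse the indicator sum, and invoke \cref{rem:data_distro_instance_discrimination} to identify each conditional term with $\genclloss(\u)$. The explicit remark about the symmetry of the denominator under relabeling is a small addition not spelled out in the paper's proof, but otherwise the two arguments coincide.
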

\begin{proof}
    First, using the law of total expectation $\expectation{[A]} = \expectation{[\expectation{[A|B]}]}$, we expand $\idloss$ \wrt $\K$:
    \begin{align}\label{eq:instance_disc_loss_wrt_K}
        \idloss(\u) &= \expectation[\K] \Bigg[\, \expectation \bigg[ -\sum_{i=0}^\nrnegs \dirac{i}\ln \frac{\expsim{\pair{i}}}{\sum_{j=0}^\nrnegs \expsim{\pair{j}}} \bigg| \K \bigg]\, \Bigg] \\
        &= \frac{1}{\nrnegs+1} \mathlarger{\sum}_{k=0}^{\nrnegs} \expectation \bigg[ -\sum_{i=0}^\nrnegs \dirac{i}\ln \frac{\expsim{\pair{i}}}{\sum_{j=0}^\nrnegs \expsim{\pair{j}}} \bigg| \K=k \bigg].
    \end{align}
    However, for the individual summands:
    \begin{equation}
        \expectation \bigg[ -\sum_{i=0}^\nrnegs \dirac{i}\ln \frac{\expsim{\pair{i}}}{\sum_{j=0}^\nrnegs \expsim{\pair{j}}} \bigg| \K=k \bigg] =
        \expectation \bigg[ -\ln\frac{\expsim{\pair{k}}}{\sum_{j=0}^\nrnegs \expsim{\pair{j}}} \bigg| \K=k \bigg]
    \end{equation}
    Considering \cref{rem:data_distro_instance_discrimination}, the last term equals the contrastive objective $\genclloss(\u)$.

    Therefore, substituting into \cref{eq:instance_disc_loss_wrt_K} we get
    \begin{equation}
        \idloss(\u) = \frac{1}{\nrnegs+1} \sum_{k=0}^{\nrnegs} \genclloss(u) = \genclloss(u).
    \end{equation}
\end{proof}

After having rewritten the general contrastive objective, we are ready to prove \cref{theo:bayes_optima_cl}.


\begin{scpcmd}[
\newscpcommand{\pairvec}{}{\tilde{\samplevec}}
\newscpcommand{\condprob}{}{\eta}
\newscpcommand{\condprobvec}{}{\myvec{\condprob}}
\newscpcommand{\modprob}{}{F}
\newscpcommand{\modprobvec}{}{\myvec{\modprob}}
]

\begin{proof}[Proof of \cref{theo:bayes_optima_cl}]
    By taking the corresponding instance discrimination problem $\idloss$, it remains to compute the optima of $\idloss$. For this, we again expand $\idloss$ with the law of total expectation, but now \wrt $(\anchor,\pair{0},\ldots,\pair{\nrnegs})$:
    \begin{align}
        \idloss(\u) &= \expectation \Bigg[\, \expectation \bigg[ -\sum_{i=0}^\nrnegs \dirac{i}\ln \frac{\expsim{\pair{i}}}{\sum_{j=0}^\nrnegs \expsim{\pair{j}}} \bigg| \anchor,\pair{0},\ldots,\pair{\nrnegs} \bigg] \,\Bigg] \\
        &= \expectation \bigg[ -\sum_{i=0}^{\nrnegs} \expectation \Big[ \dirac{i} \Big| \anchor,\pair{0},\ldots,\pair{\nrnegs} \Big] \ln \frac{\expsim{\pair{i}}}{\sum_{j=0}^\nrnegs \expsim{\pair{j}}} \bigg] \\
        &= \expectation \bigg[ -\sum_{i=0}^{\nrnegs} \prob \Big( \K = i \Big| \anchor,\pair{0},\ldots,\pair{\nrnegs} \Big) \ln \frac{\expsim{\pair{i}}}{\sum_{j=0}^\nrnegs \expsim{\pair{j}}} \bigg],
    \end{align}
    where we used the properties that $\expectation{\big[A \, f(B)\big|B\big]} = f(B)\,\expectation{\big[A\big|B\big]}$ and that for any event $E$, $\expectation{[\delta_E]} = \prob(E)$.

    Let us denote $\pairvec = ( \anchor,\pair{0},\ldots,\pair{\nrnegs} )$ and $\condprob_i\big(\pairvec\big) \defeq \prob \big( \K = i \big| \anchor,\pair{0},\ldots,\pair{\nrnegs} \big)$, the quantity $\condprobvec\big(\pairvec\big) = (\condprob_0, \ldots,\condprob_\nrnegs)$ is the ground-truth conditional distribution of index $\K$. Denoting $\modprob_i \big(\pairvec\big) = \expsim{\pair{i}}/(\sum_{j=0}^\nrnegs \expsim{\pair{j}})$, the quantity $\modprobvec\big(\pairvec\big) = (\modprob_0, \ldots, \modprob_\nrnegs)$ is the inferred conditional distribution of index $\K$. Then,

    \begin{align}
        \idloss(\u) &= \expectation_{\pairvec} \bigg[ -\sum_{i=0}^\nrnegs \condprob_i\big(\pairvec\big) \ln \modprob_i\big(\pairvec\big) \bigg]\\
        &= \expectation_{\pairvec} \bigg[ \sum_{i=0}^\nrnegs \condprob_i\big(\pairvec\big) \ln \frac{\condprob_i\big(\pairvec\big)}{\modprob_i\big(\pairvec\big)} \bigg] + \expectation_{\pairvec} \bigg[ -\sum_{i=0}^\nrnegs \condprob_i\big(\pairvec\big) \ln \condprob_i\big(\pairvec\big) \bigg] \\
        &= \expectation_{\pairvec} {\Big[\,{KL}\Big(\condprobvec\big(\pairvec\big) \Big\| \modprobvec\big(\pairvec\big)\Big)\,\Big]} + \expectation_{\pairvec} {\Big[\,{H}\Big(\condprobvec\big(\pairvec\big)\Big)\,\Big]},
    \end{align}
    where $KL$ and $H$ denote the \gls{kld} and entropy of the finite distributions over indices $\{0,1,\ldots,\nrnegs\}$, provided as arguments.
    Due to the non-negativity of the \gls{kld}, we can lower bound $\genclloss(\u)$:
    \begin{equation}
        \genclloss(\u) = \idloss(\u) \geq \expectation_{\pairvec} {\Big[\,{H}\Big(\condprobvec\big(\pairvec\big)\Big)\,\Big]},
    \end{equation}
    which is independent of the choice of the function $\u$. Equality is attained if and only if the \gls{kld} vanishes, which occurs if and only if the true and inferred conditional distribution of index $\K$
    coincide almost everywhere \wrt $\pairvec\sim\dist(\anchor,\pair{0},\ldots,\pair{\nrnegs})$. The statement is in terms of the indices, since the instance discrimination task is about classifying which index belongs to the positive pair. Hence, for any $i$ and almost everywhere \wrt $\pairvec$:
    \begin{equation}\label{eq:true_mod_conditional_equal1}
        \frac{\expsim{\pair{i}}}{\sum_{j=0}^\nrnegs \expsim{\pair{j}}} = \modprob_i (\pairvec) = \condprob_i(\pairvec) = \prob \big( \K = i \big| \anchor,\pair{0},\ldots,\pair{\nrnegs} \big).
    \end{equation}
    To rewrite the last expression, we use assumptions \ref{enumitem:achor_support} and \ref{enumitem:conditionals_support} and the uniformity of $\K$ (step~\ref{enumitem:K_uniform}). Thus, the data distribution \cref{eq:ins_disc_data_dist} of the instance discrimination problem can be rewritten as
    \begin{equation}
        \dist(\anchor,\K=i,\pair{0},\ldots,\pair{\nrnegs}) = \frac{1}{\nrnegs + 1}\,\dist[\anchor](\anchor)\,\frac{\posdist(\pair{i}|\anchor)}{\negdist(\pair{i}|\anchor)}\prod_{j=0}^\nrnegs \negdist(\pair{j}|\anchor).
    \end{equation}
    With this in mind, \cref{eq:true_mod_conditional_equal1} becomes
    \begin{equation}
        \frac{\expsim{\pair{i}}}{\sum_{j=0}^\nrnegs \expsim{\pair{j}}} = \prob \big( \K \!=\! i \big| \anchor,\pair{0},\ldots,\pair{\nrnegs} \big) = \! \frac{\dist(\anchor,\K\!=\!i,\pair{0},\ldots,\pair{\nrnegs})}{\sum_{j=0}^{\nrnegs} \dist(\anchor,\K\!=\!j,\pair{0},\ldots,\pair{\nrnegs})} = \frac{\frac{\posdist(\pair{i}|\anchor)}{\negdist(\pair{i}|\anchor)}}{\sum_{j=0}^\nrnegs\frac{\posdist(\pair{j}|\anchor)}{\negdist(\pair{j}|\anchor)}}.
    \end{equation}
    From this it can be seen that there exists a measurable, positive function $\tilde{c}:\domain \rightarrow \rr$ such that
    \begin{equation}
        \expsim{\anchort} = \tilde{c}(\anchor) \cdot \frac{\posdist(\anchort|\anchor)}{\negdist(\anchort|\anchor)} \quad\text{holds almost everywhere (\wrt any continuous measure of $\domain$).}
    \end{equation}
    By taking the logarithm we get the desired result.
\end{proof}

\end{scpcmd}

\end{scpcmd}

\section{Identifiability Theory}\label{sec:identifiability_proofs}

\subsection{Identifiability of Anisotropic \gls{cl}}\label{subsec:ident_base}
\begin{scpcmd}[
\newscpcommand{\d}{}{d}
\newscpcommand{\D}{}{D}
\newscpcommand{\Z}{}{\mathcal{Z}}
\newscpcommand{\S}{[1][\d-1]}{\mathbb{S}^{#1}}
\newscpcommand{\g}{}{\myvec{g}}
\newscpcommand{\f}{}{\myvec{f}}
\newscpcommand{\X}{}{\mathcal{X}}
\newscpcommand{\z}{}{\myvec{z}}
\newscpcommand{\pz}{}{\z^{+}}
\newscpcommand{\nz}{[1]}{\z^{-}_{#1}}
\newscpcommand{\x}{}{\myvec{x}}
\newscpcommand{\px}{}{\x^{+}}
\newscpcommand{\nx}{[1]}{\x^{-}_{#1}}
\newscpcommand{\distancesquare}{[4][]}{#1\lVert#2-#3#1\rVert^2_{#4}}
\newscpcommand{\anisoexpsim}{[4][]}{e^{-\distancesquare[#1]{#2}{#3}{#4}}}
\newscpcommand{\Lam}{}{\myvec{\Lambda}}
\newscpcommand{\infLam}{}{\myvec{\hat{\Lambda}}}
\newscpcommand{\M}{}{M}
\newscpcommand{\aniclloss}{}{\gls{cllossgen}}
\newscpcommand{\latentloss}{}{\tilde{\mathcal{L}}}
\newscpcommand{\ig}{}{\g^{-1}}
\newscpcommand{\h}{}{\myvec{h}}
\newscpcommand{\zt}{}{\tilde{\z}}
]

\begin{definition}[Anisotropic data generating process, ADGP]\label{def:adgp}

    Let $\Z \defeq \S$ be the $(\d-1)$-dimensional unit hypersphere and let $\g:\Z\rightarrow\X \subseteq \rr[\D]$ be an invertible and continuous function. 
    Let $\Lam \in \rr[\d\times\d]$ be a positive definite diagonal matrix.
    
    The following process is called an \emph{anisotropic data generating process (ADGP)}:
    \begin{enumerate}[label=\textit{Step \arabic*}\textdegree,leftmargin=1.5cm]
        \item $\z \sim Uni(\Z)$
        \item $\pz \sim \dist[\pz|\z](\cdot|\z)$, where $\dist[\pz|\z](\pz|\z) \propto\anisoexpsim{\pz}{\z}{\Lam}$
        \item for all $j \in \{1,\ldots,\M\}, \nz{j} \sim Uni(\Z)$
        \item \label{enum:ADP:step_4} $\x, \px, \{ \nx{j}\} \overset{\g}{\longleftarrow} \z, \pz, \{ \nz{j} \}$
    \end{enumerate}
    The pair $(\g,\Lam)$ is called the \emph{characterization} of the process. Sets $\Z, \X$ are called the \emph{latent} and \emph{observation spaces}, respectively.
\end{definition}

\begin{remark}
    The random vector $(\x, \px, \{\nx{j}\})$ meets the assumptions of \cref{def:gen_CL}, so a contrastive learning problem can be well-defined.
\end{remark}

\begin{definition}[\acrlong{aninfonce} loss, \ourloss]\label{def:aclp}
    Let an ADGP be characterized by $(\g,\Lam)$ and let $(\x, \px, \{\nx{j}\})$ be the observed data (all coming from observation space $\X \subseteq \rr[\D])$. Let us optimize the following objective amongst all \gls{pd} diagonal matrices $\infLam$ and all continuous encoders $\f:\rr[\D] \rightarrow \Z$:
    \begin{equation}
        \min_{\substack{\infLam \text{ PD diag}\\ \f \text{ continuous}}} \quad \aniclloss(\f,\infLam) \defeq \expectation \bigg[ -\ln \frac{\anisoexpsim{\f(\px)}{\f(\x)}{\infLam}}{\anisoexpsim{\f(\px)}{\f(\x)}{\infLam} + \sum_{j=1}^\M \anisoexpsim{\f(\nx{j})}{\f(\x)}{\infLam}} \bigg].
    \end{equation}
    We call the objective $\aniclloss$ the \emph{\acrfull{aninfonce} loss}.
\end{definition}

\begin{remark}\label{rem:special_cl_problem}
    The tuple $(\X, \dist(\x, \px, \{\nx{j}\}), \mathcal{U}, \aniclloss)$ is a special case of the general contrastive learning problem (\cref{def:gen_CL}), where $\mathcal{U} = \{ u \,|\, u(\x, \px) = -\| \f(\px) - \f(\x) \|^2_{\infLam} \text{ for any pair } (\f, \infLam) \}$ is the hypothesis class of similarity functions.
\end{remark}

\begin{reptheorem}{prop:min_ce_maintains_weighted_mse}[Identifiability of Anisotropic \gls{cl}]\label{theo:ident_ADGP}
    Let an ADGP (\cref{def:adgp}) be characterized by $(\g,\Lam)$ with latent and observed data being $(\z,\pz,\{\nz{j}\})$ and $(\x,\px,\{\nx{j}\})$
    . Let $\aniclloss$ denote the \ourloss loss (\cref{def:aclp}).
    
    If a pair $(\f, \infLam)$ (globally) minimizes the $\aniclloss$ loss, then:
    \begin{enumerate}
        \item $\infLam$ is equal to $\Lam$ up to a permutation of elements and
        \item $\f\circ\g$ is a block-orthogonal transformation, where each block acts on latents of equal weight $\Lam_{ii}$.
    \end{enumerate}
    In other words, latent $\z$ is identified up to a block-orthogonal transformation.
\end{reptheorem}

\begin{proof}[Proof of Thm.~\ref{theo:ident_ADGP}]
Our proof consists of two steps:
\begin{enumerate}[label=\textbf{Step \arabic*:}, leftmargin=1.25cm]
    \item Based on \cref{sec:CL_Bayes_optimum}, optimums of \ourloss connect the density ratio to the similarity function $u$.
    \item Then, we solve the resulting functional equation.
\end{enumerate}
    \paragraph{Step 1: Connecting the density ratio to the similarity function.}
    First, we are going rewrite the $\aniclloss$ loss function into a form that favors the analysis of $\f\circ\g$. Plugging in $\x = \g(\z)$ (\ref{enum:ADP:step_4} in \cref{def:adgp}) into the $\aniclloss$-loss:
    \begin{equation}
        \aniclloss(\f,\infLam) = \expectation \bigg[ -\ln \frac{\anisoexpsim{\f\circ\g(\pz)}{\f\circ\g(\z)}{\infLam}}{\anisoexpsim{\f\circ\g(\pz)}{\f\circ\g(\z)}{\infLam} + \sum_{j=1}^\M \anisoexpsim{\f\circ\g(\nz{j})}{\f\circ\g(\z)}{\infLam}} \bigg]\eqdef \latentloss(\f\circ\g, \infLam),
    \end{equation}
    with the optimization still over $\f$ (and $\infLam$). However, as the generator $\g$ is continuously invertible on the \emph{compact} set $\Z$, its inverse $\ig$ is automatically continuous as well. Therefore, any continuous function $\h:\Z\rightarrow\Z$ can take the role of $\f\circ\g$, by substituting $\f=\h\circ\ig$ continuous. Hence, minimizing $\aniclloss(\f,\infLam)$ for $\f$ continuous is equivalent to minimizing the new, latent space loss $\latentloss(\h,\infLam)$ for $\h$ continuous:
    \begin{equation}
        \min_{\substack{\infLam \text{ PD diag}\\ \f \text{ cont}}} \aniclloss(\f, \infLam) = \min_{\substack{\infLam \text{ PD diag} \\ \h \text{ cont}}} \latentloss(\h, \infLam).
    \end{equation}
    The new hypothesis class is $\tilde{\mathcal{U}} = \{ u \,|\, u(\z,\pz)=-\distancesquare{\h(\pz)}{\h(\z)}{\infLam} \text{ for any pair } (\h, \infLam)\}$. 
    
    We claim that $\latentloss$ can attain its Bayes- or unconditional optimum (\cref{theo:bayes_optima_cl}), \ie, as if there were no constraints on the similarity function $u$. According to \cref{theo:bayes_optima_cl}, an arbitrary similarity function $u$ minimizes $\latentloss$ amongst all possible measurable $u$'s (attaining the Bayes-optimum) if and only if for a suitable function $c$:
    \begin{equation}\label{eq:equation_Bayes_optimum_ADGP}
        u(\z,\zt) = c(\z) + \ln\frac{\dist[\pz|\z](\zt|\z)}{\dist[\nz{}|\z](\zt|\z)} \quad\text{holds almost everywhere on $\Z$.}
    \end{equation}
    In our special case, $u$ is restricted to take the form $u(\z,\zt) = -\distancesquare{\h(\zt)}{\h(\z)}{\infLam}$. Besides that, $\nz{j}\sim Uni(\Z)$ with a constant density function and $\dist[\nz{}|\z] = \dist[\nz{}] \equiv \text{const}$. We also plug in the anisotropic positive conditional distribution $\dist[\pz|\z](\zt|\z)\propto \anisoexpsim{\zt}{\z}{\Lam}$. After merging every constant additive term (including normalization constants) into \makebox{function $c$}, we get the functional equation with unknowns $\h,\infLam,c$:
    \begin{equation}\label{eq:optimality_special_case}
        \distancesquare{\h(\zt)}{\h(\z)}{\infLam} = c(\z) + \distancesquare{\zt}{\z}{\Lam} \quad\text{holds almost everywhere on $\Z$.}
    \end{equation}
    Observe that the equation is trivially satisfied for $\h=id_\Z, \infLam=\Lam, c\equiv 0$. This special case corresponds to having $\f = \inv{\g}$, \ie, we inverted the exact generator and also reconstructed $\Lam$. Consequently, our loss function attains its Bayes-minimum even in our restricted hypothesis class.    
    Now leveraging again \cref{theo:bayes_optima_cl} in the other direction, the above shows us that all minimizers are, in fact, solutions of \cref{eq:optimality_special_case}.
    
    Let's take an arbitrary solution $\h, \infLam, c$. Noticeably, the single-variable, measurable function $c$ is equal (almost everywhere) to a two-variable continuous function. This is only possible if the latter function, depending on $(\z,\zt)$ is a single-variable, continuous function of $\z$. In this case, swapping $c$ to this exact function will also reveal a good solution and, therefore, it is only required to solve \cref{eq:optimality_special_case} for $c$ continuous and on the full space $\Z$. By taking the special case of $\zt \leftarrow \z$, we see that $c(\z) = 0$ follows.

    \begin{scpcmd}[
    \newscpcommand{\a}{}{\myvec{a}}
    \newscpcommand{\b}{}{\myvec{b}}
    \newscpcommand{\halfpow}{}{{\!\sfrac{1\!}{2}}}
    \newscpcommand{\neghalfpow}{}{{\!\!\sfrac{-\!\!1\!}{2}}}
    \newscpcommand{\O}{}{\myvec{O}}
    \newscpcommand{\H}{}{\myvec{H}}
    \newscpcommand{\c}{}{\myvec{c}}
    ]

    \paragraph{Step 2: Solving the functional equation.}
    What is left to solve is the following functional equation in terms of the variables $\h, \infLam$:
    \begin{equation}\label{eq:}
        \distancesquare{\h(\zt)}{\h(\z)}{\infLam} = \distancesquare{\zt}{\z}{\Lam} \quad\text{holds for any $\zt,\z$ on $\Z$.}
    \end{equation}
    For this, let $\z_0 \in \Z$ be fixed and let us define the following functions:
    \begin{equation}
        \a(\z) = \infLam^\halfpow \big(\h(\z) - \h(\z_0)\big) \quad\text{and}\quad 
        \b(\z) = \Lam^\halfpow (\z-\z_0).
    \end{equation}
    In this case, the following properties are easily verified:
    \begin{gather}
        \distancesquare{\a(\zt)}{\a(\z)}{} = \distancesquare{\b(\zt)}{\b(\z)}{}, \label{eq:iso_ab} \\
        \a(\z_0) = \b(\z_0) = 0. \label{eq:z_0}
    \end{gather}
    By substituting $\zt \leftarrow \z_0$ into \cref{eq:iso_ab} and using \cref{eq:z_0}, we get that
    \begin{equation}
        \normsquared{\a(\z)} = \normsquared{\b(\z)} \text{holds for any $\z \in \Z$.} \label{eq:isonorm}
    \end{equation}
    After expanding \cref{eq:iso_ab} and using \cref{eq:isonorm} twice, we get that, similarly to the distance and norm, the scalar product is equivalently transformed by $\a$ and $\b$:
    \begin{equation}
        \scprod{\a(\zt),\a(\z)} = \scprod{\b(\zt),\b(\z)} \text{ for any $\zt,\z \in \Z$.}\label{eq:equiv_scprod}
    \end{equation}

    From this, it can be proven that there exists an orthogonal linear transformation $\O$ of $\rr[\d]$ such that:
    \begin{equation}
        \a(\z) = \O\b(\z) \text{ holds for any $\z \in \Z$.}
    \end{equation}
    To see this, we first prove that the following mapping is well-defined on the linear span of $\b(\Z)$:
    \begin{equation}
        \O: \sum_{i=1}^n \alpha_i \b(\z_i) \mapsto \sum_{i=1}^n \alpha_i \a(\z_i) \label{eq:O_initial_def}
    \end{equation}
    Assume there exist two equal expansions (with potentially zero coefficients):
    \begin{equation}
        \sum_{i=1}^n \alpha_i \b(\z_i) = \sum_{i=1}^n \beta_i \b(\z_i).
    \end{equation}
    In this case the difference and the distance square vanish:
    \begin{equation}
        \bigg\Vert{\sum_{i=1}^n (\alpha_i - \beta_i) \b(\z_i)}\bigg\Vert^2 = 0. \label{eq:diff_vanish_b}
    \end{equation}
    However, due to the equivalent transformation of the scalar product (\cref{eq:equiv_scprod}), after the expansion of \cref{eq:diff_vanish_b} we can essentially change every occurrence of $\b$ to $\a$ and receive that:
    \begin{equation}
        \bigg\Vert{\sum_{i=1}^n (\alpha_i - \beta_i) \a(\z_i)}\bigg\Vert^2 = 0, \label{eq:diff_vanish_a}
    \end{equation}
    which may hold if and only if
    \begin{equation}
        \sum_{i=1}^n \alpha_i \a(\z_i) = \sum_{i=1}^n \beta_i \a(\z_i),
    \end{equation}
    proving that $\O$ in \cref{eq:O_initial_def} is well-defined on the linear span of $\b(\Z)$.
    
    Secondly, the facts that $\O$ is linear and that $\a(\z) = \O\b(\z)$ hold trivially. We also see that the image of $\b$, \ie $\b(\Z) = \b(\S)$ is full dimensional and hence $\O$ is defined on the entire $\rr[\d]$. The orthogonality is a direct consequence of the scalar product preservation property (\cref{eq:equiv_scprod}) and the definition of $\O$ (\cref{eq:O_initial_def}).

    Consequently, we have proven that there exists an orthogonal linear transformation $\O$ of $\rr[\d]$ such that:
    \begin{equation}
        \infLam^\halfpow (\h(\z)-\h(\z_0)) = \O\Lam^\halfpow (\z-\z_0) \text{ holds for any $\z\in\Z$.}
    \end{equation}
    As $\infLam$ is positive definite and invertible, it follows (after merging constant terms containing $\z_0$), that:
    \begin{equation}
        \h(\z) = \infLam^\neghalfpow \O \Lam^\halfpow \z + \c \text{ holds for any $\z \in \Z$, for some constant $c$.}
    \end{equation}
    Let us denote $\H = \infLam^\neghalfpow \O \Lam^\halfpow$. To complete the proof, we have to show that $\c=0$ and $\H$ is orthogonal. For this, we use the fact that $\h$ is a transformation of the unit hypersphere $\Z = \S$:
    \begin{align}
        \scprod{\H\z, \c} &= \frac{1}{4} \Big( \Vert \H\z + \c \Vert^2 - \Vert -\!\H\z + \c \Vert^2 \Big) \\
        &= \frac{1}{4} \Big( \Vert \h(\z) \Vert^2 - \Vert \h(-\z) \Vert^2 \Big) = \frac{1}{4} (1-1) = 0\text{, for any $\z \in \Z$.}
    \end{align}
    As $\H$ is invertible and, thus, with a full range of $\rr[\d]$, $\c=0$ is concluded. Therefore, $\h$ is linear. As unit vectors are mapped to unit vectors, $\h$ is norm-preserving and, thus, $\h = \f\circ\g$ is orthogonal.

    Now, rewriting $\H = \infLam^\neghalfpow \O \Lam^\halfpow$ gives us:
    \begin{equation}
        \O\Lam^\halfpow = \infLam^\halfpow\H = \H \big( \H^\top \infLam^\halfpow\H \big),
    \end{equation}
    where the first and last formulas provide polar decompositions of the same operator. As the polar decomposition of invertible operators is unique, we conclude that $\O = \H$, and
    \begin{equation}
        \Lam^\halfpow = \H^\top \infLam^\halfpow\H.
    \end{equation}

    After squaring both sides, we get:
    \begin{align}
        \Lam &= \H^\top \infLam\H.\label{eq:eigen}\\
        \intertext{Then the right-hand side of \cref{eq:eigen} is the eigendecomposition of $\Lam$. Due to the uniqueness of eigenvalues, there must exist a permutation $\gls{permutation}_{\h}$ such that $\infLam = \gls{permutation}_{\h}\Lam\gls{permutation}_{\h}^\top$. Then}
        \Lam &= \H^\top \gls{permutation}_{\h}\Lam\gls{permutation}_{\h}^\top\H\\
        \Leftrightarrow \gls{permutation}_{\h}^\top\H\Lam &= \Lam\gls{permutation}_{\h}^\top\H.
    \end{align}
    In other words, $\gls{permutation}_{\h}^\top\H$ and $\Lam$ commute. Then $\gls{permutation}_{\h}^\top\H$ must be a diagonal matrix if all values on the diagonal of $\Lam$ are distinct. Should some values be the same, then the corresponding submatrix of $\gls{permutation}_{\h}^\top\H$ is orthogonal.
    
    To see this more formally, we define $\gls{o} := \gls{permutation}_{\h}^\top\H$. Then, for all $i,j$,
    \begin{align}
        \sum_{k}\mathrm{O}_{ik}\lambda_k\delta_{kj} &= \sum_{k}\lambda_i\delta_{ik}\mathrm{O}_{kj},\\
        \Leftrightarrow \mathrm{O}_{ij}\lambda_j &= \lambda_i\mathrm{O}_{ij},\\
        \Leftrightarrow \mathrm{O}_{ij}(\lambda_j - \lambda_i) &= 0.
    \end{align}
    Hence, for distinct eigenvalues, all off-diagonal entries of \gls{o} are zero. Within the space of equivalent eigenvalues, \gls{o} is only constrained by the requirement to be orthogonal. 
    
    From $\gls{o} = \gls{permutation}_{\h}^\top\H$ we then know that $\H$ is --- up to permutation --- an orthogonal block-diagonal matrix where the size of each block corresponds to the multiplicity of the values on the diagonal of $\Lam$.

    \end{scpcmd}
\end{proof}
    
\end{scpcmd}

\subsection{Including hard negative sampling} \label{subsec:theory_hard_negatives}

\begin{scpcmd}[
\newscpcommand{\Z}{}{\mathcal{Z}}
\newscpcommand{\d}{}{d}
\newscpcommand{\S}{[1][\d-1]}{\mathbb{S}^{#1}}
\newscpcommand{\D}{}{D}
\newscpcommand{\X}{}{\mathcal{X}}
\newscpcommand{\g}{}{\myvec{g}}
\newscpcommand{\z}{}{\myvec{z}}
\newscpcommand{\pz}{}{\z^{+}}
\newscpcommand{\nz}{[1]}{\z^{-}_{#1}}
\newscpcommand{\x}{}{\myvec{x}}
\newscpcommand{\px}{}{\x^{+}}
\newscpcommand{\nx}{[1]}{\x^{-}_{#1}}
\newscpcommand{\Lam}{}{\myvec{\Lambda}}
\newscpcommand{\pLam}{}{\Lam^{+}}
\newscpcommand{\nLam}{}{\Lam^{-}}
\newscpcommand{\infLam}{}{\myvec{\hat{\Lambda}}}
\newscpcommand{\f}{}{\myvec{f}}
\newscpcommand{\distancesquare}{[4][]}{#1\lVert#2-#3#1\rVert^2_{#4}}
\newscpcommand{\anisoexpsim}{[4][]}{e^{-\distancesquare[#1]{#2}{#3}{#4}}}
\newscpcommand{\M}{}{M}
\newscpcommand{\aniclloss}{}{\gls{cllossgen}}
\newscpcommand{\latentloss}{}{\tilde{\mathcal{L}}}
\newscpcommand{\h}{}{\myvec{h}}
\newscpcommand{\zt}{}{\tilde{\z}}
]

\begin{definition}[Anisotropic DGP with hard negatives (HN)]\label{def:adgp_hn}
    Let $\g:\Z\defeq \S\rightarrow\X \subseteq \rr[\D]$ be an invertible and continuous function. Let $\pLam, \nLam \in \rr[\d\times\d]$ be positive definite diagonal matrices such that $\pLam_{ii} > \nLam_{ii}$ (or equivalently, $\pLam-\nLam$ is still PD).
    
    The following process is called an \emph{anisotropic DGP with hard negatives (HN)}, characterized by the triple $(\g, \pLam, \nLam)$:
    \begin{enumerate}[label=\textit{Step \arabic*)},leftmargin=1.5cm]
        \item $\z \sim Uni(\Z)$
        \item $\pz \sim \dist[\pz|\z](\cdot|\z)$, where $\dist[\pz|\z](\pz|\z) \propto \anisoexpsim{\pz}{\z}{{\pLam}}$
        \item {for all $j \in \{1,\ldots,\M\}, \nz{j} \sim \dist[\nz{}|\z](\cdot|\z)$, where $\dist[\nz{}|\z](\nz{}|\z) \propto \anisoexpsim{\nz{}}{\z}{\nLam}$}
        \item $\x, \px, \{ \nx{j}\} \overset{\g}{\longleftarrow} \z, \pz, \{ \nz{j} \}.$
    \end{enumerate}    
\end{definition}

\begin{repcorollary}{cor:hn_ident}[Identifiability of anisotropic DGPs with HN]
    Let an ADGP with HN (\cref{def:adgp_hn}) be characterized by $(\g,\pLam, \nLam)$ with latent and observed data being $(\z,\pz,\{\nz{j}\})$ and $(\x,\px,\{\nx{j}\})$. Let $\aniclloss$ denote the \ourloss loss (\cref{def:aclp}), evaluated \wrt $(\x,\px,\{\nx{j}\})$.
    
    If a pair $(\f, \infLam)$ (globally) minimizes the $\aniclloss$ loss, then:
    \begin{enumerate}
        \item $\infLam$ is equal to $\pLam - \nLam$ up to a permutation of elements and
        \item $\f\circ\g$ is an orthogonal transformation, or latent $\z$ is identified up to an orthogonal transformation.
    \end{enumerate}
\end{repcorollary}

\begin{proof}
    We are following the steps of the Proof of Thm.~\ref{theo:ident_ADGP}.

    First, we again rewrite the $\aniclloss$ by plugging in $\x = \g(\z)$ and receiving $\aniclloss(\f,\infLam)=\latentloss(\f\circ\g, \infLam)$, with optimization over $\f$ (and $\infLam$). As $\h = \f\circ\g$ attains all possible continuous functions, we may optimize $\latentloss(\h, \infLam)$.
    The hypothesis class becomes, again, $\tilde{\mathcal{U}} = \{ u \,|\, u(\z,\pz)=-\distancesquare{\h(\pz)}{\h(\z)}{\infLam} \text{ for any pair } (\h, \infLam)\}$.
    
    Secondly, according to \cref{theo:bayes_optima_cl}, an arbitrary similarity function $u$ minimizes $\latentloss$ amongst all possible measurable $u$'s (attaining the Bayes-optimum) if and only if for a suitable function $c$:
    \begin{equation}\label{eq:equation_Bayes_optimum_ADGP_HN}
        u(\z,\zt) = c(\z) + \ln\frac{\dist[\pz|\z](\zt|\z)}{\dist[\nz{}|\z](\zt|\z)} \quad\text{holds almost everywhere on $\Z$.}
    \end{equation}
    We are about to plug in all our constraints. The major difference compared to the Proof of Thm.~\ref{theo:ident_ADGP} is that besides the positive conditional distribution being $\dist[\pz|\z](\zt|\z)\propto \anisoexpsim{\zt}{\z}{\pLam}$, the negative conditional also changes to $\dist[\nz{}|\z](\zt|\z)\propto \anisoexpsim{\zt}{\z}{\nLam}$. After plugging in the restricted form of $u$ and merging every constant normalizing term into function $c$, \cref{eq:equation_Bayes_optimum_ADGP_HN} becomes:
    \begin{align}
        -\distancesquare{\h(\zt)}{\h(\z)}{\infLam} &= c(\z) + \ln \frac{\anisoexpsim{\zt}{\z}{\pLam}}{\anisoexpsim{\zt}{\z}{\nLam}}\\
        &= c(\z) - \distancesquare{\zt}{\z}{\pLam} + \distancesquare{\zt}{\z}{\nLam} \\
        &= c(\z) - \scprod{\zt-\z,\pLam(\zt-\z)} + \scprod{\zt-\z,\nLam(\zt-\z)} \\
        &= c(\z) - \scprod{\zt-\z, (\pLam-\nLam)(\zt-\z)} = c(\z) - \distancesquare{\zt}{\z}{\pLam-\nLam}.
    \end{align}
    
    Hence, we get the functional equation with unknowns $\h,\infLam,c$:
    \begin{equation}
        \distancesquare{\h(\zt)}{\h(\z)}{\infLam} = c(\z) + \distancesquare{\zt}{\z}{\pLam - \nLam} \quad\text{holds almost everywhere on $\Z$.}
    \end{equation}

    We conclude our proof by continuing the Proof of Thm.~\ref{theo:ident_ADGP} by substituting $\Lam = \pLam-\nLam$ PD.

\end{proof}

\begin{remark}[Subsuming the uniform marginal in the negative conditional with $\gls{diagtemp}^{-}=0$]
    When using a negative conditional in the form of \cref{eq:weighted_cl_cond}, $\gls{diagtemp}^{-}=0$ accounts for a uniform marginal.
\end{remark}


\subsection{Identifiability of a Loss Ensemble} \label{app:subsec_theory_ensemble}

In this setting the observations \gls{obs} come from $k$ different \glspl{dgp}, with a \textit{shared} generator \gls{dec} and separate concentration parameters $\braces{\gls{diagtemp}^i}_{i=1}^k$. That is, this setting is akin to works investigating identifiability in multiple environments~\citep{hyvarinen_unsupervised_2016,gresele_incomplete_2019,rajendran2023interventional} and reflects the practice in \gls{cl} of optimizing multiple loss components with different data augmentations~\citep{eastwood_self-supervised_2023,xiao_what_2021,zhang_rethinking_2022}.

\end{scpcmd}


\begin{scpcmd}[
\newscpcommand{\Lam}{}{\gls{diagtemp}}
\newscpcommand{\infLam}{}{\gls{diagtempinfer}}
\newscpcommand{\ensLam}{[1]}{\Lam^{\raisemath{-0.1ex}{#1}}}
\newscpcommand{\ensinfLam}{[1]}{\infLam^{\raisemath{-0.1ex}{#1}}}
\newscpcommand{\f}{}{\gls{enc}}
]

\begin{repcorollary}{cor:ensemble_ident}[Identifiability of ensemble \ourloss]

    Assume $k$ \glspl{dgp}, each satisfying \cref{assum:non_isotropic_cl}. Assume that they share \gls{dec}, but have different concentration parameters $\{\ensLam{i}\}$, for $1\leq i \leq k$. Assume a shared encoder \gls{enc} and $k$ learnable $\ensinfLam{i}$ parameters. Then, the tuple $(\gls{enc},\{\ensinfLam{i} \})$ minimizing the ensemble loss $\sum_i^k\gls{cllossgen}(\f, \ensinfLam{i})$ identifies the latents up to a block-orthogonal transformation.

\end{repcorollary}
\begin{proof}
    $\f=\inv{\gls{dec}}$ together with $\ensinfLam{i}=\ensLam{i}$ minimizes the $i^{th}$ loss term. As $\gls{dec}$ is shared, the individual losses can attain their minima simultaneously. Hence, this also holds for any minimum of the ensemble loss. Applying Thm.~\ref{prop:min_ce_maintains_weighted_mse} to either of them concludes the proof.
\end{proof}

\end{scpcmd}

\begin{remark}[Sufficiency condition for identifiability up to permutations for the ensemble \ourloss loss]\label{remark:ensemble_ident_permutations}
    Ensemble \ourloss is identifiable up to permutations if, under the conditions of \cref{cor:ensemble_ident}, there are sufficiently many \glspl{dgp} such that for all distinct latent factors \latenti[k], \latenti[l], there exists a \gls{dgp} with concentration parameter  $\gls{diagtemp}^i$ such that $\gls{diagtemp}^i_{kk} \neq \gls{diagtemp}^i_{ll}$.
\end{remark}
\section{Additional Analysis: Ablations and Parameter Sensitivity} \label{app:analysis}
    In Section~\ref{sec:analysis} of the main manuscript, we identified the use of augmentations as the primary issue in observing a trade-off between accuracy and identifiability on real-world data.
    In this Section, we analyze other reasons why scaling \ourloss to real-world data is challenging.
    In particular, we observe that the interplay between the batch size, the latent dimensionality and the value of the concentration parameter plays a crucial role.
    To demonstrate this, we run controlled experiments on synthetic data to investigate the assumptions and properties of our theory. Since our analysis equally applies to the isotropic and anisotropic cases, we use the isotropic one for our experiments for simplicity but consider the general anisotropic case in our additional theoretical results.

\subsection{Issues when Scaling up \ourloss}

\paragraph{Batch Size Needs to Scale with the Dimensionality.}
    One potential difference between the fully-controlled and the real-world experiments is the latent dimensionality, which will (most likely) be substantially higher for more complex image data. Therefore, we investigate the influence of the latent dimension and batch size on the linear identifiability score (\cref{fig:analysis_1_2}A).
    We observe a significant degradation of performance with a smaller batch size and higher latent dimensionality, corroborating the theoretical predictions of \citet{wang_chaos_2022} and in line with the empirical observation that \gls{cl} performs better with a higher batch size \citep{chen_simple_2020}.
    Even for a modest dimensionality, a rather high batch size is required. Extrapolating our results, feature encoders on the order of what is typically used on CIFAR10 and ImageNet (512 to 2048 dimensions) would require extremely large batch sizes to reach high identifiability scores even if all assumptions match our theory.

      \begin{figure}[t]
        \centering
        \includegraphics[width=0.7\linewidth]{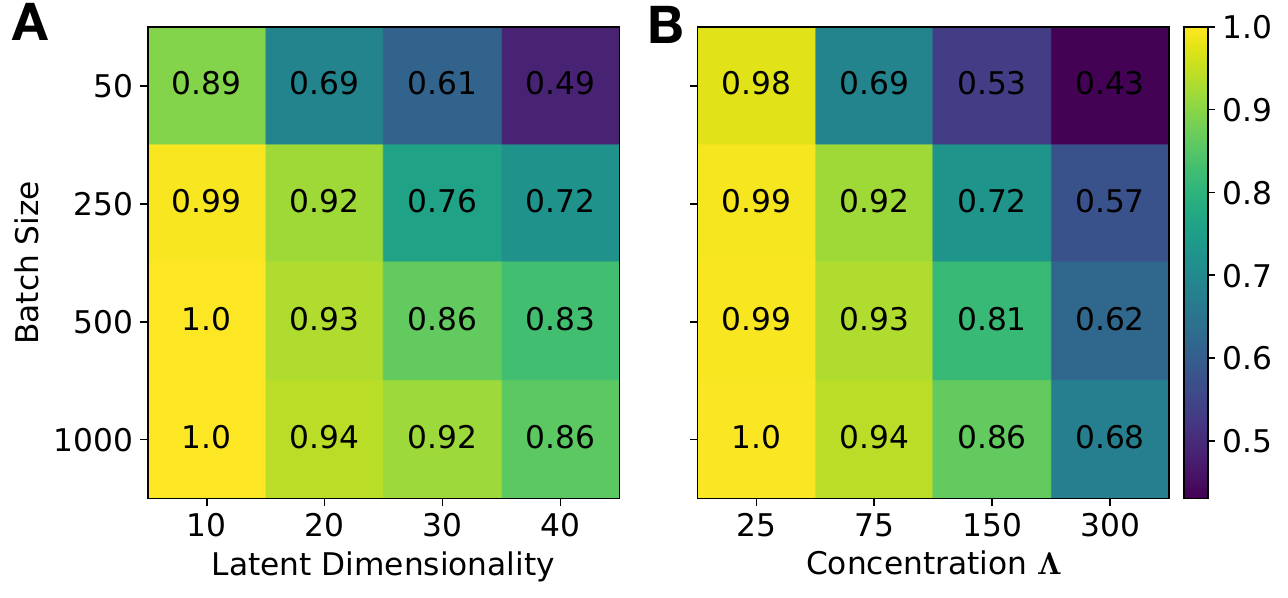}
        \caption{\textbf{Latent Dimensionality, Concentration, and Batch Size Influence Identifiability.} Linear identifiability, quantified by the \gls{r2} score between reconstructed and ground-truth latents, degrades with \textbf{A:} higher latent dimensionality $d$;  and \textbf{B:} larger concentration parameter \gls{diagtemp} in the ground-truth positive conditional. Both detrimental effects can be countered by increasing the batch size. \label{fig:analysis_1_2}}
    \end{figure}
    
\paragraph{Batch Size Needs to Scale with the Concentration \gls{diagtemp}.}
    Another important difference between our controlled and real-world experiments concerns the choice of concentration parameters and its influence on the batch size for identifiability. More concretely, the more concentrated the conditionals are, and the higher the dimensionality, the less training signal CL has to structure the representation~\citep{wang_chaos_2022}. This was denoted by \citet{wang_chaos_2022} as the role of \emph{augmentation overlap}.
    Our anisotropic positive conditional distribution could help explain the role of augmentation overlap from an identifiability perspective via the concentration parameter \gls{diagtemp}, whose effect is illustrated in~\cref{fig:analysis_schematic_lambda}. Larger \gls{diagtemp} means a more concentrated conditional and requires larger batch sizes for sufficient augmentation overlap; otherwise, the \gls{r2} score will degrade (see~\cref{fig:analysis_1_2}B for $\gls{latentdim}=20$). 
    In real-world experiments, the positive pairs distribution is determined by augmentations. 
    Likely, the shape or the class of an object are minimally affected by the augmentations, which corresponds to those latent dimensions having a large concentration parameter, thereby again increasing the necessary batch size to reach high identifiability scores.


\paragraph{Augmentations can Violate our Conditional Assumption.}
    Next, we consider the role of different positive pair distributions.
    In \cref{sec:ident_theory}, we assume a normalized Gaussian distribution over a hypersphere. However, it is unclear whether practical augmentation strategies follow such a conditional.
    We test our assumption's validity by training a \gls{vae} on augmented images from MNIST with only crops and horizontal flips as augmentations since MNIST images are grayscale. Training an encoder on MNIST using these augmentations leads to a KNN accuracy of 96.7\%, validating our augmentation choice. We visualize an unaugmented image and VAE reconstructions of its augmented versions in~\cref{fig:analysis_cond_diff_marginal}A. To show the conditional distribution, we project augmented images onto the learned \gls{vae} latent space and demonstrate in~\cref{fig:analysis_cond_diff_marginal}B
    that the conditional distribution does not follow a normalized Gaussian distribution for most dimensions and can even be bimodal; we show more examples in \cref{fig:analysis_cond}. This holds even though the latent space of the VAE across all augmented data samples is Gaussian by construction. Our theory does not cover this case and so this mismatch presents a fruitful direction for future research on closing the gap between theory and practice.

       \begin{figure}[t]
        \centering
        \includegraphics[width=0.5\linewidth]{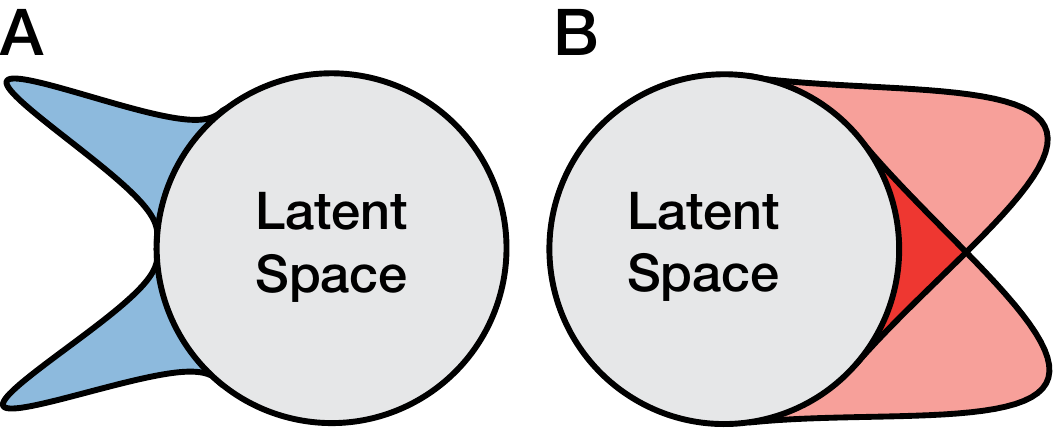}
        \caption{
            \textbf{Concentration of Positive Pairs Influences Augmentation Overlap}.
            We figuratively visualize conditional distributions with large (\textbf{A}) and small (\textbf{B}) concentration parameters \gls{diagtemp}. For large concentration values, samples from the conditional distribution of two anchor points do not overlap, signifying missing augmentation overlap. This is not the case for small \gls{diagtemp} values.
        }
        \label{fig:analysis_schematic_lambda}
  \end{figure}

    \begin{figure*}[t]
        \centering
            \includegraphics[width=\linewidth]{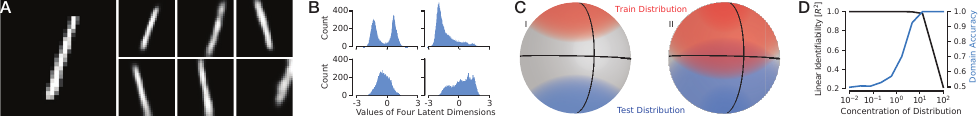}
            \caption{\textbf{Data Augmentation Can Violate Our Data Distribution Assumptions.}  We train a VAE on augmented MNIST images. \textbf{A.} We show an MNIST sample and six augmented versions generated by the VAE. \textbf{B.} We project augmented versions of an MNIST sample into the VAE's latent space and visualize the distribution of four dimensions. This demonstrates a violated conditional assumption of a normalized Gaussian. \textbf{C.} We visualize violations of the uniformity assumption for the marginal distribution during training and testing. \textbf{D.} We observe degrading identifiability scores when the train and test domains can be distinguished. Since an almost perfect domain classifier can be trained on real-world datasets, we conclude that the uniformity assumption is likely violated in practice.
            \label{fig:analysis_cond_diff_marginal}}
    \end{figure*}

    \begin{figure}[t]
        \centering
            \centering
            \includegraphics[width=1.0\linewidth]{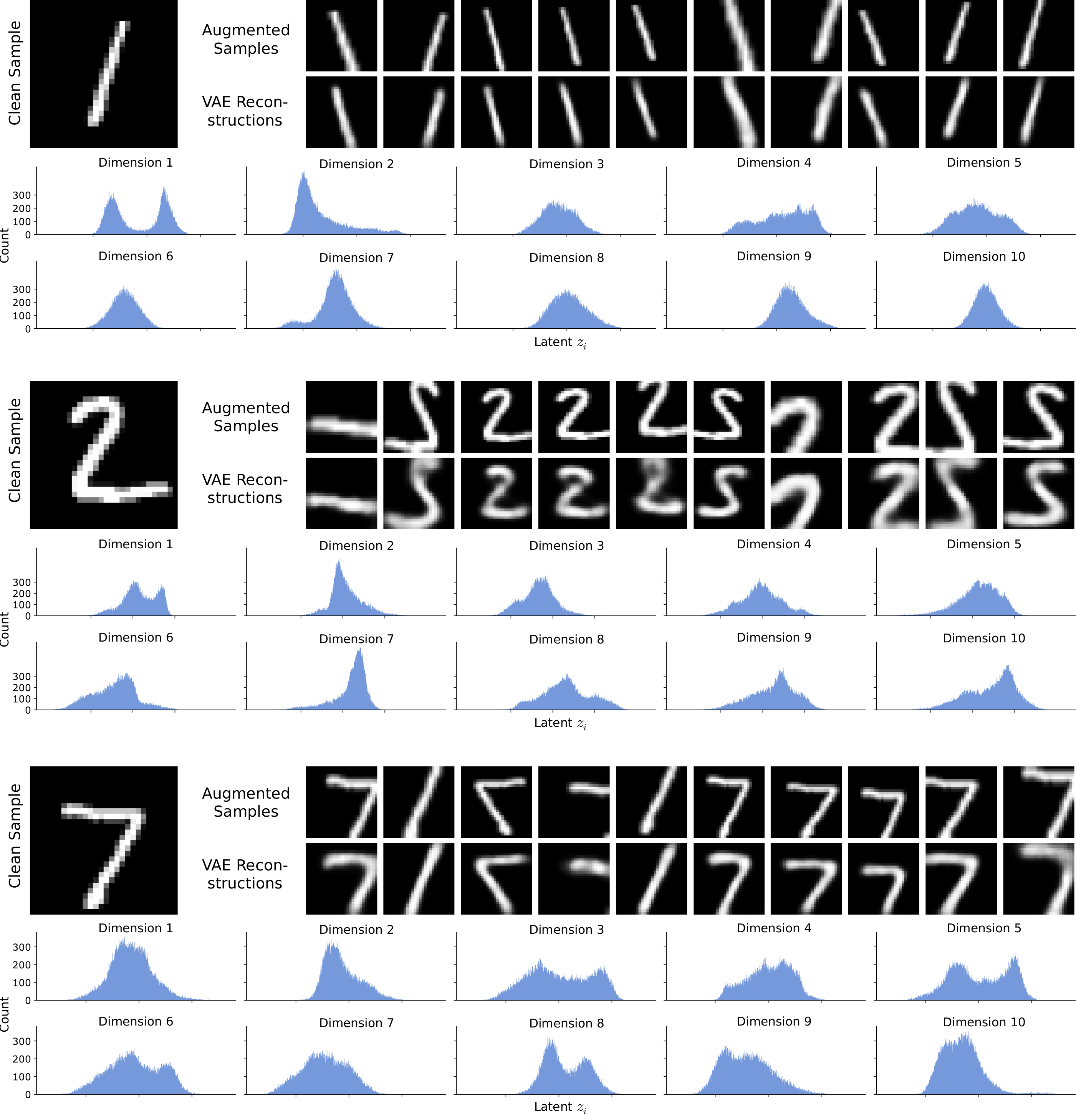}
            \caption{\textbf{When Using Augmentations to Generate Views for \gls{cl}, the Conditional Generally no Longer Follows a Gaussian Distribution.} We show augmented images as well as their reconstructions using a VAE trained on augmented images. We project the augmented images into the latent space of the trained VAE and observe that their distribution does not resemble a Gaussian distribution and can even be bimodal. \label{fig:analysis_cond}}
    \end{figure}

\paragraph{Augmentations can Violate our Uniformity Assumption.}
    Our theory assumes the same uniform marginal \clmarginal for both training and testing. However, this assumption's validity is unclear in practice: During training, even anchor points are produced by data augmentation, creating a potential mismatch to the test domain which lacks these augmentations. 
    We test the influence of violations of this assumption by interpolating the marginal between a uniform and a \gls{vmf} distribution with varying concentration. We center the \gls{vmf} at the north (south) pole during training (testing) (\cref{fig:analysis_cond_diff_marginal}C). 
    We train a binary classifier to distinguish samples from the training and test distributions. For different concentrations, we compute the domain classification accuracy and the identifiability score (\cref{fig:analysis_cond_diff_marginal}D), showing a stark anticorrelation.
    Our domain classification results with regular SimCLR augmentations (MNIST: $99.1$\%, CIFAR10: $99.2$\%, and ImageNet $98.7$\%) demonstrate that binary classifiers can almost perfectly distinguish augmented and non-augmented samples on real-world data. Connecting this to observed anticorrelation with identifiability, we conclude that the mismatch between distributions hurts the quality of representations.

    \paragraph{The Conditional's Effect on Learning Dynamics.}
        Finally, the generative process of real-world image data is most likely substantially richer compared to the previously studied synthetic DGPs, resulting in a generally harder learning problem requiring longer training time. In particular, we hypothesize that the conditional distribution's concentration partly influences the learning speed; this aligns with previous observations of flat loss landscapes for \gls{nce} \citep{liu_analyzing_2021}.
        Our anisotropic conditional of positive pairs provides a possible explanation: We hypothesize that similar to \citet{saxe_exact_2014}, the factors controlling the variances (in our case, the concentration \gls{diagtemp}) affect the loss landscape, making some latents \textit{slow} to learn.
        Thus, it is conceivable that more style-like (ground-truth) latents are learned slower (thus, harder) compared to more content-like ones (see, \eg, in ~\cref{fig:toymodel}B). 

    We demonstrate the influence of the concentration parameter on the learning dynamics as follows.
    We define two DGPs using the synthetic data from Section~\ref{subsec:lvm}.
    We assume a uniform marginal and an isotropic normalized Gaussian conditional with two different \gls{diagtemp} parameters: \gls{diagtemp} = 25 and \gls{diagtemp} = 75. A higher concentration parameter corresponds to a more narrow conditional distribution and thus, lower variation between the anchor and its conditional sample in the latent space.
    We analyze how \gls{diagtemp} affects learning dynamics  (\cref{fig:analysis_bsz_iter}) and observe that learning the latents follows a step-like behavior, corroborating~\citet{simon_stepwise_2023}, where latents with larger \gls{diagtemp} are learned slower (\cref{fig:analysis_bsz_iter} A \& D). Alternatively, reaching a target \gls{r2} score requires more gradient steps unless the batch size is increased (\cref{fig:analysis_bsz_iter} B \& E). The cosine similarities' distribution for negative and positive pairs clearly shows that large \gls{diagtemp} leads to insufficient augmentation overlap: Since the histograms are disjoint for large \gls{diagtemp}, the anchor's nearest neighbor is always a positive sample (\cref{fig:analysis_bsz_iter} C \& F). This makes distinguishing positive from negative samples trivial and allows for a short-cut solution. However, augmentation overlap cannot fully explain \gls{cl} in practice:  \citet{saunshi_understanding_2022} show that practical augmentations often provide no significant overlap; nonetheless, \gls{cl} still succeeds, potentially due to (architectural) inductive biases.
    \begin{figure}[t]
        \centering
            \includegraphics[width=1.0\linewidth]{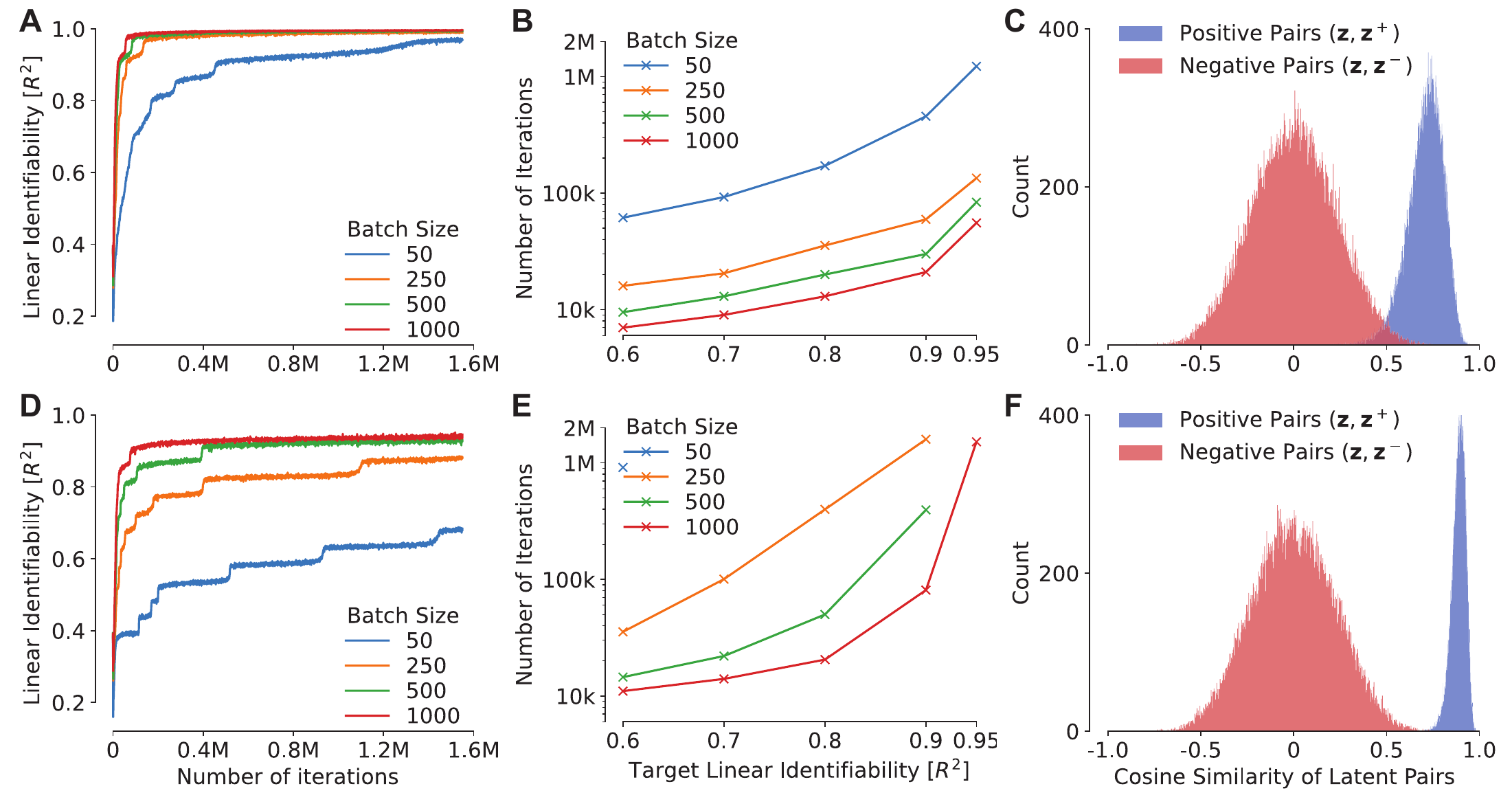}
            \caption{\textbf{Analysis of Learning Dynamics for a Small ($\gls{diagtemp}=25$, \textbf{A}-\textbf{C}) and a Large ($\gls{diagtemp}=75$, \textbf{D}-\textbf{F}) Concentration Parameter.} \textbf{A \& D}: Linear identifiability measured by the $R^2$ scores show step-like behavior and generally need more training iterations to reach a certain $R^2$ score for higher $\gls{diagtemp}$. \textbf{B \& E}: The number of training iterations needed for a certain target $R^2$ score grows exponentially. \textbf{C \& F}: We explain degrading $R^2$ scores for higher concentration parameters $\gls{diagtemp}$ with missing overlap in the latent positive and negative distributions. \label{fig:analysis_bsz_iter}}
    \end{figure}

\subsection{Dependence of Identifiability on Batch Size, Training Duration, and Dimensionality} \label{app:depend_ident_bsz_dim_iters}

    We now further analyze the dependence of the batch size, training duration, and dimensionality of the latent space on the identifiability scores. To this end, we train models with different batch sizes on data with various latent dimensionality and investigate their identifiability scores. We use $3$ random seeds per configuration and train all models for $500$k iterations using the same hyperparameters as described above. We consider the usual conditional distribution of positive pairs with a concentration value $\gls{diagtemp}=75$.
    
    \begin{figure}[t]
        \centering
        \includegraphics[width=0.7\linewidth]{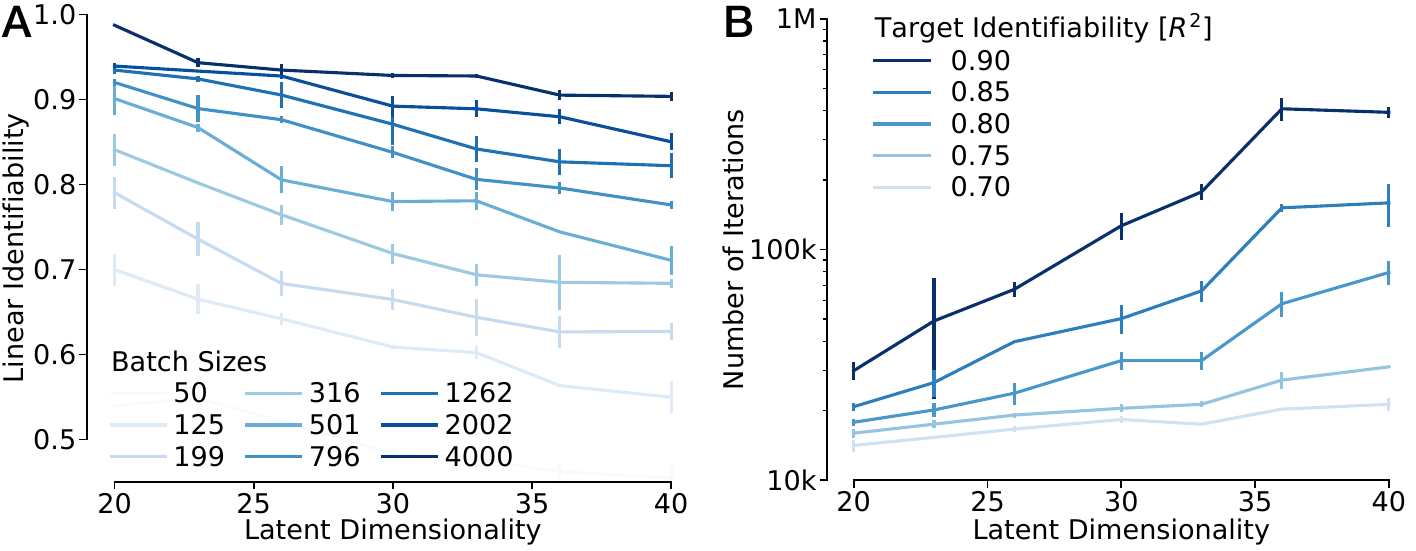}
        \caption{
            \textbf{Higher Dimensional Latent Spaces Require Longer Training with Larger Batch Sizes.}
            \textbf{A}: The linear identifiability achieved after training for $500$k steps decreases with an increasing latent dimensionality but increases with larger batch sizes. \textbf{B}: We display the required number of iterations to achieve various target linear identifiability scores ($0.70$ to $0.90$) for a fixed batch size of 4000. Note that the number of iterations required grows exponentially with the latent dimensionality. 
        }
        \label{fig:identifiability_batch_size_iterations} 
    \end{figure}

    \Cref{fig:identifiability_batch_size_iterations} clearly shows that for higher dimensional latent spaces, one needs to train (exponentially) longer and use an increased batch size. Note that even for the fairly low dimensional settings investigated here, large batch sizes and long training runs were necessary to achieve high identifiability scores. Comparing this to the dimensionalities usually assumed/used in practice, e.g., $512$ dimensions when training on images, we argue that the usual batch sizes and training durations are too small to achieve high identifiability scores. This observation would explain the mixed performance observed earlier when testing our loss on real-world image data despite its identifiability guarantees.

\subsection{Hard Negative Mining and Ensemble AnInfoNCE}
\label{app:mitigation}
\paragraph{Hard Negative Mining.}
    In the previous section, we observed that high concentration parameters require higher batch sizes for successful learning.
    \Gls{hn} mining can help counter highly concentrated conditionals by increasing the sample density close to the anchors.
    First, we demonstrate in our synthetic setting for $d=20$ that with \gls{hn} mining, the optimum for the loss in terms of $\hat{\gls{diagtemp}}$ is indeed at the predicted value of $\gls{diagtemp} = \gls{diagtemp}^+\!\!-\!\gls{diagtemp}^-\!$ (\cref{fig:hn_opt_gamma}A).
    We also finetune an encoder using \ourloss by adding three hard negatives for each sample. 
    We observe better linear identifiability scores compared to continued regular training, supporting our \cref{cor:hn_ident}, (\cref{fig:hn_opt_gamma}B). 

    \begin{figure}[t]
        \centering
            \includegraphics[width=0.7\linewidth]{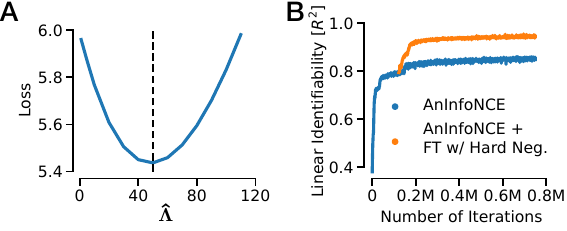}
            \caption{\textbf{Hard Negative Mining Improves Identifiability.} \textbf{A:} For $\gls{diagtemp}^+=150$ and $\gls{diagtemp}^-=100$, we find a loss optimum at $\gls{diagtemp} = \gls{diagtemp}^+-\gls{diagtemp}^-=50$ as predicted by \cref{cor:hn_ident}; \textbf{B:} Finetuning (FT) an encoder on a concatenation of hard negative samples and regular negative samples improves the identifiability score. 
            \label{fig:hn_opt_gamma}}
    \end{figure}
  
\paragraph{Training With Ensemble \ourloss.}
    To test the benefit of ensembling \ourloss for training, we define two \glspl{dgp} with $\gls{latentdim}=20$.
    The \glspl{dgp} share a uniform marginal distribution and differ in their respective conditional distributions.
    To model latents that cannot be changed by either DGP, such as object class or shape, we set a high concentration parameter of 400 for three latents shared between both DGPs.
    For the remaining seventeen latents, we choose two $\gls{diagtemp}$ values of 15 and 250, modeling strongly and weakly varying latents and alternate them between the DGPs:
    \begin{align*}
        \gls{diagtemp}_1 &= \diag{400, 400, 400, 15, \ldots, 15, 250, \ldots, 250} \\
        \gls{diagtemp}_2 &= \diag{400, 400, 400, 250, \ldots, 250, 15, \ldots, 15}.
    \end{align*}

    We sample data with both DGPs and use two losses with two learnable concentration parameters $\gls{diagtempinfer}_1,\gls{diagtempinfer}_2$ to model both DGPs. We train the encoder on the sum of both losses.
    We see improved linear identifiability scores when training the encoder on the loss ensemble compared to InfoNCE or \ourloss (see also~\cref{fig:ensemble_lambdas}A): 
    \begin{center}
    \small
        \setlength{\tabcolsep}{4pt}
        \begin{tabular}{l c c c}
        Loss &  InfoNCE   & \ourloss  & Ens. \ourloss \\
             \midrule
      Lin. ident. $[R^2]$  & 0.48 & 0.93 & 0.98 \\ 
        \end{tabular}
    \end{center} 
    %

    Further, we observe a specialization in the learned $\hat{\Lambda}$ values (\cref{fig:ensemble_lambdas}B+C), where same colors indicate the same $\hat{\Lambda}$ dimension): We observe an anti-correlation between $\hat{\Lambda}_1$ and $\hat{\Lambda}_2$.
    Optimizing an ensemble loss makes it more likely that for each latent dimension, there exists an augmentation making the dimensions more content-like with a higher \gls{diagtemp}.
    
    \begin{figure}[t]
        \centering
            \includegraphics[width=0.9\linewidth]{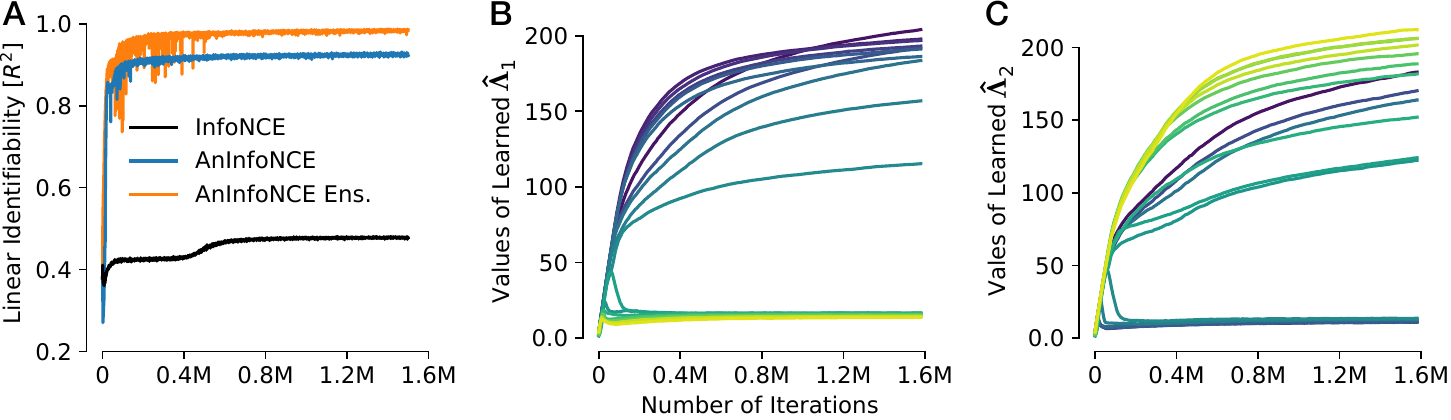}
            \caption{\textbf{Ensemble \ourloss Improves Upon InfoNCE and \ourloss:}
            We observe higher linear identifiability scores when training with Ensemble \ourloss.
            When examining the learned $\hat{\Lambda}$ values, we see a specialization in the learned $\hat{\Lambda}$ values: The color scheme is chosen such that the same color indicates the same dimension, and we find that the learned $\hat{\Lambda}$-values for the two DGPs are anti-correlated with respect to each other. \label{fig:ensemble_lambdas}}
    \end{figure}

\subsection{Comparison to TriCL \citep{zhang2023tricontrastive}}
\label{sec:tricl}

We first discuss the conceptual differences, theoretical results and assumptions  betwen TriCL and \ourloss. TriCL extends and theoretically analyzes the spectral contrastive loss formulation (their extension to InfoNCE does not have a proof). Our paper extends InfoNCE and provides an identifiability result in that case. Note that the extensions are also different: We weigh the differences of latent coordinates instead of the latents themselves.
The TriCL paper provides a result on encoder output equivalence (similar to \citet{roeder_linear_2020}, meaning that all encoders optimizing the TriCL loss are linearly related, or, more precisely $f_1(x)=Af_2(x)$. However, the encoder’s output is not related to ground-truth latents in any way. Our paper provides a result on the equivalence of the output with the ground-truth latents, meaning that $f \circ g (z) = Oz$. On one hand, our type of result subsumes the TriCL type of result. Moreover, our result is well-suited for the qualitative assessment of inferred latent features, which is not possible with the latter.
The theoretical result from TriCL requires that an additional decorrelation regularizer completely vanishes, whereas our method does not have any such term.

We also show an experimental comparison between TriCL and \ourloss below. First, we noticed a small discrepancy between the paper’s definition of TriCL and its experimental implementation in how the loss is computed. Based on the used implementation, the scale parameter is not applied to all terms in the sum but only to half of them. We are not sure how how this discrepancy affects the experiments.

We compare the evolution of linear identifiability scores when training with TriCL or \ourloss in our synthetic experiment in Fig.~\ref{fig:tricl}. We set the latent dimensionality to 20 and test two cases for the positive conditional distribution. (1) Uniform $\Lambda$ set to 100 (top row); (2) Dual partitioned $\Lambda$, such that half of the dimensions have a low $\Lambda$ of 15 (”style”-like dimensions) and the other half high $\Lambda$ of 250 (”content”-like dimensions) (bottom row). In both cases, we observe that \ourloss behaves much more stably compared to TriCL and reaches higher linear identifiability scores, especially when there is an anisotropic conditional.

   \begin{figure}[t]
        \centering
            \includegraphics[width=0.8\linewidth]{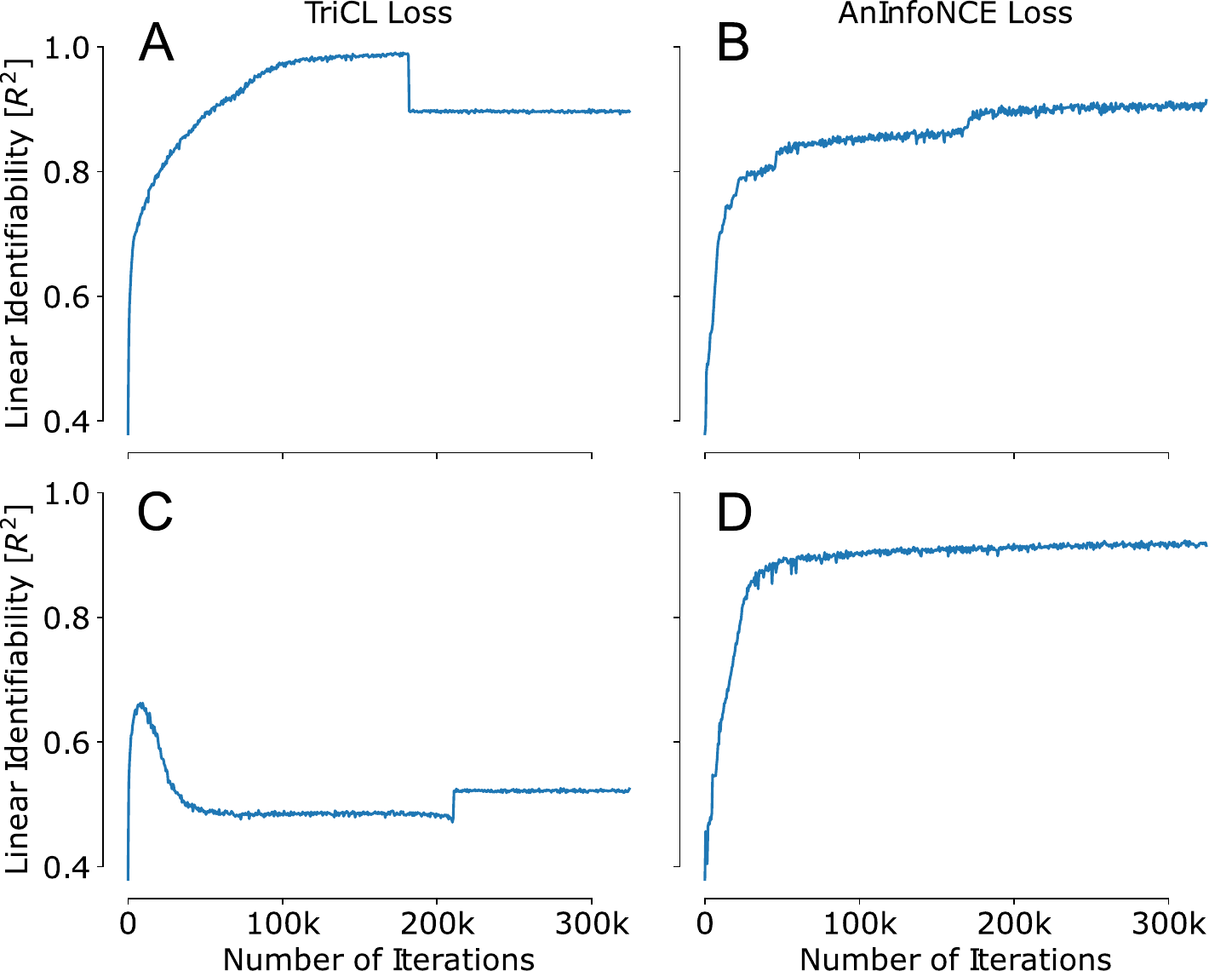}
            \caption{\textbf{Synthetic Experiments for a Latent Dimensionality of d=20 When Training With TriCL or AnInfoNCE.} Top row: Uniform $\Lambda$ of 100. Bottom row: Dual partitioned $\Lambda$, such that half of the dimensions are varied according to a low $\Lambda$ of 15 and the other half is varied with a high $\Lambda$ of 250. In both cases, we observe that AnInfoNCE behaves much more stably compared to TriCL and reaches higher linear identifiability scores, especially when there is an anisotropic positive conditional. \label{fig:tricl}}
    \end{figure}

\subsection{Training Details for VAE Training in MNIST} 
\label{app:mnist}

    \begin{figure}[t]
        \centering
            \includegraphics[width=0.8\linewidth]{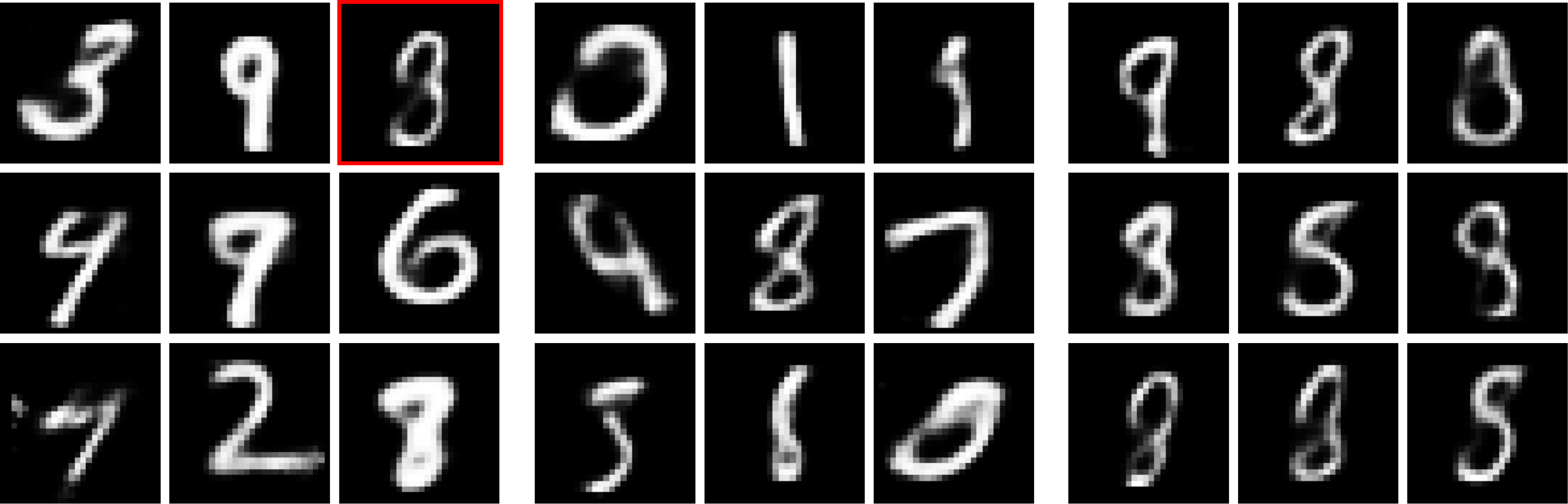}
            \caption{\textbf{Samples Generated by the MNIST-VAE}. \textbf{Left}: Samples from the marginal distribution, used as anchor samples for the contrastive objective. \textbf{Middle \& Right}: For the highlighted sample, we show potential positive samples from the conditional distribution for two different values of $\gls{diagtemp}$. A value of $5$ leads to too much variation (middle) and leads to obvious class changes in the conditional samples, whereas a value of $50$ leads to reasonable variation. We note that training an encoder on images which were sampled according to a DGP with either ground-truth $\Lambda$-value leads to perfect linear disentanglement scores. In other words, a low concentration parameter of $\Lambda=5$ which changes the class of the conditional view does not pose an issue.  \label{fig:mnist_samples}}
    \end{figure}

    The VAE consists of three fully connected layers with Leaky ReLU activations and is trained on MNIST for 50 epochs. The latent dimensionality \gls{latentdim} of the VAE is set to 10.
    We use the trained VAE to generate training images in the following way: We first sample a latent vector $z$ from the marginal distribution and then sample a vector conditioned on $z$. Since both vectors are normalized to the sphere surface, we multiply them with $\sqrt{d}$ to project them to a Gaussian vector. Finally, we use the VAE decoder to generate images which we train the encoder with the \ourloss loss on.
    
    To provide an intuition for the quality of samples generated by the VAE, we show images generated from random samples from the latent space in Fig.~\ref{fig:mnist_samples} (left). For the highlighted sample in the top-right corner, we also show samples from the conditional distribution for two different $\Lambda$ values: $\Lambda=5$ (middle) and $\Lambda=50$ (right).

\subsection{Training details and additional experiments for C-3DIdent}
\label{app:cident}
The Causal 3D-Ident dataset \citep{von_kugelgen_self-supervised_2021} is an extension of the 3D-Ident dataset \citep{zimmermann_contrastive_2021} to six more classes. The training split has 252000 images and the test split has 25200 images.
We use the original code base from \citep{von_kugelgen_self-supervised_2021} with the original training hyperparameters.
The latent dimensionality is ten and the latent space of C-3DIdent is a hypercube. Following \citet{von_kugelgen_self-supervised_2021}, we use cosine similarity as our similarity measure which restricts the inferred latents to a hypersphere. To enable our encoder learning all hypercube latents, we set the latent dimensionality to eleven.
We train all models for 100K gradient steps.

\section{Compute requirements}
\label{app:compute}

\paragraph{Synthetic experiments.}
For our synthetic experiments, as described in Sections~\ref{subsec:lvm}, \ref{sec:vae_mnist} as well as in Appendix~\ref{app:analysis}, we used a single NVIDIA GeForce RTX 2080 Ti with 11GB of RAM for each experiment. We used 8 CPUs and 8 workers per experiment. The convergence time for each experiment varied, e.g. higher batch size or a lower concentration parameter $\hat{\gls{diagtemp}}$ generally imply faster convergence: We show the relationship between the batch size and the concentration parameter in  Fig.~\ref{fig:analysis_bsz_iter}. We set the run time for synthetic experiments to ten hours to make sure that all runs converge; the main experiments shown in Figs.~\ref{fig:toymodel} and \ref{fig:mnist_scores_acc} train within one hour to convergence.

\paragraph{Real-world experiments.}
On CIFAR10, we trained the ResNet18 model on a single NVIDIA A100 GPU with 40GB RAM and 8 CPUs (with 8 workers). The training took about seven hours per run. The linear readout evaluation as well as the binary augmentations readout took about 30 minutes on the same compute setup.

On ImageNet, we trained a ResNet50 on 8 NVIDIA A100 GPUs with 40GB RAM and 64 CPUs. We set the number of workers to 16. The training took about 15 hours on this setup. The linear readout evaluation as well as the binary augmentations readout took about 3 hours using one GPU.

On C-3DIdent, we trained a ResNet18 model on a single NVIDIA A100 GPU with 40GB RAM and 32 CPUs (with 32 workers). The training took about fourteen hours per run. 

\paragraph{Compute costs per experiment, main paper.}
While we used different GPUs between our synthetic and real-world experiments, we will use a metric of generic ``GPU-hours'' for both cases. As described below, both GPU types require about the same amount of power.
\begin{enumerate}
    \item Synthetic data, Fig.~\ref{fig:toymodel}: Experiments showing the influence of linear identifiability scores when varying \gls{diagtemp}. We show results for 6 values of \gls{diagtemp} for InfoNCE and AnInfoNCE. The GPU run time for this experiment is $6\cdot 2 \cdot 10h=120h$.
    \item Synthetic data, Fig.~\ref{fig:mnist_scores_acc}: Experiments showing how AnInfoNCE behaves in a slightly more difficult experimental setting on MNIST. We show results for 3 values of \gls{diagtemp} for InfoNCE and AnInfoNCE in Fig~\ref{fig:mnist_scores_acc} A+B, and for 2 values of \gls{diagtemp} when training with AnInfoNCE in Fig~\ref{fig:mnist_scores_acc} C+D. The GPU run time for this experiment is $3\cdot 2 \cdot 10h + 2\cdot 10h=80h$.
    \item Real-world data, Table~\ref{tab:augmentations_readout}: We show a trade-off between augmentations readout and downstream accuracy on CIFAR10 and ImageNet. We show results for training with InfoNCE and AnInfoNCE both on CIFAR10 and ImageNet. On CIFAR10, we additionally show results for training with AnInfoNCE without $\ell_2$ normalization of the output before calculating the loss. We also conduct the linear readout evaluation on both the backbone and the post-projection features. The compute break-down is as follows:
    \begin{enumerate}
        \item CIFAR10:  Training: $3\times7h=21h$, Linear readout: $3\cdot6\cdot0.5h\cdot 2=18h$. Total: $39h$
        \item ImageNet: Training: $2\times15h\cdot 8$~GPUs$=240h$, Linear readout: $2\cdot3h\cdot6\cdot2=72h$. Total: $312h$
    \end{enumerate}
    The total compute for the real-world experiments is then 351 GPU hours.
    \item MNIST, inset Table in Sec.\ref{sec:analysis}; showing the accuracy-identifiability trade-off. The run time for this experiment has been: $2\cdot 10h$=20$h$.
    \item C-3DIdent, \cref{tab:3dident}; we compare InfoNCE with \ourloss on C-3DIdent and investigate the accuracy and identifiability trade-off. We tested 2 view generation strategies (via data augmentation and via ground-truth sampling) and ran 3 random seeds for two different loss functions. The total run time for this experiment has been: $2\cdot 3\cdot 2\cdot 14h=168h.$
\end{enumerate}

\paragraph{Compute costs per experiment, Appendix.}
\begin{enumerate}
    \item Synthetic data, Fig.~\ref{fig:analysis_1_2}. We show ablation experiments for varying the batch size, the latent dimensionality and the concentration. We show results for 32 different parameter combinations. The GPU run time for this experiment is $32 \cdot 10h=320h$.
    \item Synthetic data, Fig.~\ref{fig:analysis_bsz_iter}. We analyze the interplay between higher dimensional latent spaces and batch size in more detail. We ran 3 random seeds in this experiment, analyzed 9 different batch sizes across 5 different latent dimensionalities. The total run time for this experiment is $3\cdot5\cdot9\cdot10h=1350h$.
    \item Synthetic data, Fig.~\ref{fig:identifiability_batch_size_iterations}. We utilize the previous runs from computing results for Fig.~\ref{fig:analysis_1_2}; this Figure does not incur additional compute costs.
    \item Hard-negative mining, Fig.~\ref{fig:hn_opt_gamma}. We show that hard-negative mining can improve identifiability. We finetune one model trained with regular InfoNCE on AnInfoNCE with hard negative examples. The run time for this experiment is $3h$.
    \item Synthetic data, Fig.~\ref{fig:ensemble_lambdas}. We show that ensembling can improve upon InfoNCE and AnInfoNCE. The run time for this experiment is $3\cdot10h=30h$.
    \item Synthetic data, Fig.~\ref{fig:tricl}. We show that \ourloss is more stable and reaches higher linear identifiability scores compared to the TriCL loss. The run times for this experiment is $2\cdot10h=20h$
\end{enumerate}

\paragraph{Total compute estimation.}
Given the previous paragraphs, the training time estimation is $2774$~GPU hours. The maximum power consumption for the 2080Ti GPUs is 250W \citep{2080ti_power} and 300W for the A100 GPUs \citep{a100_power}. We assume an upper bound on power consumption of 300W per GPU. Then, the energy spent by the GPUs for the experiments shown in this paper is 832.2kWh. We estimate an additional factor of 20 in terms of energy spent during the development stage of this project. Therefore, the total estimated project cost in terms of energy is about 16.6MWh. This corresponds to 3.4t CO$_2$ emissions \citep{co2_calc} which is roughly equivalent to six single-person flights from London to New York \citep{flight_co2}.

\section{Software stack}
\label{app:software}
    We use different open source software packages for our experiments, most notably Docker \citep{10.5555/2600239.2600241}, Singularity \citep{singularity},
    scipy and numpy \citep{2020SciPy-NMeth},
    PyTorch \citep{paszke2017automatic},
    and
    torchvision \citep{10.1145/1873951.1874254}.



\section{Additional related works}

\subsection{An \gls{lvm} view of SSL}

\citet{arora_theoretical_2019} developed generalization bounds for downstream classification in contrastive learning, with one or more negative samples, assuming the presence of latent class variables and (for some results considering intra-class concentration) sub-Gaussian conditionals. Results also show that more negative samples can hurt performance when the negative samples have the same class (c.f. their Sec. 4.2).
\citet{zimmermann_contrastive_2021} connected SimCLR to nonlinear ICA and proves that the SimCLR loss is equivalent to the cross entropy between ground-truth and approximate conditionals. Based on this result, identifiability results are provided for latent spaces structured as hyperspheres or convex bodies - our theory extends the results of \citet{zimmermann_contrastive_2021}.
\citet{von_kugelgen_self-supervised_2021} introduced the content-style terminology and proved that content variables can be recovered with non-contrastive SSL under certain assumptions.
\citet{daunhawer_identifiability_2023} extended these results to the multi-modal case where some latent variables are modality-specific.
\citet{lyu_latent_2021} extended \citet{von_kugelgen_self-supervised_2021} to get guarantees on the identifiability of style variables as well, though they require invertible encoders, whereas \citet{von_kugelgen_self-supervised_2021} have results for the non-invertible case. However, \citet{lyu_latent_2021} admit different encoders across views.
\citet{lachapelle_synergies_2022} investigated the role of sparsity for identifiability in the multi-task scenario, i.e., when the representation is used to solve multiple downstream tasks. This setting is similar to that in \citet{dubois_improving_2022}, since the authors used downstream tasks that rely on all latent factors, i.e., for good performance all of them need to be identified. Inducing sparsity in the linear classification head yields an optimal and disentangled representation with identifiability guarantees up to permutation and scaling.
\citet{cui_aggnce_nodate}, building on \citet{zimmermann_contrastive_2021,von_kugelgen_self-supervised_2021}, introduced the AggNCE loss function and proves that with multiple views, the InfoNCE loss is identifiable up to an invertible matrix.
\citet{lee_predicting_2021} quantified how specific pretext tasks can guarantee learning a good representation, i.e., one which can solve the downstream task by only using a linear layer on top. The authors relied on approximate conditional independence between the input and the pretext target (conditioned on the downstream task label and the latents).
\citet{eastwood_self-supervised_2023} proposed a new SSL objective inspired from invariance-based methods that allow for the disentanglement of style variables alongside content. This paper can also be seen as a more rigorous treatment of \citet{xiao_what_2021}. However, their identifiability results differ from ours as they mostly rely on multiple sets of augmentations, and a different model and loss pair.
\citet{yao_multi-view_2023} unified and extended the results from \citet{von_kugelgen_self-supervised_2021,daunhawer_identifiability_2023} to the multi-view case with partial observability, showing that any shared information across arbitrary subsets of views and modalities can be learned up to a smooth bijection using contrastive learning.

\subsection{The role of data augmentations}

    The (empirical) study of different augmentation strategies relates to our work as the underlying data generating process (DGP) is affected by them and our work studies SSL from the DGP’s perspective. Within this category, we distinguish two subcategories:

\subsubsection{Augmentation strategies with a single latent space}

\citet{wu_mutual_2020} empirically evaluated collision-free augmentations (i.e., when different data points cannot have the same view---e.g., changing brightness is collision-free, but cropping can introduce collisions). They conclude that collision-free augmentations are not useful for the downstream task of classification on ImageNet (c.f. their Figs. 1 and 5) and CIFAR10 (their Fig. 5)---whereas the ones with collision can improve performance.
\citet{mansfield_random_2023} proposed a more flexible class of data augmentations based on Gaussian random fields and showed that it improves performance on ImageNet and iNaturalist, but their method is sensitive to hyperparameters (strong transformations can degrade the image).
\citet{bendidi_no_2023} extensively studied the effect of data augmentations in natural (biological) images w.r.t. downstream (classification) performance and clustering structure. They emphasize that the choice of data augmentation strategies is a form of weak supervision, and its design requires domain expertise to achieve better performance on a given task, since different augmentations can affect the classes differently.
\citet{wagner_importance_2022} introduced the GroupAugment augmentation strategy that searches over a very general space of both augmentations and sampling strategies. The authors also empirically study the role of data augmentation hyperparameters, highlighting that it dramatically impacts performance in SSL.
\citet{ericsson_why_2022} studied learning invariances in contrastive learning and their transfer from synthetic to real-world scenarios. The authors consider two augmentation categories: spatial augmentations change the objects' position, whereas appearance mostly affects color and texture. The authors empirically demonstrate that using augmentations from one of the above groups leads to increased invariance w.r.t. the same augmentation types---these only transfer to some extent from synthetic settings to the real world. However, there is a trade-off in all studied models, i.e., there is no reliable means to learn invariances to both augmentation types. Since different downstream tasks are shown to rely on different invariances, the authors advocate for a fusion of multiple representations, i.e., leveraging multiple representations that were trained by using different types of augmentations.

\subsubsection{Augmentation strategies with combined latent spaces}

\citet{xiao_what_2021} introduced LooC (Leave-one-out Contrastive Learning) using multiple embeddings, where each embedding is trained with a different (sub)set of augmentations. The authors argued and empirically demonstrated that by inducing different invariances in the different embeddings, the combined latent space outperforms all considered baselines on multiple downstream tasks. This setting corresponds to our ensemble approach, with the only difference that we share the encoder, thus, we only have a single latent space.
\citet{zhang_rethinking_2022} proposed a similar strategy to LooC with an add-one augmentation strategy (i.e., their augmentation subsets can be ordered, where the first includes two augmentations, the second the first two plus another one, etc.). Furthermore, they calculate the contrastive loss at different hierarchical levels in the siamese encoder---the authors motivate this step by conjecturing that this way invariances are imposed at different levels. Another key difference is that the embeddings of augmentation parameters are also included in the representation before the projection head (i.e., the loss has information about the data augmentation).

\subsection{The effect of normalization}

    \citet[Lem.~2]{tian_understanding_2022} shed light into why normalized representations are favorable: the author shows that the Frobenius norm of all weight matrices below an \ltwo- or layer normalization~\cite{ba2016layer} terms remains constant. The theory holds for nonlinear MLP with reversible activations, including ReLU; and, interestingly, provides a connection of why we could feasibly replace \ltwo-normalization with layer normalization.
    \citet{wen_mechanism_2022} explained dimensionality collapse for non-contrastive  SimSiam~\citep{chen_exploring_2020} with a two-layer nonlinear model and output normalization, and analyze the role of the projector. Empirically, when the projector is initialized to the identitiy matrix and only the off-diagonal entries are trainable, the authors obtain competitive representations. They also point out that batch normalization (by enforcing a nonzero variance) is less prone to collapse and conjecture that a complete collapse is more probable with \ltwo-normalization but do not prove the conjecture. Their results are in unison with ours since both works state that the content variables are learned first.

    \citet{ermolov_whitening_2021} proposed a non-contrastive SSL loss where an MSE objective is applied to whitened features and show empirical improvements regarding dimensionality collapse; furthermore, their empirical results suggest that using the InfoNCE loss with \ltwo-normalization and whitened features leads to divergence. Intuitively, there are some similarities between their normalization strategy and AnInfoNCE , though they are distinct as we normalize the latent factors before calculating the weighted MSE, whereas the authors whiten before normalization. Partially, this can explain why we do not have divergence in the case of \ltwo-normalization and weighting via the diagonal scaling matrix.

    \citet{halvagal_eigenspace_2022} proposed an isotropic loss to ``flatten out" the eigenvalue spectrum of the inferred latent covariance, which is remarkably similar to our strategy of using different temperature values.


\subsection{The role of negatives and hard negative sampling}\label{subsec:hn}
        
    \citet{arora_theoretical_2019} showed that more negative samples can hurt performance when the negative samples have the same class.
    \citet{chuang_debiased_2020} made the same observation, i.e.,  that sampling negative examples truly different (downstream) labels improves performance in a synthetic setting where labels are available.     
    They uncover a NPC in InfoNCE and propose to resolve it with debiasing, resulting in DCL that corrects for the sampling of same-label datapoints, even without knowledge of the true labels. They also bound the difference between the SimCLR and the debiased contrastive losses.
     \citet{robinson_contrastive_2021} extended \citet{chuang_debiased_2020} and propose a strategy to draw hard negative samples in an unsupervised way---using the data density and positive conditional distribution---for contrastive learning and analyze the resulting loss theoretically.
    The authors connect hard negative samples to adversarial samples and analyze the loss in the infinite sample limit in (c.f. their Thm.~4). They also bound the misclassification risk in their Thm.~5.
    \citet{liu_bayesian_2023} extended \citep{chuang_debiased_2020} into the Bayesian framework; the authors propose a hard negative sampling strategy with importance sampling.

    Several works tackle the problem of designing (hard) negative sampling strategies. \citet{wu_mutual_2020} generalized the negative sampling in InfoNCE, their method include more ``exotic" negative sampling schemes, e.g., from a ``ring" on a sphere.
    \citet{robinson_can_2021} proposed IFM as a solution that disturbs both positive and negative pairs, though in an implicit manner (it can be thought of as implicit hard negative sampling): it makes positive samples less, negative samples more similar (measured by cosine similarity) to the anchor point. This perturbation is done in the latent space  (c.f. Fig.~4 of their paper) and improves linear readout accuracy on multiple image data sets (Fig.~6; without the trade-offs induced by changing $\tau$) and also improves performance on medical data (Tab.~2). The authors also demonstrate that SimCLR combined with IFM improves robustness in the adversarial setting (
    Fig.~8).

    \citet{chuang_robust_2022} proposed RINCE, a symmetric drop-in replacement for InfoNCE, with a theoretical connection to robust symmetric losses in noisy binary classification. The loss (approximately) takes the $q^{th}$ exponent of the positive and negative loss terms, implementing a trade-off between exploration and exploitation. In the limit of $q=0$, it is the same as InfoNCE. Thus, the authors conclude, InfoNCE puts more weight on hard positives, whereas RINCE on easy positives. 
    RINCE can be thought as an implicit hard/easy positive mining scheme with an exponential weighting factor on uniformity.

\printglossaries

\end{document}